\theoremstyle{plain}
\newtheorem{theorem}{Theorem}
\newtheorem{remark}{Remark}
\newtheorem{lemma}{Lemma}
\newtheorem{definition}{Definition}
\begin{document}

\title{Multimodal Functional Maximum Correlation \\for Emotion Recognition}

\author{Deyang Zheng, Tianyi~Zhang*~\IEEEmembership{Senior Member,~IEEE,}
        Wenming Zheng~\IEEEmembership{Senior Member,~IEEE,}%
        \\
        and Shujian Yu*~\IEEEmembership{Member,~IEEE}%

\thanks{Tianyi Zhang and Wenming Zheng are with the Key Laboratory of Child Development and Learning Science (Ministry of Education), School of Biological Sciences and Medical Engineering, Southeast University, Nanjing, China. Emails: \{t.zhang, wenming\_zheng\}@seu.edu.cn}%

\thanks{D. Zheng is with the Department of Artificial Intelligence, Westlake University, Hangzhou, China (e-mail: zhengdeyang@westlake.edu.cn).}%

\thanks{S. Yu is with the Department of Artificial Intelligence, Vrije Universiteit Amsterdam, Amsterdam, The Netherlands. (e-mail: s.yu3@vu.nl).}%

\thanks{*Corresponding authors.}
}
\markboth{IEEE TRANSACTIONS ON AFFECTIVE COMPUTING}%
{}
% Remember, if you use this you must call \IEEEpubidadjcol in the second
% column for its text to clear the IEEEpubid mark.

\maketitle

\begin{abstract}
% \st{Multimodal contrastive self-supervised learning (SSL) has shown promise in aligning heterogeneous physiological signals for emotion recognition.}
% Physiological-signal-based emotion recognition benefits from self-supervised learning (SSL) due to scarce annotations, strong inter-subject variability, and the requirement for cross-subject and cross-session generalization. However, existing SSL methods are typically built on pairwise objectives and are therefore limited to only two modalities. Both theoretical analyses and empirical studies have shown that pairwise objectives fail to capture higher-order interactions when three or more modalities are involved. 
Emotional states manifest as coordinated and heterogeneous physiological responses across central and autonomic systems, posing a fundamental challenge for multimodal representation learning in affective computing.
Learning such joint dynamics is further complicated by scarce and subjective affective annotations, motivating the use of self-supervised learning (SSL).
However, most existing SSL approaches rely on pairwise alignment objectives, which are insufficient to characterize the joint dependencies among more than two modalities and fail to capture higher-order interactions arising from coordinated brain–autonomic responses.
To move beyond this limitation, we propose Multimodal Functional Maximum Correlation (MFMC), a principled SSL framework that maximizes higher-order multimodal dependence via a Dual Total Correlation (DTC) objective.
By deriving a tight sandwich bound and optimizing it with a functional maximum correlation analysis (FMCA)-based trace surrogate, MFMC directly captures joint multimodal interactions without resorting to pairwise contrastive losses.
%, yielding a new multimodal SSL framework that requires neither positive/negative pairs nor CLIP-style losses.
% \st{The key contribution of our approach is the use of the recently proposed Functional Maximal Correlation Algorithm (FMCA) for dependence estimation, which we extend in our Multimodal Functional Maximal Correlation (MFMC) framework by introducing a new trace-based objective that circumvents eigenvalue decomposition and enhances both numerical stability and scalability. }
%The use of specific algorithm is not the contribution. 
%We evaluate the proposed method using tri-modal physiological signals, including EEG, EOG, and peripheral physiological measures (e.g., EDA, ECG, and skin temperature), a fundamental task in affective computing where leveraging unlabeled data and enabling cross-subject generalization are crucial. 
Experiments on three public affective computing benchmarks demonstrate that MFMC consistently achieves state-of-the-art or competitive performance under both subject-dependent and subject-independent protocols, highlighting its robustness to inter-subject variability. In particular, MFMC achieves substantial gains on CEAP-360VR, improving subject-dependent accuracy from 78.9\% to 86.8\% and subject-independent accuracy from 27.5\% to 33.1\% using the EDA signal alone, and remains highly competitive within 0.8 percentage points of the best-performing method on the most challenging EEG subject-independent split of MAHNOB-HCI. Our code is available at \url{https://github.com/DY9910/MFMC}.

%Experiments on three public datasets show that our MFMC algorithm consistently matches or surpasses state-of-the-art multimodal contrastive SSL baselines under both subject-dependent and subject-independent protocols. With electrodermal activity (EDA), it improves subject-dependent accuracy from 78.9\% to 86.8\% and subject-independent accuracy from 27.5\% to 33.1\% on CEAP-360VR dataset. It also achieves 95.3\% versus 88.2\% in the subject-dependent setting and remains within 0.8 percent points of Symile on the most challenging EEG subject-independent split (44.2\% vs.\ 45.0\%) for 4-class valence/arousal classification on MAHNOB-HCI dataset.

%Across the DEAP, CEAP-360VR, and MAHNOB-HCI datasets, and under both subject-dependent and subject-independent protocols, MFMC consistently matches or surpasses state-of-the-art multimodal contrastive SSL baselines: 

\end{abstract}
 
\begin{IEEEkeywords}
Multimodal emotion recognition, self-supervised learning, physiological signals, functional maximum correlation analysis, dual total correlation
\end{IEEEkeywords}

\section{Introduction}
% 1 Background and probblem context
\IEEEPARstart{R}{ecognizing} emotions is a critical yet challenging task in affective brain–computer interfaces, enabling systems to perceive and adapt to users’ internal states for neural rehabilitation, mental health, and more sympathetic human–machine interaction \cite{song2018eeg,garcia2017efficacy,gumuslu2020emotion,liu2025divergent}. Emotion-recognition methods are commonly grouped into two categories. 
The first relies on non-physiological cues such as facial images \cite{anderson2006realtime,huang2016spontaneous,huang2015microexpr,xiaohua2017discriminative,liu2016mdmof}, body gestures \cite{yan2014integrating}, and speech/voice \cite{ang2002prosody}. The second leverages physiological signals, including electroencephalography (EEG) \cite{zheng2017gscca}, electromyography (EMG) \cite{cheng2008emg}, and electrocardiogram (ECG) \cite{agrafioti2012ecg}.
Since facial and vocal behavior can be consciously modulated, whereas physiological responses are largely involuntary, physiology provides a more reliable basis for affect inference \cite{lopez2023hypercomplex}. 
Moreover, physiological signals capture both synchronous and asynchronous dynamics across multiple layers of the nervous system, including the central, peripheral, and autonomic subsystems, which collectively encode and regulate affective states. Jointly modeling multiple modalities (e.g., EEG together with peripheral measures such as ECG, EDA, or BVP) can therefore provide complementary views of the same emotional episode.
As a result, effective affective computing critically depends on jointly modeling multiple physiological modalities in order to mitigate modality-specific noise and capture the complex, coordinated brain–body dynamics that underlie emotional experience.

%Moreover, physiological signals capture both synchronous and asynchronous dynamics across multiple layers of the nervous system, including the central, peripheral, somatic, and autonomic subsystems, which collectively encode and regulate affective states. Jointly modeling multiple modalities (e.g., EEG with peripheral measures such as ECG, EDA, or BVP) can provide complementary perspectives on the same emotional episode. Thus, it can help researcher disambiguate noise in individual channels and achieve more completely characterize the underlying brain body dynamics of affect.

\IEEEpubidadjcol
Traditional physiological emotion recognition systems are predominantly supervised and therefore rely on large collections of labeled affective data. In practice, ground truth labels are typically obtained through self-report instruments such as the Self-Assessment Manikin or dimensional scales based on the circumplex and PAD frameworks~\cite{Bradley1994SAM,Russell1980Circumplex,Posner2005Circumplex,Mehrabian1996PAD}. These ratings are labor-intensive to collect, require carefully controlled experimental protocols, and remain inherently subjective and context-dependent~\cite{Calvo2010TACReview,deap,ceap,MirandaCorrea2018AMIGOS}. Participants may further modulate or reinterpret their emotional reports across trials, sessions, and studies, introducing label noise and reducing the comparability of datasets. As a result, supervised models are often trained on relatively small and idiosyncratic corpora with limited standardization, which leads to poor generalization across datasets, recording setups, and populations~\cite{Saganowski2022TACWear,Apicella2024Neurocomputing,Ma2024PeerJCS}.

%5
A further limitation lies in how existing models exploit multimodal physiological dynamics. Although emotional episodes can elicit coordinated responses across central and peripheral systems~\cite{Kreibig2010BiolPsych}, many approaches still rely on unimodal encoders~\cite{Liu2024eeg} or simple feature-level fusion strategies, such as concatenation~\cite{ramadan2024mm} or shallow late fusion~\cite{he2020advances,Saganowski2022TACWear}. While recent studies have explored more sophisticated multimodal architectures using attention mechanisms~\cite{lopez2023hypercomplex} or hypercomplex representations~\cite{lopez2024,zhang2019}, these methods often operate on pre-engineered features and primarily model pairwise correlations~\cite{yang2021b}. As a result, they fail to explicitly capture the synchronous and asynchronous dependencies between central neural activity and peripheral autonomic responses that characterize affective processing~\cite{pillalamarri2025multimodal}. Consequently, much of the higher-order structure inherent in multichannel physiological data remains underutilized.

These challenges collectively motivate learning frameworks that reduce the reliance on explicit labels and can directly capture the intrinsic relationships among multiple heterogeneous modalities. Contrastive self-supervised learning (SSL) has therefore emerged as a promising direction for physiological representation learning. By treating different views of the same physiological event as positives, contrastive objectives encourage invariance to nuisance factors such as subject identity and session-specific noise~\cite{zbontar2021barlow,bardes2022vicreg}. In multimodal settings, contrasting matched and mismatched central-peripheral segments further promotes cross-modal alignment~\cite{cui2025physiosync}.

% 9 Limitations of Existing SSL in Physiological Computing

Despite these advances, most existing SSL frameworks for physiological computing remain fundamentally limited by their reliance on pairwise contrastive formulations. General contrastive methods such as SimCLR~\cite{chen2020simple}, CPC~\cite{oord2018representation}, and CLIP~\cite{radford2021learning}, as well as their multimodal extensions including VATT~\cite{akbari2021vatt}, ImageBind~\cite{girdhar2023imagebind}, Symile~\cite{saporta2024contrasting}, and Gramian-based models~\cite{cicchetti2025gramian}, were primarily developed for vision-language settings and typically assume dual-modality alignment. When applied to physiological signals, these frameworks are commonly instantiated as pairwise contrastive objectives between two modalities or between original and augmented views of a single modality. As a consequence, the learned representations are biased toward lower-order dependencies (e.g., between EEG and ECG segments), while overlooking both the intrinsic multichannel structure within each modality and the higher-order interactions that emerge when three or more physiological modalities jointly encode affective dynamics~\cite{del2023applications,wang2023self,yang2023self,zhang2024self}.

Recent physiological SSL approaches, such as PhysioSync~\cite{cui2025physiosync} and cross-modal ECG--EEG alignment~\cite{wu2025crossmodal}, demonstrate that contrastive and generative self-supervision can substantially improve emotion recognition performance. However, these methods either operate strictly in a pairwise setting or treat each modality as a single aggregated unit, without explicitly modeling how informative patterns are distributed across heterogeneous channels and modalities over time. 
Consequently, current SSL pipelines remain poorly aligned with the structured, multilevel dependence that characterizes multimodal physiological responses to emotion, leaving substantial room for approaches that can directly capture higher-order correlations across modalities.

Motivated by this observation, we propose a principled multimodal SSL framework that goes beyond pairwise alignment and explicitly maximizes the total dependence among multiple physiological modalities. Our approach does not rely on positive-negative sample construction or handcrafted data augmentations, and is designed to capture higher-order interactions that emerge when emotional stimuli jointly modulate central and peripheral physiological systems.

To summarize, our main contributions include:
\begin{itemize}
\item We introduce the first SSL framework for physiological emotion recognition that explicitly models higher-order multimodal dependence beyond pairwise contrastive objectives. By grounding multimodal alignment in dual total correlation (DTC)~\cite{sun1975linear,yu2021measuring}, our framework naturally generalizes from tri-modal to arbitrarily many physiological modalities.
\item We derive a principled and tractable optimization objective based on functional maximum correlation analysis (FMCA)~\cite{hu2022normalized}, which enables stable estimation of joint multimodal dependence and avoids the numerical instability associated with eigenvalue decomposition and lower-bound-based contrastive objectives.
\item Extensive experiments on three public benchmarks demonstrate that our approach consistently achieves state-of-the-art or competitive performance under both subject-dependent and subject-independent protocols, highlighting its robustness to inter-subject variability and its effectiveness in modeling complex physiological dynamics.
\end{itemize}

\section{Related Work}
In this section, we first provide a brief review of existing supervised learning and contrastive SSL approaches for emotion recognition. We then discuss prior efforts in the vision and language domains that attempt to extend bi-modal contrastive SSL to multiple modalities, although none of these have yet been evaluated on physiological signals.

% Finally, we introduce the FMCA, which establishes the mathematical foundation for our proposed framework.
%2.1
\subsection{EEG Feature Extraction and Supervised Deep Learning for Emotion Recognition}

%EEG is the most widely used modality for emotion recognition~\cite{wang2023self,zhang2024self}. EEG-based emotion recognition methods typically contain two key components: an encoder and a classifier. The encoder aims to extract discriminative features from the EEG signals and the classifier subsequently classify the emotional states. EEG features for emotion recognition are typically divided into two categories: time-domain and frequency-domain features\cite{song2018eeg}.Time-domain features such as Hjorth parameters (activity, mobility, complexity) \cite{Hjorth1970}, fractal dimension \cite{Liu2013Fractal}, and higher-order crossings (HOC) \cite{Petrantonakis2010HigherOrder} characterize the temporal dynamics of EEG signals.In contrast, frequency-domain features focus on the signal's frequency components, with common methods decomposing the EEG signal into different frequency bands (e.g., delta, theta, alpha, beta, gamma) and extracting features like differential entropy (DE) \cite{Shi2013DE}, power spectral density (PSD) \cite{Li2009GammaBand}, differential asymmetry (DASM) \cite{Petrantonakis2010HigherOrder}, rational asymmetry (RASM) \cite{Davidson2004FrontalAsymmetry}, and differential caudality (DCAU) \cite{Shi2013DE}. 

EEG is the most widely used modality for physiological emotion recognition due to its ability to capture neural dynamics associated with affective processing~\cite{wang2023self,zhang2024self}. Traditional EEG-based emotion recognition pipelines typically rely on handcrafted time-domain~\cite{Hjorth1970,Petrantonakis2010HigherOrder} or frequency-domain~\cite{Shi2013DE,Li2009GammaBand,Davidson2004FrontalAsymmetry} features combined with supervised classifiers~\cite{song2018eeg}. While effective in controlled settings, such approaches depend heavily on expert-designed features and often struggle to generalize across subjects, sessions, and datasets.

Building on handcrafted features, early physiological emotion recognition pipelines typically employ classifiers such as support vector machines, $k$-nearest neighbors, or shallow neural networks~\cite{Calvo2010TACReview,he2020advances}. More recent studies have shifted toward deep representation learning, using convolutional, recurrent, or graph-based encoders to learn features end-to-end from raw or lightly preprocessed signals~\cite{keelawat2021,iwana2021,Liu2024eeg,cheng2024emotion}. In multimodal settings, these architectures have been extended to incorporate peripheral signals (e.g., EDA, BVP, ECG, EOG) through feature concatenation, attention-based fusion, or hypercomplex representations~\cite{lopez2023hypercomplex,lopez2024,pillalamarri2025multimodal}. While these approaches improve flexibility and performance, they typically focus on pairwise fusion mechanisms and do not explicitly model higher-order dependencies across multiple physiological modalities.

% 2.2
\subsection{Contrastive SSL for Physiological Signals}\label{sec:2.2}
Self-supervised learning (SSL) alleviates the need for explicit labels by leveraging surrogate objectives that exploit the intrinsic structure in the data~\cite{jaiswal2020survey}. % NEW
For time series and physiological signals, contrastive SSL has become a dominant paradigm~\cite{del2023applications,wang2023self,yang2023self,zhang2024self} enabling applications such as fMRI-based mental-disorder diagnosis~\cite{gryshchuk2025contrastive}, EEG-based sleep staging~\cite{yang2023self}, and affect recognition~\cite{del2023applications,wang2023self}. % EDITED (expanded unimodal context)

Current approaches often follow established SSL pipelines developed for vision tasks, such as SimCLR~\cite{chen2020simple} and VICReg~\cite{bardes2022vicreg}, and optimize contrastive losses like the InfoNCE~\cite{oord2018representation}, defined as:
\begin{equation}
\mathcal{L}_{\text{InfoNCE}} = -\log \frac{\exp(\text{sim}(z_i, z_j)/\tau)}{\sum_{k=1}^{N} \exp(\text{sim}(z_i, z_k)/\tau)},
\end{equation}
where $z_i$ and $z_j$ are positive sample embeddings, $z_k$ denotes embeddings of negative samples in the batch of size $N$, $\text{sim}(\cdot, \cdot)$ is a similarity function (e.g., cosine similarity), and $\tau$ is a temperature parameter.

%%% details of each Uni-modal method and Bi-modal, then discuss current SSL limited to Bi-modal, need ganeralize to 3 or more:

Unimodal contrastive SSL on physiological time series typically instantiates one of three families of collapse-avoidance principles: (i) contrastive objectives with negatives (e.g., SimCLR), which maximize agreement between two augmented views of the same sample \cite{chen2020simple}; (ii) redundancy-reduction via cross-correlation matching (Barlow Twins) \cite{zbontar2021barlow}; and (iii) variance–invariance–covariance regularization without negatives (VICReg) \cite{bardes2022vicreg}. In EEG and related biosignals, these objectives are adapted with modality-aware augmentations or pretext design to respect temporal and multichannel structure, yielding strong representations for downstream tasks such as sleep staging and affect recognition \cite{del2023applications,yang2023self,wang2023self}. For instance, GANSER couples adversarial augmentation with self-supervision to improve EEG emotion recognition \cite{zhang2022ganser}, while VICReg-style training has been shown effective for EMG representation learning \cite{raghu2025self}.

In physiological emotion recognition, contrastive SSL is particularly appealing because multimodal recordings naturally provide intrinsic supervision signals~\cite{jaiswal2020survey}. Temporal correspondences within a modality and synchronous correspondences across modalities (e.g., EEG-ECG or EEG-EDA) can be leveraged to define positive pairs without requiring explicit affect labels~\cite{chen2020simple,oord2018representation,radford2021learning}. 

%By exploiting these relationships as training signals, contrastive SSL alleviates the labeling burden and subjectivity associated with self-reported affect annotations.

Beyond label efficiency, SSL enables representation learning directly from raw or lightly processed physiological signals, reducing reliance on handcrafted feature engineering~\cite{del2023applications,wang2023self,yang2023self,zhang2024self}. Pretraining on large-scale unlabeled recordings encourages the learned representations to capture structures that are stable across subjects, sessions, and acquisition conditions, thereby improving robustness to noise and inter-subject variability~\cite{Ma2024PeerJCS,Apicella2024Neurocomputing}. Moreover, contrastive objectives can promote invariance to nuisance factors while encouraging alignment between central and peripheral modalities, making SSL a natural fit for modeling multimodal physiological dynamics during emotional processing~\cite{zbontar2021barlow,bardes2022vicreg,cui2025physiosync}.

Recent studies in physiological emotion recognition provide concrete evidence of this potential: Wu et al.~\cite{wu2025crossmodal} align ECG with EEG derived features using sequence and patch level InfoNCE and report binary arousal/valence accuracies of 0.892/0.879 on DREAMER~\cite{dreamer} and 0.849/0.834 on AMIGOS~\cite{MirandaCorrea2018AMIGOS}, exemplifying a typical bi-modal, pairwise setup. Similarly, Zhang et al. propose GANSER~\cite{zhang2023ganser}, a GAN-based self-supervised data augmentation framework for EEG-based emotion recognition, which attains state-of-the-art performance of 93.52\%/94.21\% accuracy on the binary valence/arousal classification tasks of the DEAP dataset~\cite{deap}. These approaches demonstrate that contrastive and generative self-supervision can exploit physiological recordings as a rich supervision source and substantially reduce annotation requirements. However, they still focus on pairwise modality alignment or single-modality augmentation and do not explicitly model the intrinsic multichannel dependencies among heterogeneous signals.

% 2.3 
\subsection{Multimodal Contrastive Learning Beyond Two Modalities}
Multimodal contrastive learning has achieved remarkable success in vision-language domains. For a batch of paired samples $(x_i, y_i)$ from two modalities, where $x_i$ and $y_i$ are encoded, respectively, by networks $f_\theta$ and $g_\phi$, methods like CLIP~\cite{radford2021learning} aim to maximize similarity between corresponding pairs (positive samples) and minimize similarity with all other pairs (negative samples). The CLIP objective can be expressed as a symmetric InfoNCE-like loss. When $x$ is treated as the anchor modality, the loss is:
\begin{equation}
\mathcal{L}_{\text{CLIP}}^{x \rightarrow y} = -\frac{1}{N} \sum_{i=1}^{N} \log \frac{\exp(\text{sim}(f_\theta(x_i), g_\phi(y_i))/\tau)}{\sum_{j=1}^{N} \exp(\text{sim}(f_\theta(x_i), g_\phi(y_j))/\tau)},
\end{equation}
and similarly for the reverse direction $y \rightarrow x$. The final loss is the average of the two directions:
\begin{equation}
\mathcal{L}_{\text{CLIP}} = \frac{1}{2} \left( \mathcal{L}_{\text{CLIP}}^{x \rightarrow y} + \mathcal{L}_{\text{CLIP}}^{y \rightarrow x} \right).
\end{equation}

Although highly effective for bi-modal alignment, this paradigm fundamentally decomposes multimodal learning into collections of pairwise objectives. In affective computing, CLIP-style losses have been applied to align EEG with peripheral signals such as ECG or EOG~\cite{cai2023emotion,lopez2023hypercomplex,lopez2024,liu2025multi}, and similar formulations have been extended to additional modalities in vision-language settings, including tri-modal video-audio-text models (VATT)~\cite{akbari2021vatt} and broad cross-modal embedding spaces (ImageBind)~\cite{girdhar2023imagebind}.

In practice, these approaches typically optimize sums of pairwise losses across modality pairs, implicitly assuming that higher-order multimodal structure can be recovered from pairwise alignment alone~\cite{akbari2021vatt,girdhar2023imagebind}. For example, in vision, language, and other multimedia domains, extending CLIP to three modalities is commonly implemented by summing all pairwise CLIP objectives~\cite{akbari2021vatt,girdhar2023imagebind,ruan2023accommodating}, leading to:
\begin{equation}
\mathcal{L}^{(x,y,z)}_{\text{CLIP}} =
\mathcal{L}^{(x,y)}_{\text{CLIP}} +
\mathcal{L}^{(y,z)}_{\text{CLIP}} +
\mathcal{L}^{(x,z)}_{\text{CLIP}} .
\end{equation}

%Such pairwise decompositions, however, overlook higher-order dependencies among modalities, as joint interactions that only emerge when multiple modalities are considered simultaneously cannot be recovered from sums of pairwise terms. From an information-theoretic perspective, such pairwise criteria are inherently limited, as they may fail to capture synergistic interactions that only emerge when three or more modalities are considered jointly~\cite{mcgill1954multivariate,watanabe1960information,timme2014synergy}. This limitation has motivated recent efforts toward multi-way alignment objectives, including SymILE~\cite{saporta2024contrasting}, Gramian volume-based alignment~\cite{cicchetti2025gramian}, and analyses of what to align in multimodal contrastive learning~\cite{dufumier2025what}. Nevertheless, existing approaches still fall short of providing a principled and tractable objective for modeling higher-order dependence across heterogeneous modalities. Moreover, To our knowledge, there is no existing literature that systematically addresses the alignment of more than two physiological modalities for emotion recognition. 

Such pairwise decompositions miss higher-order dependencies that emerge only when multiple modalities are considered jointly. From an information-theoretic perspective, this limitation is fundamental: pairwise criteria may fail to capture synergistic interactions that arise only when three or more modalities are jointly observed~\cite{mcgill1954multivariate,watanabe1960information,timme2014synergy}. This insight has motivated recent efforts toward multi-way alignment objectives, including SymILE~\cite{saporta2024contrasting}, Gramian volume-based alignment~\cite{cicchetti2025gramian}, and analyses of what to align in multimodal contrastive learning~\cite{dufumier2025what}. Nevertheless, existing approaches still lack a principled and tractable objective for modeling higher-order dependence across heterogeneous modalities. To our knowledge, no prior work has systematically addressed the alignment of more than two physiological modalities for emotion recognition.

To address these limitations, we introduce a principled alternative that explicitly models higher-order multimodal dependence by analyzing joint interactions among multiple modalities through the lens of dual total correlation (DTC). To make this formulation tractable in practice, we further leverage functional maximum correlation analysis (FMCA)~\cite{hu2022normalized} to enable efficient and scalable estimation of such joint dependence.

\section{Multimodal FMCA for Emotion Recognition}

We consider the setting where \( M \) physiological modalities \( X_1, X_2, \dots, X_M \) are available, and the goal is to learn generalizable representations that reside in a shared latent space and can effectively transfer to downstream tasks. In emotion recognition, typical physiological modalities include EEG, ECG, and peripheral signals such as skin temperature.

As discussed in Section~\ref{sec:2.2}, most existing contrastive SSL objectives rely on mutual information (MI) estimation. From an information-theoretic perspective, the widely used InfoNCE-based estimator suffers from inherent limitations, including its lower-bound nature~\cite{oord2018representation} and unfavorable sample complexity in high-dimensional settings~\cite{mcallester2020formal}. To address these issues, our framework introduces a new MI estimator derived from a density-ratio decomposition perspective. For readers who may be unfamiliar with information theory, we first review the FMCA, which serves as the mathematical foundation of our proposed framework.

\subsection{Density Ratio Decomposition and FMCA}\label{sec:FMCA_background}

% Here, the density ratio \( \rho(X, Y) \) is interpreted as a positive definite function~\cite{hu2022normalized}, which induces a linear operator \( \mathcal{T} \) acting on scalar functions \( f \) as:
% \begin{equation}
% \mathcal{T}f := \int \rho(X, \cdot) f(X) \, dX,
% \end{equation}
% for any measurable \( f \in L^2(p(X)) \). 
% According to Mercer's theorem~\cite{muandet2017kernel}, the operator \( \mathcal{T} \) admits a spectral decomposition with eigenvalues $\sigma_1 \ge \sigma_2 \ge \cdots \ge 0$.
% Importantly, the spectrum of \(\mathcal{T}\) reveals properties of the statistical dependence between \(X\) and \(Y\). In particular:
% \begin{itemize}
%   \item All eigenvalues satisfy \( 0 \le \sigma_k \le 1 \);
%   \item The largest eigenvalue \(\sigma_1\) is a constant $1$;
%   \item If \(X\) and \(Y\) are statistically independent, then \(\sigma_k = 0\) for all \(k \ge 2\); Conversely, larger values of \(\sigma_k\) for \(k \ge 2\) indicate stronger nonlinear dependence.
% \end{itemize}

Given any two random processes \(X\) and \(Y\) with joint distribution \(p(X,Y)\) and marginal product \(p(X)p(Y)\), their statistical dependence can be characterized through an orthonormal decomposition of the density ratio~\cite{huang21universal,hu2022normalized}:
\begin{equation}\label{eq:density_ratio}
\rho := \frac{p(X,Y)}{p(X)p(Y)} = \sum_{k=1}^{\infty} \sqrt{\sigma_k} \, \phi_k(X) \psi_k(Y),
\end{equation}
where \(\{\phi_k\}\) and \(\{\psi_k\}\) form orthonormal systems in
\(L^2(p(X))\) and \(L^2(p(Y))\), respectively, satisfying:
\begin{equation}
\mathbb{E}_X \left[ \phi_i(X)\phi_{j}(X) \right] = 
\mathbb{E}_Y \left[ \psi_i(Y)\psi_{j}(Y) \right] = 
\begin{cases}
1, & i = j \\
0, & i \ne j
\end{cases}
\end{equation}

The coefficients $\sigma_1 \ge \sigma_2 \ge \cdots \ge 0$, with $\sigma_k \in [0,1)$, can be interpreted as nonlinear correlation strengths between $X$ and $Y$. The largest coefficient satisfies $\sigma_1 = 1$. 
If $X$ and $Y$ are statistically independent, then $\sigma_k = 0$ for all $k \ge 2$; conversely, larger values of $\sigma_k$ indicate stronger dependence captured at increasingly higher orders.

This spectral view motivates defining a total statistical dependence (TSD) measure as an additive functional of \(\{\sigma_k\}\). FMCA quantifies this dependence as:
\begin{equation}
T := -\sum_{i=1}^{\infty} \log(1 - \sigma_i),
\end{equation}
which increases monotonically with each \(\sigma_i\) and reduces to zero only when \(X\) and \(Y\) are independent. The formulation requires no parametric assumptions and naturally generalizes classical linear correlation.

Intuitively, each non-trivial coefficient $\sigma_i$ can be regarded as one nonlinear dependence mode shared by two random variables. If two physiological modalities are independent, these non-trivial shared modes vanish; if they contain coordinated information, more modes become active and their corresponding coefficients increase. TSD therefore measures dependence by accumulating multiple nonlinear shared modes, rather than relying on a single linear correlation coefficient. This is particularly suitable for physiological emotion recognition, where cross-modal relationships are often nonlinear and distributed across several latent physiological factors.

\subsubsection{Neural Networks Implementation}
Since true density functions are unavailable in practice, FMCA optimizes a neural surrogate of \(T\) using paired projection networks \(f_\theta : \mathcal{X}\!\to\!\mathbb{R}^K\) and \(g_\phi : \mathcal{Y}\!\to\!\mathbb{R}^K\).  Define the marginal autocorrelation and cross-correlation matrices:
\begin{align}
\mathbf{R}_X &= \mathbb{E}[f_\theta(X) f_\theta(X)^\top], \quad
\mathbf{R}_Y = \mathbb{E}[g_\phi(Y) g_\phi(Y)^\top], \\
\mathbf{P}_{XY} &= \mathbb{E}[f_\theta(X) g_\phi(Y)^\top],\quad
\mathbf{R}_{XY} =
\begin{bmatrix}
\mathbf{R}_X & \mathbf{P}_{XY} \\
\mathbf{P}^\top_{XY} & \mathbf{R}_Y
\end{bmatrix}.
\end{align}

FMCA minimizes the following log-determinant objective:
\begin{equation}\label{eq:FMCA_obj}
\min_{\theta,\phi} \; r(\theta,\phi)
:= \log\det\mathbf{R}_{XY}
   - \log\det\mathbf{R}_X
   - \log\det\mathbf{R}_Y.
\end{equation}

After optimization, \( f_\theta \) and \( g_\phi \) approximate the top eigenfunctions of the density ratio \( \rho \). Applying singular value decomposition to \( \mathbf{R}_{XY} = U S^{1/2} V \), where \( S = \mathrm{diag}(\lambda_1, \ldots, \lambda_K) \) approximates the top-\(K\) eigenvalues of \( \rho \), we can estimate \( \rho \) as \( \hat{\rho} = f_\theta^\top S^{1/2} g_\phi \).

We note that the FMCA objective resembles the mutual information formula for jointly Gaussian variables:
\begin{equation}
I(X; Y) = \frac{1}{2} \left( \log \det R_X + \log \det R_Y - \log \det \Sigma \right),
\end{equation}
where \( \Sigma = \begin{bmatrix} R_X & R_{XY} \\ R_{YX} & R_Y \end{bmatrix} \) is the sample joint-covariance matrix. This resemblance partially motivates Eq.~(\ref{eq:FMCA_obj}), but FMCA remains fully
non-parametric.

\begin{lemma}[\cite{hu2022normalized}]\label{lemma_1}
Minimizing Eq.~(\ref{eq:FMCA_obj}) yields
\(r^*_{\mathcal{L}}(\theta,\phi)=\sum_{i=1}^K \log(1-\sigma_i)\),
where \(\sigma_i\in[0,1)\) are the top \(K\) eigenvalues in the
decomposition \eqref{eq:density_ratio}.
\end{lemma}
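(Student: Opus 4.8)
The plan is to reduce the log-determinant objective in Eq.~(\ref{eq:FMCA_obj}) to a scalar function of the singular values of a whitened cross-correlation matrix, and then to invoke a variational (Poincar\'e separation / Ky Fan) argument to identify the minimizer with the leading eigenfunctions of the density ratio in \eqref{eq:density_ratio}. First I would exploit the block structure of $\mathbf{R}_{XY}$ through the Schur complement, writing $\det\mathbf{R}_{XY} = \det\mathbf{R}_X \cdot \det(\mathbf{R}_Y - \mathbf{P}_{XY}^\top\mathbf{R}_X^{-1}\mathbf{P}_{XY})$. Factoring $\mathbf{R}_Y^{1/2}$ out of the Schur complement and cancelling $\log\det\mathbf{R}_X$ and $\log\det\mathbf{R}_Y$ collapses the three terms of $r$ into $r(\theta,\phi) = \log\det(I - \mathbf{C}^\top\mathbf{C})$, where $\mathbf{C} = \mathbf{R}_X^{-1/2}\mathbf{P}_{XY}\mathbf{R}_Y^{-1/2}$ is the whitened cross-correlation. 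A useful preliminary observation is that $r$ is invariant under separate invertible reparametrizations $f_\theta \mapsto A f_\theta$ and $g_\phi \mapsto B g_\phi$, since the induced $\log|\det A|$ and $\log|\det B|$ cancel between the joint and the marginal determinants; this confirms that $r$ depends on $f_\theta, g_\phi$ only through the singular values $c_1 \ge \cdots \ge c_K$ of $\mathbf{C}$, and yields $r = \sum_{i=1}^K \log(1 - c_i^2)$.

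Next I would identify $\mathbf{C}$ with the compression of the conditional-expectation operator $T_\rho : h \mapsto \mathbb{E}[h(Y)\mid X]$, whose integral kernel is exactly the density ratio $\rho$ of \eqref{eq:density_ratio}, onto the $K$-dimensional subspaces spanned by the whitened coordinates of $f_\theta$ and $g_\phi$. This follows from $\mathbb{E}[f_i(X)g_j(Y)] = \langle f_i, T_\rho g_j\rangle_{L^2(p_X)}$, so that the entries of $\mathbf{C}$ are matrix elements of $T_\rho$ in orthonormal (whitened) bases. By the spectral decomposition \eqref{eq:density_ratio}, the singular values of $T_\rho$ are precisely $\{\sqrt{\sigma_k}\}$ with left/right singular functions $\phi_k, \psi_k$. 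Because $t \mapsto \log(1-t)$ is strictly decreasing, minimizing $r$ is equivalent to maximizing each realized correlation $c_i^2$. The Poincar\'e separation theorem (equivalently, Ky Fan's extremal characterization applied to two-sided compressions) guarantees that the $i$-th singular value of any such $K$-dimensional compression is dominated by $\sqrt{\sigma_i}$, and that this bound is attained simultaneously for all $i$ when $f_\theta, g_\phi$ realize the leading eigenfunctions $\phi_1,\dots,\phi_K$ and $\psi_1,\dots,\psi_K$. Since every term is minimized at once, the optimum is $c_i^2 = \sigma_i$ and hence $r^*_{\mathcal{L}}(\theta,\phi) = \sum_{i=1}^K \log(1-\sigma_i)$.

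The main obstacle I anticipate is precisely this simultaneous-optimality step: one must rule out trade-offs in which a suboptimal subspace inflates one $c_i$ at the expense of another, and show that a single pair of subspaces drives every $c_i$ to its ceiling $\sqrt{\sigma_i}$. This needs the full separation/interlacing structure of singular values under compression together with the monotonicity of $\log(1-c_i^2)$, rather than a mere bound on the trace $\sum_i c_i^2$; establishing attainability (not just the inequality) is where the leading eigenfunctions enter essentially.

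A secondary technical point is the trivial constant mode. The largest density-ratio singular value is $\sqrt{\sigma_1}=1$, with $\phi_1 = \psi_1 \equiv 1$, which would send $\log(1-c_i^2)\to -\infty$ and make the objective degenerate. I would dispose of this by restricting to zero-mean (centered) feature maps, which are orthogonal to constants and therefore excluded from the span of the trivial mode, so that the attainable eigenvalues lie in $[0,1)$ and are re-indexed as the top $K$ nontrivial $\sigma_i$ appearing in the statement. Well-posedness of the log-determinants, i.e.\ positive definiteness of $\mathbf{R}_X$ and $\mathbf{R}_Y$, holds at any non-degenerate feature map and may be assumed throughout the argument.
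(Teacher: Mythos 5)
The paper itself contains no proof of this lemma: it is imported verbatim from the FMCA paper \cite{hu2022normalized}, and the paper's appendix proves only the DTC decomposition and the sandwich bounds (Theorems~\ref{theorem_three_variable} and \ref{thm:generalization}); the appendix lemma that happens to share the label ``Lemma 1'' is a different statement about DTC recursion. So there is no in-paper argument to compare yours against, and your proof must be judged on its own terms.

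On those terms it is essentially correct, and it follows the natural route. The Schur-complement collapse $r=\log\det\bigl(I-\mathbf{C}^\top\mathbf{C}\bigr)=\sum_{i=1}^{K}\log\bigl(1-c_i^2\bigr)$ with $\mathbf{C}=\mathbf{R}_X^{-1/2}\mathbf{P}_{XY}\mathbf{R}_Y^{-1/2}$ is right, as is the identification of $\mathbf{C}$ with a two-sided compression of the conditional-expectation operator whose singular values are $\sqrt{\sigma_k}$ by the expansion \eqref{eq:density_ratio}. Your anticipated obstacle (ruling out trade-offs among the $c_i$) is in fact already resolved by the tool you invoke: the inequality $s_i(A^{*}TB)\le s_i(T)$ holds for \emph{every} index $i$ with one and the same pair of isometries, so each summand satisfies $\log(1-c_i^2)\ge\log(1-\sigma_i)$ independently, no trade-off can beat the termwise bound, and attainment by the top-$K$ singular functions closes the argument. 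Two caveats should be stated explicitly rather than left implicit. First, attainability presumes the function classes of $f_\theta$ and $g_\phi$ are rich enough to realize $\phi_1,\dots,\phi_K$ and $\psi_1,\dots,\psi_K$; the lemma is a statement about the functional (population) minimum, not about a fixed finite-capacity network. Second, your handling of the constant mode is not a ``secondary technical point'' but a necessary hypothesis: without centering (or some equivalent exclusion of the trivial eigenfunction), the unconstrained infimum of Eq.~\eqref{eq:FMCA_obj} is $-\infty$, since a constant feature forces a unit singular value of $\mathbf{C}$. The paper's statement sidesteps this by fiat in writing $\sigma_i\in[0,1)$, i.e., indexing only nontrivial eigenvalues; your centering argument is the correct way to make that exclusion rigorous and should be promoted to an explicit assumption of the lemma.
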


Thus, minimizing Eq.~(\ref{eq:FMCA_obj}) is equivalent to
maximizing a truncated version of \(T\), and hence the nonlinear
dependence between \(X\) and \(Y\).

While theoretically principled, the log-determinant objective
requires repeated matrix inversion and eigenvalue computation,
which may become numerically unstable during minibatch
training. Small ridge terms \(\varepsilon \mathbf{I}\) are often added
for stability \cite{hu2022normalized}, motivating the development
of an alternative formulation that remains faithful to the
spectral interpretation yet is more efficient and robust for
large-scale optimization.

\subsection{Aligning Multimodal Representation with DTC}

% When \( M = 2 \), this reduces to maximizing mutual information between paired modalities, often estimated via contrastive objectives like InfoNCE~\cite{oord2018representation}.

Motivated by CLIP, a natural approach is to maximize the total dependence among all modalities. 
From an information-theoretic view, total dependence among \(M\) random variables can be quantified using total correlation (TC)~\cite{watanabe1960information} and dual total correlation (DTC)~\cite{sun1975linear}. Let \([M] := \{1, 2, \dots, M\}\) denote the index set, and \([M] \setminus \{i\}\) denote the set excluding \(i\). For convenience, we write \(X_{[M]}\) to denote the tuple \((X_1, \dots, X_M)\), and \(X_{[M] \setminus \{i\}}\) to denote all variables except \(X_i\). Then, TC and DTC are defined as:
\begin{align}
\mathrm{TC}(X_{[M]}) &= \sum_{i=1}^{M} H(X_i) - H(X_{[M]}), \\
\mathrm{DTC}(X_{[M]}) &= H(X_{[M]}) - \sum_{i=1}^{M} H(X_i \mid X_{[M] \setminus \{i\}}).
\end{align}

From a set theory perspective in Fig.~\ref{fig:TC_DTC}, TC and DTC differ in how they count the central three-way interaction (shaded area). Both TC and DTC capture conditional cross-variable interactions; however, TC double-counts the three-way interaction, thereby overestimating the true total dependence.

TC has recently been applied to multimodal learning via CLIP-like lower bounds~\cite{saporta2024contrasting}. However, TC is known to overestimate redundancy due to repeated counting of shared information. For example, in the case of three variables, it double-counts the interaction information~\cite{mcgill1954multivariate}:
\[
\Pi(X_1,X_2,X_3) = I(X_1;X_2) + I(X_1;X_3) - I(X_1;X_2,X_3),
\]
which may cause misleading dependence estimates (see Fig.~\ref{fig:TC_DTC} for an illustration). In contrast, DTC avoids this issue by emphasizing unique and synergistic information, making it better suited for multimodal alignment~\cite{timme2014synergy,austin2018multi}.

More broadly, the shared regions in Fig.~\ref{fig:TC_DTC} represent cross-modal information across physiological systems. In emotion recognition, they may reflect common affective factors, such as arousal responses jointly expressed in neural, cardiac, and peripheral signals. This motivates preserving both shared and conditional cross-modal information, rather than only redundant common variation.

\begin{figure}
    \centering
    \includegraphics[width=0.7\linewidth]{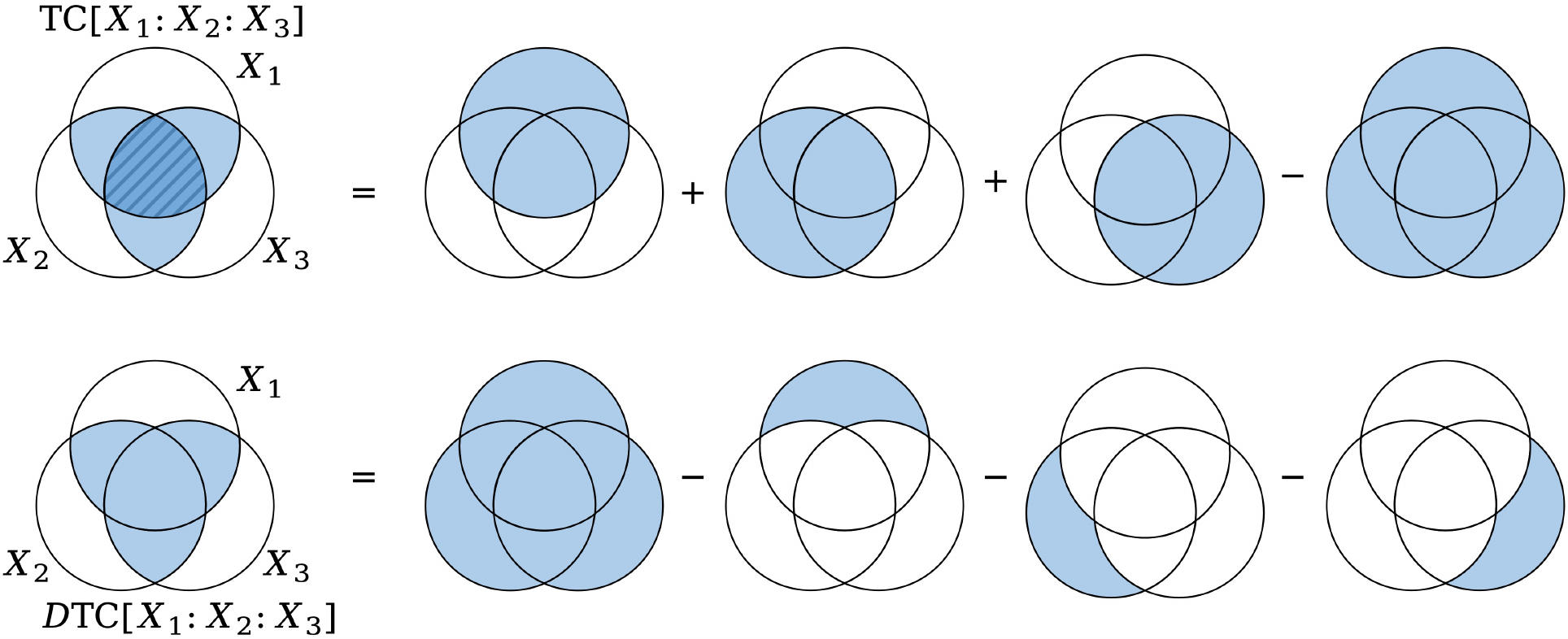}
    \caption{Venn diagram of TC and DTC for three variables. The top row shows that TC is obtained by summing the three marginal entropies and subtracting the joint entropy, which leaves the pairwise shared regions once but counts the central three-way shared region (shaded area) twice. The bottom row shows that DTC subtracts the three conditional unique components from the joint entropy, thereby retaining shared and conditional cross-modal information while counting the central three-way shared region only once.}
    \label{fig:TC_DTC}
\end{figure}

Beyond its theoretical advantages over TC, DTC also highlights a key limitation of prior approaches that naively extend CLIP by aggregating contrastive losses across all modality pairs. For instance, in the case of three variables \( X_1, X_2, X_3 \), DTC decomposes as (proof in Appendix~A):
\begin{equation}\label{eq:DTC_decompose}
\begin{split}
&\mathrm{DTC}(X_1, X_2, X_3) = \frac{1}{3} \big[I(X_1; X_2) + I(X_2; X_3) + I(X_1; X_3) \big] \\
&+ \frac{2}{3}\big[ I(X_1; X_2 \mid X_3) + I(X_2; X_3 \mid X_1) + I(X_1; X_3 \mid X_2) \big].
\end{split}
\end{equation}

Eq.~(\ref{eq:DTC_decompose}) shows that the pairwise strategy, though intuitive, fails to capture all conditional higher-order dependencies. This limitation motivates the cyclic pair-to-third FMCA objective introduced next. 

%pairwise contrastive learning only captures mutual information terms, while ignoring all conditional mutual information.

%Despite its appeal, DTC is challenging to estimate accurately due to the curse of dimensionality, necessitating novel estimators or reliable surrogates, as discussed later.

\subsection{Approximating DTC with FMCA}

%\subsubsection{A Sandwich Bound to DTC}
%This intuition also explains the optimization design of MFMC: instead of separately maximizing all pairwise alignments, we maximize cyclic pair-to-third dependence terms, where a fused pair of modalities is used to explain the remaining modality.

We first present a \textit{sandwich bound} that approximates the DTC using more tractable mutual information terms.

\begin{theorem}\label{theorem_three_variable}
For three random variables \( X_1, X_2, X_3 \), the dual total correlation satisfies the following bound:
\begin{equation}
\begin{aligned}
\frac{1}{3} \sum_{\text{cyc}} I(\text{pair}; \text{third}) & \le \mathrm{DTC}(X_1, X_2, X_3)  \\
& \le \frac{2}{3} \sum_{\text{cyc}} I(\text{pair}; \text{third}).
\end{aligned}
\end{equation}
\end{theorem}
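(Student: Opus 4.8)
The plan is to reduce the sandwich bound to two elementary nonnegativity facts, leveraging the exact DTC decomposition already recorded in Eq.~(\ref{eq:DTC_decompose}). Introduce the shorthand $I_{ij} := I(X_i;X_j)$ for the pairwise terms and $I_{ij\mid k} := I(X_i;X_j\mid X_k)$ for the conditional terms, and write the cyclic sum appearing in the statement as
\begin{equation}
S := \sum_{\text{cyc}} I(\text{pair};\text{third}) = I(X_1,X_2;X_3) + I(X_2,X_3;X_1) + I(X_3,X_1;X_2).
\end{equation}
I would also set $A := I_{12}+I_{13}+I_{23}$ and $B := I_{12\mid 3}+I_{13\mid 2}+I_{23\mid 1}$, so that Eq.~(\ref{eq:DTC_decompose}) reads $\mathrm{DTC}(X_1,X_2,X_3) = \tfrac13 A + \tfrac23 B$.

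The first step is to establish $S = A + B$. For this I would expand each joint-versus-single term by the chain rule in both admissible orderings, for example $I(X_1,X_2;X_3) = I_{13} + I_{23\mid 1} = I_{23} + I_{13\mid 2}$, and average the two expressions to obtain the symmetric representation $I(X_1,X_2;X_3) = \tfrac12(I_{13}+I_{23}+I_{13\mid 2}+I_{23\mid 1})$, treating the other two cyclic terms identically. Summing the three symmetric expressions, each $I_{ij}$ and each $I_{ij\mid k}$ appears exactly twice, so the prefactors $\tfrac12$ cancel and one lands on $S = A + B$. This bookkeeping is the only step requiring genuine care, and it is the part I would flag as the main obstacle, although it amounts to routine index-counting rather than a real difficulty.

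The second step is purely algebraic. Substituting $S = A + B$ together with $\mathrm{DTC} = \tfrac13 A + \tfrac23 B$, the two claimed inequalities collapse to
\begin{align}
\tfrac13 S \le \mathrm{DTC} &\iff \tfrac13(A+B) \le \tfrac13 A + \tfrac23 B \iff B \ge 0, \\
\mathrm{DTC} \le \tfrac23 S &\iff \tfrac13 A + \tfrac23 B \le \tfrac23(A+B) \iff A \ge 0.
\end{align}
Both conditions hold because mutual information and conditional mutual information are nonnegative, so $A \ge 0$ and $B \ge 0$, closing both ends of the bound. As a consistency check I would note the extremal behaviour this exposes: DTC attains the lower bound $\tfrac13 S$ precisely when all conditional dependence vanishes ($B = 0$) and the upper bound $\tfrac23 S$ precisely when all marginal pairwise dependence vanishes ($A = 0$), matching the intuition that DTC weights synergistic conditional information more heavily than redundant pairwise information.
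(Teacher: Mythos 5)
Your proof is correct, but it takes a genuinely different route from the paper's. The paper proves Theorem~\ref{theorem_three_variable} as the $M=3$ instance of Theorem~\ref{thm:generalization}, whose argument combines the identity $\mathrm{TC}+\mathrm{DTC}=\sum_{i}I\bigl(X_i;X_{[M]\setminus\{i\}}\bigr)$ (obtained directly from the entropy definitions of TC and DTC) with Austin's two-sided comparison lemma, $\mathrm{DTC}\le (M-1)\,\mathrm{TC}$ and $\mathrm{TC}\le (M-1)\,\mathrm{DTC}$; substituting each inequality into the identity yields the two halves of the sandwich. You instead start from the paper's three-variable decomposition $\mathrm{DTC}=\tfrac13 A+\tfrac23 B$ in Eq.~(\ref{eq:DTC_decompose}) (which the paper proves independently via Austin's recursion lemma, so there is no circularity), establish $S=A+B$ by averaging the two chain-rule expansions of each joint term, and collapse both inequalities to $B\ge 0$ and $A\ge 0$. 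Your bookkeeping is right: each pairwise and each conditional term appears exactly twice across the three symmetrized expansions, so the factors of $\tfrac12$ cancel. What your route buys: it avoids the TC/DTC comparison lemma entirely, and it pinpoints exactly when each bound is tight --- the lower bound when all conditional dependence vanishes ($B=0$), the upper bound when all pairwise dependence vanishes ($A=0$) --- a characterization the paper's argument does not exhibit. What the paper's route buys: it works uniformly for all $M\ge 3$, whereas your argument is tied to the three-variable decomposition and would need a generalized analogue of Eq.~(\ref{eq:DTC_decompose}) to extend; it also makes the TC/DTC duality explicit, since in your notation $\mathrm{TC}=\tfrac23 A+\tfrac13 B$, and the identity $S=\mathrm{TC}+\mathrm{DTC}$ is precisely the paper's starting point.
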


\begin{proof}
    Complete proofs are provided in Appendix~A, and we also present empirical evidence supporting the tightness of the proposed bound in Appendix~B.
\end{proof}

Here, \( \sum_{\text{cyc}} I(\text{pair}; \text{third}) \) denotes the cyclic sum over all permutations of the three variables, where in each term, two variables are grouped as a joint input and the third as the target. That is,
\begin{equation}
\begin{aligned}
& \sum_{\text{cyc}} I(\text{pair}; \text{third}) 
= I(X_1, X_2; X_3) \\
& \qquad + I(X_2, X_3; X_1) + I(X_3, X_1; X_2).
\end{aligned}
\end{equation}

% As an illustrative example, consider \( I(X_1, X_2; X_3) \), which offers a natural advantage over optimizing \( I(X_1; X_2) \) as done in prior works. 

Theorem~\ref{theorem_three_variable} suggests that optimizing the cyclic sum \( \sum_{\text{cyc}} I(\text{pair}; \text{third}) \) serves as a reliable surrogate for DTC. By the chain rule,
\begin{equation}
\begin{aligned}
I(X_1, X_2; X_3)
&= I(X_1; X_3) + I(X_2; X_3 \mid X_1) \\
&= I(X_2; X_3) + I(X_1; X_3 \mid X_2).
\end{aligned}
\end{equation}
Thus, each pair-to-third term contains both a pairwise dependence component and a conditional dependence component. Optimizing the cyclic sum in Eq.~(16) therefore encourages the encoders to capture information that cannot be fully represented by independent pairwise alignments alone, consistent with the DTC decomposition in Eq.~(\ref{eq:DTC_decompose}).

This sandwich bound can be extended to settings with \( M > 3 \) modalities (see Theorem~\ref{thm:generalization}), though in this study we focus on the case of three modalities.

\begin{theorem}\label{thm:generalization}
For \( M \geq 3 \) random variables \( X_1, X_2, \dots, X_M \), the dual total correlation satisfies the following sandwich bound:
\begin{align}
\frac{1}{M} \sum_{i=1}^{M} I\left( X_{[M] \setminus \{i\}} ; X_i \right)
& \leq \mathrm{DTC}(X_1, \dots, X_M) \\
& \leq \frac{M-1}{M} \sum_{i=1}^{M} I\left( X_{[M] \setminus \{i\}} ; X_i \right), \nonumber
\end{align}
where \( [M] \setminus \{i\} \) refers to the set of all indices except \(i\). 
\end{theorem}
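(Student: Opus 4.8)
The plan is to reduce the two-sided bound to a pair of scalar inequalities relating $\mathrm{TC}$ and $\mathrm{DTC}$, and then to settle each one with a short entropy argument. First I would record the identity
\[
\sum_{i=1}^{M} I\!\left(X_{[M]\setminus\{i\}};X_i\right) = \mathrm{TC}(X_{[M]}) + \mathrm{DTC}(X_{[M]}),
\]
which follows by writing each $I(X_{[M]\setminus\{i\}};X_i)=H(X_i)-H(X_i\mid X_{[M]\setminus\{i\}})$, summing over $i$, and substituting the definitions of $\mathrm{TC}$ and $\mathrm{DTC}$. Writing $S:=\sum_i I(X_{[M]\setminus\{i\}};X_i)=\mathrm{TC}+\mathrm{DTC}$ and plugging this into the claim, the upper bound $\mathrm{DTC}\le\frac{M-1}{M}S$ becomes equivalent to $\mathrm{DTC}\le(M-1)\mathrm{TC}$, and the lower bound $\frac{1}{M}S\le\mathrm{DTC}$ becomes equivalent to $\mathrm{TC}\le(M-1)\mathrm{DTC}$. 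Both collapse to equalities at $M=2$ and recover Theorem~\ref{theorem_three_variable} at $M=3$, so it suffices to prove these two inequalities for general $M$.

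For the upper bound I would use subadditivity. Substituting $H(X_i\mid X_{[M]\setminus\{i\}})=H(X_{[M]})-H(X_{[M]\setminus\{i\}})$ yields the closed forms $\mathrm{TC}=\sum_iH(X_i)-H(X_{[M]})$ and $\mathrm{DTC}=\sum_iH(X_{[M]\setminus\{i\}})-(M-1)H(X_{[M]})$, and after cancellation $\mathrm{DTC}\le(M-1)\mathrm{TC}$ reduces to $\sum_iH(X_{[M]\setminus\{i\}})\le(M-1)\sum_iH(X_i)$. This is immediate: subadditivity gives $H(X_{[M]\setminus\{i\}})\le\sum_{j\ne i}H(X_j)$, and summing over $i$ counts each $H(X_j)$ exactly $M-1$ times.

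The hard part will be the lower bound $\mathrm{TC}\le(M-1)\mathrm{DTC}$, since it is not a direct consequence of subadditivity and requires a genuinely different argument. The key reformulation is that, setting $c_i:=H(X_i\mid X_{[M]\setminus\{i\}})$, this inequality is equivalent to $M\,H(X_{[M]})\ge\sum_iH(X_i)+(M-1)\sum_ic_i$; using the chain-rule identity $M\,H(X_{[M]})-\sum_iH(X_i)=\sum_iH(X_{[M]\setminus\{i\}}\mid X_i)$, it reduces to
\[
\sum_{i=1}^{M}H\!\left(X_{[M]\setminus\{i\}}\mid X_i\right)\ \ge\ (M-1)\sum_{j=1}^{M}c_j.
\]
To prove this I would expand each $H(X_{[M]\setminus\{i\}}\mid X_i)$ by the chain rule over the variables $\{X_j:j\ne i\}$ in an arbitrary order; every resulting term $H(X_{j}\mid X_i,\dots)$ conditions on a subset of $X_{[M]\setminus\{j\}}$, so ``conditioning reduces entropy'' bounds it below by $c_j=H(X_j\mid X_{[M]\setminus\{j\}})$. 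This gives $H(X_{[M]\setminus\{i\}}\mid X_i)\ge\sum_{j\ne i}c_j$, and summing over $i$ makes each $c_j$ appear $M-1$ times, producing exactly the required inequality.

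As a sanity check I would verify the equality cases: the upper bound is saturated by a parity/XOR construction (pure synergy), while the lower bound is saturated by fully redundant copies of a single variable, confirming that neither constant $\tfrac{1}{M}$ nor $\tfrac{M-1}{M}$ can be improved.
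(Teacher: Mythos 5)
Your proof is correct, and its skeleton coincides with the paper's: both start from the identity $\mathrm{TC}(X_{[M]})+\mathrm{DTC}(X_{[M]})=\sum_{i=1}^{M} I\bigl(X_i;X_{[M]\setminus\{i\}}\bigr)$, derived exactly as you derive it, and both reduce the sandwich bound to the pair of scalar inequalities $\mathrm{DTC}\le (M-1)\,\mathrm{TC}$ and $\mathrm{TC}\le (M-1)\,\mathrm{DTC}$. The difference lies in how those two inequalities are settled. The paper does not prove them: it cites them as a known lemma (adapted from Lemma~4.13 of Austin's work on multi-variable mutual information) and immediately combines them with the identity. You instead prove both from first principles --- the first by writing $\mathrm{DTC}=\sum_i H(X_{[M]\setminus\{i\}})-(M-1)H(X_{[M]})$ and invoking subadditivity, so that the claim collapses to $\sum_i H(X_{[M]\setminus\{i\}})\le (M-1)\sum_i H(X_i)$; the second by a chain-rule expansion of each $H(X_{[M]\setminus\{i\}}\mid X_i)$ together with ``conditioning reduces entropy,'' giving $H(X_{[M]\setminus\{i\}}\mid X_i)\ge\sum_{j\ne i}H(X_j\mid X_{[M]\setminus\{j\}})$ and the required bound after summing over $i$. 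Both arguments are airtight, and your identification of which inequality is the genuinely nontrivial one is accurate: the subadditivity direction is routine, while the direction $\mathrm{TC}\le(M-1)\,\mathrm{DTC}$ is exactly the content that forces the paper to reach for Austin's lemma. Your closing equality-case analysis (a parity/XOR construction saturating the upper constant, identical copies saturating the lower one) is a genuine addition not present in the paper, since it certifies that neither $\tfrac{1}{M}$ nor $\tfrac{M-1}{M}$ can be improved. In short, the paper's route buys brevity by outsourcing the key lemma to a citation; yours buys a self-contained, elementary proof plus a tightness statement, at the cost of about a page of entropy manipulations.
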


Theorem~2 also provides a practical leave-one-out route to settings with more than three modalities. For each target modality $X_i$, the remaining modalities $X_{[M]\setminus\{i\}}$ can be encoded and fused into a joint representation, which is then aligned with the target modality representation. This gives an $M$-term objective rather than requiring an enumeration of all higher-order subsets.

\subsubsection{Estimation of joint mutual information}

\begin{figure*}
    \centering
    \includegraphics[width=1\linewidth]{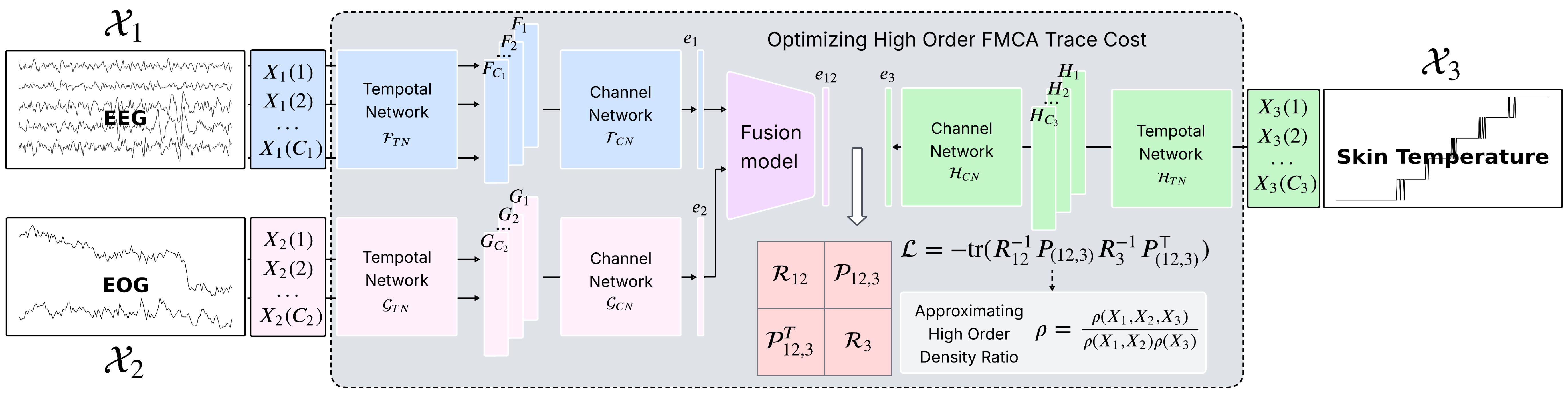}
    \caption{Diagram illustrating the maximization of total dependence between \{EEG, EOG\} and skin temperature by decomposing the density ratio \( \frac{p(X_1, X_2, X_3)}{p(X_1, X_2)p(X_3)} \). Networks \( f_\theta \) ($C_1$ channels) and \( g_\phi \) ($C_2$ channels) are applied to \( X_1 \) and \( X_2 \), respectively, to extract EEG and EOG embeddings \( e_1 \) and \( e_2 \), which are then fused via a fusion network to obtain the joint embedding \( e_{12} \). Similarly, \( h_\psi \) encodes \( X_3 \) ($C_3$ channels) into \( e_{3} \). Finally, \( e_{12} \) and \( e_{3} \) are used to optimize the trace loss $\mathrm{tr}(R_{12}^{-1} P_{12,3} R_3^{-1} P_{12,3}^\top)$, where $R$ and $P$ denote the sample auto- and cross-covariance matrices, respectively. The optimization converges to $- \sum_{i=1}^K \sigma_i$.}
    \label{fig:FMFC_framework}
\end{figure*}

With \( \sum_{\text{cyc}} I(\text{pair}; \text{third}) \) as a surrogate for DTC, the key challenge is estimating each joint mutual information term.

Rather than relying on InfoNCE- or CLIP-like objectives that provide only lower bounds, we aim to directly optimize the true dependence values using FMCA. However, as discussed earlier, FMCA suffers from high computational cost and numerical instability due to its reliance on eigenvalue decomposition. Additionally, it is not straightforward to extend FMCA to the joint mutual information term \( I(\text{pair}; \text{third}) \), which involves three variables instead of two.

To address both issues, we first introduce a new FMCA objective that eliminates the need for eigenvalue decomposition. We then present a neural framework for estimating \( I(\text{pair}; \text{third}) \) based on this formulation.

By Lemma~\ref{lemma_1}, the original FMCA objective essentially maximizes a truncated total statistical dependence (TSD), defined as \( T = -\sum_{i=1}^K \log(1 - \sigma_i) \). In fact, TSD can be generalized beyond the log-based formulation. Since \( \{ \sigma_i \} \) lie in \([0, 1)\), any convex and monotonically increasing function \( \zeta(\cdot) \) can be applied to define a generalized TSD measure:
\begin{equation}
T_\zeta := \sum_{i=1}^{K} \zeta(\sigma_i).
\end{equation}
This observation motivates our adoption of a flexible, function-based formulation of statistical dependence.

%ensuring that the dependence measure increases with stronger coupling between \( f_\theta(X) \) and \( g_\omega(X_2) \). 

\begin{definition}[Generalized definition of TSD]
Given the eigenspectrum \( \{ \sigma_i \}_{i=1}^{\infty} \) and any monotonically increasing convex function \( \zeta(\cdot) : [0,1] \to [0, \infty) \) satisfying \( \zeta(0) = 0 \), we define the generalized form of the truncated total statistical dependence (TSD) as:
\begin{equation}
T(\sigma_1, \sigma_2, \dots; \zeta) = \sum_{i=1}^{K} \zeta(\sigma_i).
\end{equation}
\end{definition}

Notably, setting \( \zeta(\sigma) = \log \frac{1}{1 - \sigma} \) recovers the TSD formulation proposed in \cite{hu2022normalized}.
Alternatively, by choosing a much simpler function \( \zeta(\lambda) = \lambda \), we obtain an alternative of TSD, defined as $T := \sum_{i=1}^{K} \sigma_i$.

This motivates a new objective for the FMCA:
\begin{equation} \label{eq:FMCA_our}
\min_{\theta, \phi} \; r^{\dagger}(f_\theta, g_\phi) := - \sum_{i=1}^K \sigma_i.
\end{equation}
Minimizing Eq.(\ref{eq:FMCA_our}) effectively maximizes \( T:=  \sum_{i=1}^{K} \sigma_i \), thus maximizing the dependence for two variables.

Compared to the original objective \( r(f_\theta, g_\phi) \) in Eq.~(\ref{eq:FMCA_obj}), the new objective avoids explicit eigenvalue decomposition and can be efficiently computed as a trace term:
%which is computationally expensive and numerically unstable. Although the objective still involves trace and matrix inversion operations, This is because the trace term:
\begin{equation}\label{eq:FMCA_final_loss}
 \sum_{i=1}^K \sigma_i =  \mathrm{tr}(VV^\top) =  \mathrm{tr}(R_X^{-1} R_{XY} R_Y^{-1} R_{XY}^\top),
\end{equation}
where \( V = R_X^{-1/2} R_{XY} R_Y^{-1/2} \) is the normalized cross-covariance operator.

\begin{remark}[Numerical stabilization]
Although the trace-based FMCA surrogate avoids the log-determinant and eigendecomposition operations required by the original FMCA formulation, the empirical mini-batch autocorrelation matrices in Eq.~(22) can still be ill-conditioned. In implementation, before evaluating the trace objective, we add a small diagonal ridge term to each autocorrelation matrix:
\[
\widetilde{R}_X = R_X + \epsilon I,\qquad
\widetilde{R}_Y = R_Y + \epsilon I.
\]
The trace score is then evaluated with $\widetilde{R}_X$ and $\widetilde{R}_Y$ in place of $R_X$ and $R_Y$. We use $\epsilon=10^{-6}$ in the main experiments. Appendix~C-F reports a sensitivity analysis over $\epsilon \in \{0,10^{-6},10^{-5},10^{-4},10^{-3},10^{-2}\}$, showing that ridge regularization improves matrix conditioning while downstream accuracy and macro-F1 remain robust.
\end{remark}

\begin{lemma}[First-order approximation] 
Let \( \sigma_i \in (0, 1) \) denote the eigenvalues of density ratio $\rho$. By applying the first-order Taylor approximation to \( \log(1 - x) \), i.e.,
\[
\log(1 - x) \approx -x - \frac{x^2}{2} - \cdots  \Rightarrow  \sum_{i=1}^K \log(1 - \sigma_i) \approx - \sum_{i=1}^K \sigma_i.
\]
\end{lemma}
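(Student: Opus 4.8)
The plan is to treat the claim as a direct consequence of the Maclaurin expansion of $\log(1-x)$ together with a controlled remainder estimate, rather than as a purely formal manipulation of the printed series. First I would recall that for any $x \in (-1,1)$ the series $\log(1-x) = -\sum_{n=1}^{\infty} x^n/n$ converges absolutely; since the hypothesis places every $\sigma_i$ in $(0,1)$, each term $\log(1-\sigma_i)$ lies strictly inside the radius of convergence and may be expanded termwise. Truncating after the linear term gives $\log(1-\sigma_i) = -\sigma_i - r_i$, where $r_i := \sum_{n\ge 2}\sigma_i^n/n > 0$ collects all higher-order contributions.

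The second step is to bound the aggregate remainder so the approximation can be stated quantitatively rather than merely asymptotically. Using $1/n \le 1/2$ for $n \ge 2$ and summing the resulting geometric tail, I would obtain
\[
0 < r_i = \sum_{n\ge 2}\frac{\sigma_i^n}{n} \le \frac{1}{2}\sum_{n\ge 2}\sigma_i^n = \frac{\sigma_i^2}{2(1-\sigma_i)},
\]
so that summing over $i = 1,\dots,K$ yields
\[
0 < \Big(-\sum_{i=1}^K \sigma_i\Big) - \sum_{i=1}^K \log(1-\sigma_i) \le \frac{1}{2}\sum_{i=1}^K \frac{\sigma_i^2}{1-\sigma_i}.
\]
This shows the two quantities differ by a strictly positive second-order term that vanishes as the $\sigma_i$ become small, establishing $\sum_i \log(1-\sigma_i) \approx -\sum_i \sigma_i$ with an error of order $\sum_i \sigma_i^2$.

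Finally, I would record the structural consequence that makes the approximation useful for optimization. Because every $r_i$ is strictly positive, the inequality $\sum_i \log(1-\sigma_i) \le -\sum_i \sigma_i$ holds exactly, and both sides are strictly decreasing in each $\sigma_i$. Hence minimizing the simplified trace objective $-\sum_i \sigma_i$ of Eq.~(\ref{eq:FMCA_our}) drives the eigenvalues in the same direction as minimizing the original log-determinant objective, equivalently maximizing the truncated TSD $T$ of Lemma~\ref{lemma_1}, so the surrogate preserves the intended maximization of nonlinear dependence while discarding the matrix-logarithm.

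The only genuine subtlety, and the point I would treat most carefully, is the behavior near the boundary $\sigma_i \to 1$, where $\log(1-\sigma_i) \to -\infty$ while $-\sigma_i \to -1$, so the remainder bound $\sigma_i^2/(1-\sigma_i)$ blows up and the first-order truncation degrades. The hypothesis $\sigma_i \in (0,1)$ explicitly excludes the degenerate constant mode ($\sigma_1 = 1$), and I would emphasize that in the regime of interest the nontrivial eigenvalues carrying the dependence signal remain bounded away from $1$, keeping the remainder controlled and the approximation tight.
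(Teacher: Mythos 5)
Your proposal is correct and follows the same route as the paper: the lemma is precisely the Maclaurin expansion of $\log(1-x)$ truncated at first order, and the paper offers no formal proof beyond writing out that expansion together with the informal remark that higher-order terms decay rapidly when the $\sigma_i$ are small. What you add goes beyond the paper's justification in a useful way: the closed-form remainder bound $0 < r_i \le \sigma_i^2/\bigl(2(1-\sigma_i)\bigr)$ obtained from the geometric tail, the resulting exact one-sided inequality $\sum_{i=1}^K \log(1-\sigma_i) \le -\sum_{i=1}^K \sigma_i$ (so the trace surrogate is not merely an approximation but a monotone upper bound on the log-sum), and the observation that both quantities are strictly decreasing in each $\sigma_i$, so minimizing the trace surrogate pushes the eigenspectrum in the same direction as the original log-determinant objective. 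Your handling of the boundary case $\sigma_i \to 1$ is also consistent with the lemma's hypothesis $\sigma_i \in (0,1)$, which excludes the trivial constant mode; there are no gaps.
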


This shows that the trace-based objective in Eq.~(\ref{eq:FMCA_final_loss}) can be interpreted as a first-order approximation of the original log-determinant-based dependence measure. Since the eigenvalues \( \sigma_i \) are often small in practice (especially in high-dimensional or weakly correlated modalities), higher-order terms in the expansion decay rapidly, making the approximation sufficiently accurate while avoiding the numerical instability of computing log-determinants and eigenvalues.

\subsubsection{Three Modality Framework}
After introducing the trace-based loss, we detail a neural implementation for estimating \( I(\text{pair}; \text{third}) \) using FMCA. As an illustrative example, Fig.~\ref{fig:FMFC_framework} shows the estimation of I((EEG, EOG); Temperature).

EEG and EOG signals are first encoded by two modality-specific encoders
$f_\theta$ and $g_\phi$, producing embeddings $e_1, e_2 \in \mathbb{R}^K$, respectively. These embeddings are then fused through a fusion network to form a joint representation $e_{12} \in \mathbb{R}^K$, which serves as the representation of the paired modalities (EEG, EOG). 
In our default implementation, the fusion network is instantiated as a lightweight pairwise MLP projection head. Each modality encoder outputs a fixed 128-dimensional embedding. For each cyclic pair, two modality embeddings are concatenated into a 256-dimensional vector and projected back to the shared 128-dimensional space by a two-layer MLP. The MLP consists of Linear(256, 512), Batch Normalization, ReLU, Linear(512, 128), and Batch Normalization, with no dropout. The three fused pair representations, denoted as $e_{12}$, $e_{13}$, and $e_{23}$, are produced by projection heads with the same architecture but separate parameters, enabling each modality pair to learn pair-specific interactions. These fusion heads are used only during MFMC pretraining to construct paired representations for estimating the pair-to-third dependence terms, and are discarded during downstream inference, where only the single-modality encoders are retained.

We adopt this lightweight MLP design because the fusion head is not intended to serve as a downstream multimodal classifier. Instead, its role is to provide a compact parameterization of paired modality embeddings within the cyclic DTC-grounded FMCA objective. The main representation learning capacity comes from the modality encoders and the cyclic dependence maximization across all pair-to-third terms. Therefore, the MLP fusion head provides a sufficient and parameter-efficient way to model pairwise cross-modality interactions in our setting, while avoiding unnecessary architectural complexity. We further evaluate lightweight attention-based fusion variants in Appendix~C-C.

In parallel, the temperature signal is encoded by a third encoder $h_\psi$ to produce $e_3 \in \mathbb{R}^K$. The dependence between the paired representation $e_{12}$ and the third modality representation $e_3$ is then maximized using the trace-based FMCA objective in Eq.~(\ref{eq:FMCA_final_loss}), where the auto-covariance matrices $R$ and cross-covariance matrix $P$ are estimated empirically from mini-batch samples of $\{e_{12}, e_3\}$.

The same procedure is applied cyclically to estimate \(I((\mathrm{EEG}, \mathrm{Temperature}); \mathrm{EOG})\) and \(I((\mathrm{EOG}, \mathrm{Temperature}); \mathrm{EEG})\).
Consequently, the final tri-modal training objective is given by:
\begin{equation}
\begin{aligned}
\mathcal{L}_{\theta,\phi,\psi} &=
\mathrm{tr}(R_{12}^{-1} P_{12,3} R_3^{-1} P_{12,3}^\top)
+ \mathrm{tr}(R_{13}^{-1} P_{13,2} R_2^{-1} P_{13,2}^\top) \\
&\quad + \mathrm{tr}(R_{23}^{-1} P_{23,1} R_1^{-1} P_{23,1}^\top),
\end{aligned}
\end{equation}
In implementation, each autocorrelation matrix \(R\) in Eq.~(23) is replaced by its ridge-regularized version \(\widetilde{R}\). After pretraining, the fusion heads are discarded and only the modality-specific encoders \(f_\theta\), \(g_\phi\), and \(h_\psi\) are retained for downstream emotion recognition. Since each encoder is optimized through cyclic pair-to-third dependence maximization, it learns cross-modality-informed representations and can be used independently at inference time. Thus, MFMC supports single-modality deployment under missing-modality or resource-constrained conditions, without positive-negative pair construction or modality-specific augmentations.

Algorithm~1 summarizes the full MFMC training and inference workflow, including the generalized leave-one-out cyclic dependence maximization and the downstream use of single-modality encoders.

\begin{algorithm}[t]
\caption{Generalized MFMC training and inference workflow}
\label{alg:mfmc_general}
\begin{algorithmic}[1]
\STATE \textbf{Input:} synchronized mini-batches \(\{X_i\}_{i=1}^{M}\) with batch size \(B\), modality encoders \(\{f_i\}_{i=1}^{M}\), fusion module \(F\), and shared embedding dimension \(K\)
\FOR{each training mini-batch}
\STATE Compute \(e_i=f_i(X_i)\in\mathbb{R}^{B\times K}\) for \(i=1,\ldots,M\)
\STATE Choose \(S\subseteq\{1,\ldots,M\}\); use \(S=\{1,\ldots,M\}\) for full-sum training
\FOR{each \(i\in S\)}
\STATE Fuse \(\{e_j\}_{j\neq i}\) into \(e_{[M]\setminus\{i\}}=F(\{e_j\}_{j\neq i})\in\mathbb{R}^{B\times K}\)
\STATE Estimate \(R_{[M]\setminus\{i\}}\), \(R_i\), and \(P_{[M]\setminus\{i\},i}\) from the mini-batch
\STATE Compute \(\ell_i=\mathrm{tr}\!\left(\widetilde{R}_{[M]\setminus\{i\}}^{-1}P_{[M]\setminus\{i\},i}\widetilde{R}_i^{-1}P_{[M]\setminus\{i\},i}^{\top}\right)\)
\ENDFOR
\STATE Set \(\mathcal{J}_{\mathrm{MFMC}}=\frac{M}{|S|}\sum_{i\in S}\ell_i\) and update the encoder and fusion parameters to maximize it
\ENDFOR
\STATE Discard \(F\), retain the modality encoders, and train the downstream classifier on top of any single-modality encoder
\end{algorithmic}
\end{algorithm}

\noindent\textbf{Computational complexity.}
The computational cost follows directly from Algorithm~1. For one leave-one-out term, forming the sample autocorrelation and cross-covariance matrices with batch size $B$ and embedding dimension $K$ costs $\mathcal{O}(BK^2)$, and the matrix inverse or linear solve in the trace objective costs $\mathcal{O}(K^3)$. Full-sum MFMC therefore scales as $\mathcal{O}(M(BK^2+K^3))$ per mini-batch. The dominant memory cost scales as $\mathcal{O}(MBK+MK^2)$, accounting for modality embeddings and covariance matrices. In the tri-modal setting used in our main experiments, all cyclic terms are summed exactly. For larger $M$, the same leave-one-out objective can also be approximated by stochastic term sampling, where only a subset $S$ of target modalities is evaluated in each mini-batch. Additional implementation details and the DEAP scalability study up to five modalities, including a 5-modality sampled variant, are provided in Appendix~C-E.

\section{Experiments and Results}
\label{sec:experiments}

\subsection{Datasets}
We use three public affective computing datasets, each containing at least three simultaneously recorded physiological signal modalities. A generic procedure is applied to select the most informative channels from the desired modalities in each dataset (see Sec.~\ref{sec:impl}). We exclude the widely used SEED~\cite{seed} and DREAMER~\cite{dreamer}, as they contain only two signal modalities.

\textbf{DEAP-Emotion}~\cite{deap} includes 32 EEG and 8 peripheral channels recorded from 32 participants (16 male, 16 female) watching 40 one-minute music videos, each rated on 1--9 Self-Assessment Manikin (SAM) valence and arousal scales. 
Signals were acquired in a controlled laboratory setting using a BioSemi ActiveTwo system at 512\,Hz (downsampled to 128\,Hz), with 32 active Ag/AgCl electrodes placed according to the international 10--20 system and peripheral sensors capturing EOG, EMG, GSR, respiration, blood volume pulse (BVP), and skin temperature; frontal face video was additionally recorded for most subjects~\cite{deap}. The 40 stimuli span all four valence--arousal quadrants, and each trial followed a standardized baseline--video--rating protocol with self-reports of valence, arousal, dominance, liking, and familiarity. DEAP is a canonical EEG-based affective-computing benchmark widely used by deep and self-supervised emotion-recognition models~\cite{song2018eeg,zhang2022ganser,wang2023self}. 

We focus on central (EEG) and ocular/thermal (EOG, skin temperature) channels that are most relevant to affective processing (see Sec.~\ref{sec:impl_modality_selection}).

\textbf{CEAP-360VR}~\cite{ceap} provides only peripheral signals sampled at $\sim$30\,Hz from 32 participants viewing eight 360\textdegree{} videos. Available channels include blood volume pulse (BVP), electrodermal activity (EDA), skin temperature (SKT), heart rate (HR), inter-beat interval (IBI), and three-axis acceleration. 
Participants viewed the clips in VR while continuous valence--arousal annotations were collected with a joystick, followed by within-VR SAM ratings after each clip~\cite{ceap}. Physiological responses were recorded with an Empatica E4 wristband on the non-dominant hand, providing synchronized heart rate, EDA, and SKT measurements alongside head- and eye-movement streams. CEAP-360VR has been used for valence/arousal classification, correlation-based feature extraction, and multimodal physiological emotion recognition in immersive media~\cite{zhang2019,yang2021b,ramadan2024mm}.

In our experiments we focus on BVP, EDA, and SKT as the most reliable and emotion relevant modalities.

\textbf{MAHNOB-HCI}~\cite{Soleymani2012MAHNOB} records EEG, ECG, galvanic skin response (GSR/EDA), respiration, SKT, audio, video, and eye-tracking from 30 participants during movie viewing. 
Data were collected with synchronized video, audio, eye tracking, 32-channel EEG, and peripheral sensors including ECG, GSR, respiration, and SKT~\cite{Soleymani2012MAHNOB}. In the emotion-recognition experiment considered here, 27 valid participants watched 20 emotional film clips and, after each clip, reported their felt emotion using dimensional ratings (valence, arousal, dominance, predictability) and free-form emotion keywords, enabling both dimensional and categorical analyses. MAHNOB-HCI is a key multimodal affect benchmark and has been used to evaluate hypercomplex fusion models on EEG and peripheral physiological signals~\cite{lopez2023hypercomplex,lopez2024}.

We use EEG together with a subset of peripheral channels (cardiac and electrodermal signals) that capture autonomic arousal; see Sec.~\ref{sec:impl_modality_selection} for the exact trimodal configuration.

% \textbf{DEAP-Emotion}~\cite{deap} includes 32 EEG and 8 peripheral channels recorded from 32 participants (16 male, 16 female) watching 40 one-minute music videos, each rated on 1--9 Self-Assessment Manikin (SAM) valence and arousal scales. We focus on central (EEG) and ocular/thermal (EOG, skin temperature) channels that are most relevant to affective processing (see Sec.~\ref{sec:impl_modality_selection}).

% \textbf{CEAP-360VR}~\cite{ceap} provides only peripheral signals sampled at $\sim$30\,Hz from 32 participants viewing eight 360\textdegree{} videos. Available channels include blood volume pulse (BVP), electrodermal activity (EDA), skin temperature (SKT), heart rate (HR), inter-beat interval (IBI), and three-axis acceleration. In our experiments we focus on BVP, EDA, and SKT as the most reliable and emotion-relevant modalities.

% \textbf{MAHNOB-HCI}~\cite{Soleymani2012MAHNOB} records EEG, ECG, galvanic skin response (GSR/EDA), respiration, SKT, audio, video, and eye-tracking from 30 participants during movie viewing. We use EEG together with a subset of peripheral channels (cardiac and electrodermal signals) that capture autonomic arousal; see Sec.~\ref{sec:impl_modality_selection} for the exact trimodal configuration.

Across all datasets, valence and arousal ratings are binarized at the midpoint (5), forming four emotional quadrants:
\textsc{LAHV} (low arousal, high valence),
\textsc{HALV} (high arousal, low valence),
\textsc{LALV} (low arousal, low valence),
and \textsc{HAHV} (high arousal, high valence),
yielding a 4-class classification task.

\subsection{Implementation Details}
\label{sec:impl}

\subsubsection{\textbf{Temporal windowing}}
\label{sec:impl_window}

\begin{figure}[t]
    \centering
    \includegraphics[width=1\linewidth]{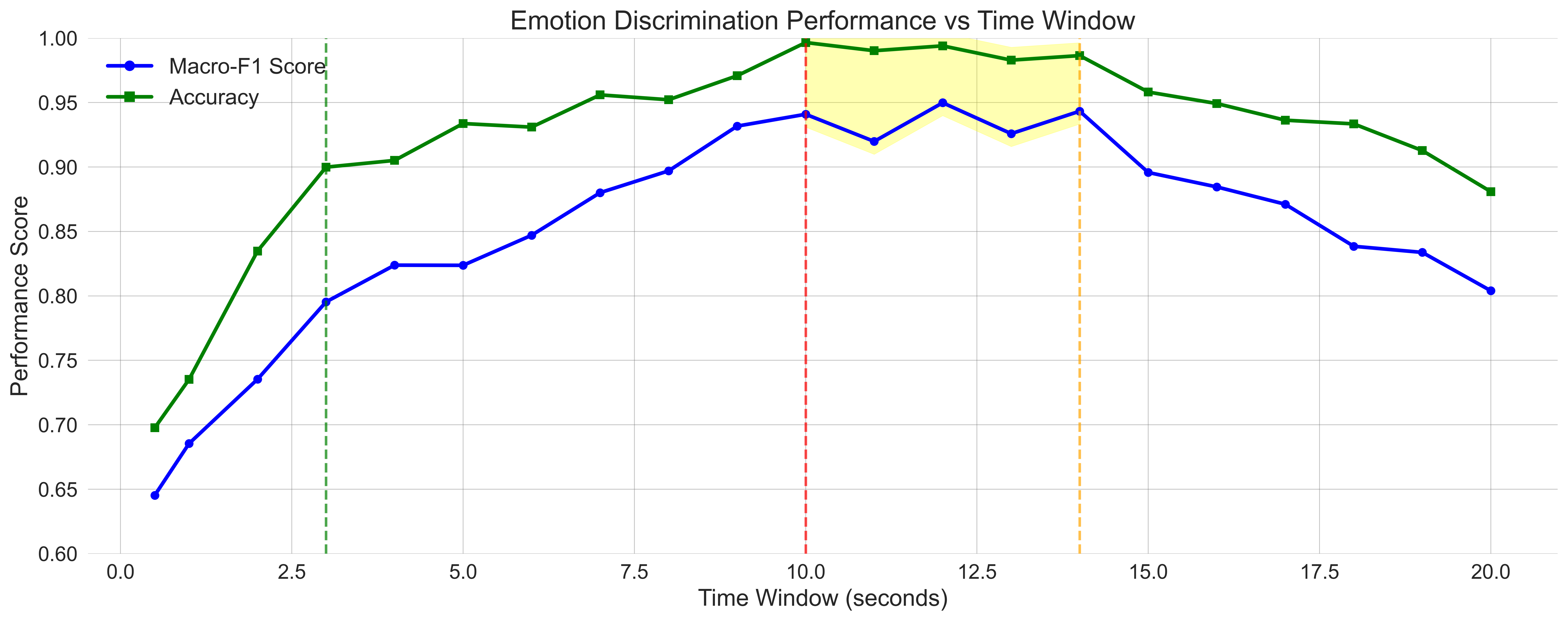}
    \caption{Emotion-discrimination performance versus window length.
    Both macro-$F_{1}$ (blue) and accuracy (green) peak at 10 s, remain
    flat up to 14 s, and deteriorate beyond that.
    Very short windows ($<3$ s) perform markedly worse.}
    \label{fig:performaneVSTime}
\end{figure}

Selecting appropriate temporal windows is critical in multimodal physiological tasks to balance temporal context, sample count, and nonstationarity. In a DEAP pre-experiment, we first performed a window-size sweep over durations from 0.5 s to 20 s. For shorter windows ($\leq 5$\,s) we used nonoverlapping segmentation to avoid excessive redundancy and highly correlated samples; for longer windows ($> 5$\,s) we adopted a stride overlap of 40\% (i.e., stride $= 0.6 \times$ window length) to maintain an adequate sample count while preventing overly high correlation between adjacent segments. Empirical results (see Fig.~\ref{fig:performaneVSTime}) show an increase in performance up to about 10\,s window length, a plateau from 10--14\,s, and a decrease beyond that point, indicating that $\sim$10\,s captures sufficient temporal context without incurring excessive nonstationarity or diluted temporal resolution. This is consistent with prior work showing that window size has a marked effect on emotion recognition performance~\cite{keelawat2021}.

Unless otherwise noted, all subsequent experiments use 10 s windows, but the final preprocessing is dataset specific. For DEAP, windows start 3 s after stimulus onset and are extracted with a fixed stride of 0.4 s (51 samples at 128 Hz). EEG, EOG, and skin temperature (SKT) are rescaled by modality-specific constants computed once from the first available DEAP window, and a window is discarded if any normalized EEG, EOG, or SKT value exceeds ±5. For CEAP-360VR, physiological streams are first interpolated to the annotation timeline, then segmented into 10 s windows with a 0.5 s stride (15 frames at approximately 30 Hz). ACC, BVP, EDA, SKT, HR, and IBI are normalized by modality-specific constants computed from the first available window, and windows with normalized values outside ±5 are discarded. For MAHNOB-HCI, we discard the initial 30 s neutral baseline, segment the remaining recording into 10 s windows with a 0.5 s stride (128 samples at 256 Hz), normalize each modality by the magnitude of its first available window, and discard windows containing values outside ±5 after normalization. This procedure yields 20,097 windows for DEAP, 25,216 segments for CEAP-360VR, and 105,000 segments for MAHNOB-HCI.

We further conducted a one-factor DEAP sensitivity study over window length, stride, normalization, and outlier threshold. For efficiency, it uses one fixed subject-dependent split and one fixed subject-independent split, reporting the best test accuracy and macro-F1 during training; details are provided in Appendix~C-A as qualitative trend diagnostics rather than replacements for the official 5-fold results.

\subsubsection{\textbf{Modality selection}}
\label{sec:impl_modality_selection}

\begin{figure}
    \centering
    \includegraphics[width=\linewidth]{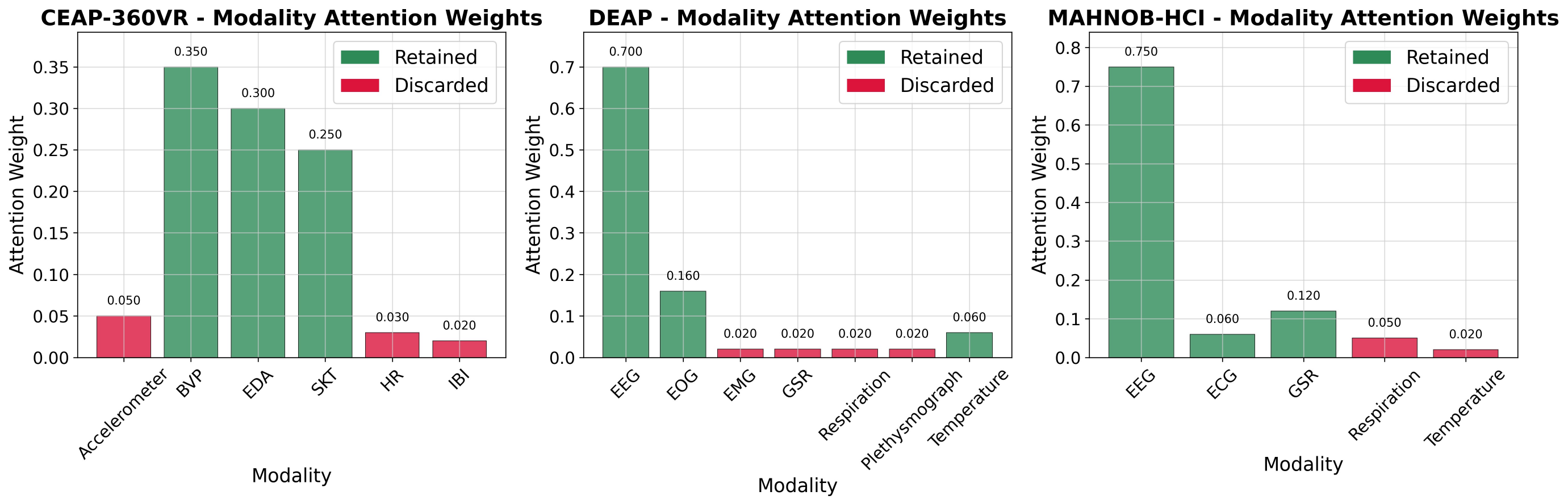}
    \caption{Learnable attention weights for physiological signal selection across CEAP-360VR, DEAP, and MAHNOB-HCI datasets. Green bars indicate retained modalities, red bars indicate discarded modalities based on learned importance scores. Retained modalities are selected by the top 3 ranking rule rather than by a fixed absolute threshold.}
    \label{fig:sig_selection}
\end{figure}

To select informative modalities and suppress noisy or redundant ones, we implement a learnable attention mask at the modality-fusion layer. Each modality embedding is associated with a softmax-normalized scalar weight that is trained jointly with the classification network. After convergence, we rank modalities by the final attention weights and retain the highest-scoring subset for each dataset. Specifically, we do not use a manually specified absolute numerical threshold. Since all downstream MFMC experiments are conducted with three active modalities, the retained subset is determined by a top 3 ranking rule: after convergence, the three modality groups with the largest learned softmax weights are selected. The same gate definition, training objective, and selection rule are used for DEAP, CEAP-360VR, and MAHNOB-HCI, with the candidate modality groups defined separately for each dataset. Additional implementation details for the modality-selection protocol are provided in Appendix~C-A. This follows recent multimodal literature showing that attention at the modality level allows the model to emphasize informative modalities and down-weight uninformative ones~\cite{mamieva2023multimodal}.

Guided by this attention analysis, we fix the trimodal inputs per dataset as follows and use these configurations for all subsequent experiments: 
\begin{itemize}
    \item \textbf{DEAP}: \textit{EEG}, \textit{EOG}, and skin temperature (\textit{SKT});
    \item \textbf{CEAP-360VR}: \textit{EDA}, \textit{BVP}, and \textit{SKT};
    \item \textbf{MAHNOB-HCI}: \textit{EEG}, \textit{ECG}, and \textit{GSR}.
\end{itemize}

\subsubsection{\textbf{Encoder architecture}}
\label{sec:impl_encoder}
To ensure fairness, all self-supervised and supervised methods share the same modality-specific backbone. For multichannel signals such as EEG or EOG, the encoder applies four \emph{depth-wise} 1D convolutional blocks along the time axis, reusing the same kernel across channels. Each block ends with a stride-4 max-pooling operation, resulting in a total temporal downsampling factor of $4^4 = 256$. At 128\,Hz with 10\,s windows, each segment thus contains 1{,}280 time samples per channel, which are progressively reduced to compact feature maps.

The resulting per-channel features are flattened, concatenated, and passed through a lightweight fusion MLP (with one hidden layer) to model cross-channel interactions, yielding a 128-dimensional modality embedding suitable for multimodal alignment. For univariate signals with a single channel, such as skin temperature, we use the same four-block temporal CNN (omitting the cross-channel fusion step) and project the flattened features directly into the shared 128-dimensional space.

\subsubsection{\textbf{Baselines and training protocol}}
\label{sec:impl_baselines}
We term our approach the Multimodal Functional Maximum Correlation (MFMC) algorithm. We compare it against three families of baselines.

\paragraph{\textbf{Unimodal self-supervised learning}}
We first benchmark three \emph{uni-modal} SSL methods applied independently to each modality: SimCLR~\cite{chen2020simple}, Barlow Twins~\cite{zbontar2021barlow}, and VICReg~\cite{bardes2022vicreg}. Each method forms positive pairs from two augmented views of the same signal window using Gaussian noise, temporal shifts, and channel dropout~\cite{iwana2021}. The rest of the mini-batch serves as negatives when required.

\paragraph{\textbf{Multimodal self-supervised learning}}
For the bi-modal setting ($K=2$), we evaluate FMCA~\cite{hu2022normalized} and CLIP~\cite{radford2021learning}. For the tri-modal setting ($K=3$), we evaluate SymILE~\cite{saporta2024contrasting}, a recent SOTA method that maximizes a CLIP-like lower bound on total correlation (TC), and our CLIP++ baseline, which aggregates all pairwise CLIP losses by summing three InfoNCE terms ($x_1{\leftrightarrow}x_2$, $x_1{\leftrightarrow}x_3$, $x_2{\leftrightarrow}x_3$). Both SymILE and CLIP++ approximate multiway structure through pairwise or TC-based bounds, unlike the DTC-grounded MFMC objective.

\paragraph{\textbf{Supervised baselines}}
Finally, we compare against three \emph{supervised} approaches: EEGNet~\cite{lawhern2018eegnet}, a vanilla CNN with the same encoder backbone, and HyperFuseNet~\cite{lopez2023hypercomplex}, a SOTA multimodal emotion recognition model that fuses modality-specific representations using a hypercomplex neural network.

\paragraph{\textbf{Training protocol}}
For each contrastive SSL method, we first perform self-supervised pretraining using the full training set of unlabeled windows. After pretraining, modality encoders are frozen, and a lightweight three-layer MLP classifier is trained on top of any encoder for quadrants classification. Supervised baselines are trained end-to-end under the same windowing, modality configurations, and training budget. To isolate the effect of the objective and modality scope, all methods share the same encoders, projection heads, classifier, preprocessing, and augmentation recipes whenever applicable; only the loss function and active modality set differ.

\begin{itemize}[nosep,leftmargin=*]
  \item Ours (MFMC). Objective: maximizes joint dependence via a DTC-grounded surrogate across \(K \geq 3\) modalities, without positives/negatives or CLIP-style bounds. Mechanics: trace-based FMCA for stability and efficiency. What it tests: whether higher-order conditional interactions improve generalization beyond pairwise criteria.
  \item \textbf{Unimodal SSL: SimCLR, Barlow Twins, VICReg.}
        \emph{Objective families:} contrastive with negatives (SimCLR), redundancy reduction (Barlow Twins), and variance--invariance--covariance regularization (VICReg).
        \emph{What they test:} whether strong within-modality representation learning alone can explain our gains, disentangling the role of negatives versus redundancy/variance regularizers.
  \item \textbf{Bimodal SSL: CLIP, FMCA.}
        \emph{Objective:} pairwise alignment either via InfoNCE-style bounds (CLIP) or direct dependence maximization (FMCA).
        \emph{What they test:} whether two-way alignment between modalities is sufficient.
  \item \textbf{Trimodal (pairwise extensions): CLIP++ (sum of pairs).}
        \emph{Objective:} aggregates all pairwise CLIP losses.
        \emph{Limitation:} cannot represent synergy that emerges only when all three modalities are considered jointly.
  \item \textbf{Trimodal (multiway bound): SymILE.}
        \emph{Objective:} CLIP-like lower bound targeting TC across multiple modalities.
        \emph{Trade-offs:} retains positives/negatives and TC overcounting of redundancy, lacking DTC's correction.
  \item \textbf{Supervised: EEGNet, vanilla CNN, HyperFuseNet.} Objective: label-supervised references with task-specific inductive biases. What they test: the gap between self-supervision and full supervision under subject-dependent and subject-independent protocols.
\end{itemize}

For reproducibility, script-level hyperparameters for the self-supervised and multimodal self-supervised baselines are summarized in Appendix~C-B, Table~\ref{tab:ssl_baseline_hyperparams}; supervised baselines follow separate supervised training protocols.

\subsubsection{\textbf{Train--test splits and evaluation protocols}}
\label{sec:impl_splits}
We evaluate performance under two data-partitioning schemes. In the \emph{subject-dependent} setting, trials from all participants are pooled and stratified 5-fold cross-validation is applied, with 80\% of windows used for training and 20\% for testing in each fold. In the \emph{subject-independent} setting, the test set comprises entirely unseen participants: we adopt a 5-fold leave-group-out protocol, training on 80\% of subjects and testing on the remaining 20\%. Subject splits are as follows: DEAP (15/4 train/test subjects), MAHNOB-HCI (21/6), and CEAP-360VR (26/6).

We use classification accuracy as the primary metric and report mean $\pm$ standard deviation across the 5 folds in both subject-dependent and subject-independent settings. For subject-dependent evaluation, predictions are made at the window level. For subject-independent evaluation, all windows from held-out subjects are aggregated into a single test set per fold.

\subsubsection{\textbf{Computing environment}}
\label{sec:impl_hardware}

All experiments were conducted on a Linux server equipped with a single NVIDIA L40S GPU (48\,GB VRAM), dual Intel Xeon Gold 6330 CPUs, and 256\,GB RAM. 
Training MFMC on DEAP for five subject-dependent cross-validation folds required approximately 5 hours wall-clock time, while the subject-independent setup took about 6 hours. 
All models were implemented in Python using PyTorch~\cite{paszke2019pytorch}.
Additional preprocessing, modality-selection, and baseline-training details are provided in Appendix~C-A and Appendix~C-B, and a GitHub repository is made available for reproducibility at \url{https://github.com/DY9910/MFMC}.
A DEAP scalability study with 3-, 4-, and 5-modality MFMC settings, including a 5-modality sampled variant, is reported in Appendix~C-E.

\subsection{Overall Results}

\subsubsection{\textbf{Subject-dependent performance}}
Tables~\ref{tab:eeg_eog_deap}--\ref{tab:eeg_ecg_mahnob} summarize classification accuracies under both subject-dependent and subject-independent settings across the three datasets. In the subject-dependent regime, MFMC achieves strong performance that is consistently competitive with the fully supervised HyperFuseNet while outperforming all other self-supervised baselines.

On \textbf{DEAP}, MFMC attains 0.987 accuracy with EEG and 0.925 with EOG, close to HyperFuseNet's 0.995 and clearly above all other SSL methods (Table~\ref{tab:eeg_eog_deap}). On \textbf{CEAP-360VR}, MFMC improves over all self-supervised baselines and approaches HyperFuseNet for both EDA and BVP (Table~\ref{tab:eda_bvp_ceap}), indicating that tri-modal alignment is beneficial even for purely peripheral signals. On \textbf{MAHNOB-HCI}, MFMC nearly matches HyperFuseNet in the EEG case (0.953 vs.\ 0.955) and substantially outperforms the remaining SSL methods for both EEG and ECG (Table~\ref{tab:eeg_ecg_mahnob}). Overall, MFMC is the only SSL method that consistently performs competitively with a strong supervised SOTA across all datasets and modalities.

\subsubsection{\textbf{Subject-independent performance}}
The subject-independent setting is considerably more challenging: accuracies are substantially lower than in the subject-dependent case, highlighting the difficulty of cross-subject generalization in physiological emotion recognition. Nevertheless, MFMC establishes a new SOTA among all considered methods in 5 out of 6 subject-independent scenarios across datasets and modalities.

\begin{table}[t]
\centering
\caption{Quantitative subject-dependent to subject-independent accuracy degradation of MFMC.}
\begin{tabular}{lcccc}
\toprule
Dataset & Modality & Subj.-Dep. & Subj.-Indep. & Absolute drop \\
\midrule
DEAP & EEG & 0.987 & 0.346 & 0.641 \\
DEAP & EOG & 0.925 & 0.343 & 0.582 \\
CEAP-360VR & EDA & 0.868 & 0.331 & 0.537 \\
CEAP-360VR & BVP & 0.820 & 0.311 & 0.509 \\
MAHNOB-HCI & EEG & 0.953 & 0.442 & 0.511 \\
MAHNOB-HCI & ECG & 0.910 & 0.441 & 0.469 \\
\bottomrule
\end{tabular}
\label{tab:subject_independent_drop}

\vspace{2pt}
Absolute drop is computed as subject-dependent accuracy minus subject-independent accuracy.
\end{table}

Table~\ref{tab:subject_independent_drop} quantifies the degradation from subject-dependent to subject-independent evaluation. The absolute accuracy drop ranges from 0.469 to 0.641 across datasets and target modalities. This drop is expected because the subject-independent protocol evaluates entirely unseen participants with different physiological baselines and response patterns. Nevertheless, MFMC remains competitive, suggesting that it preserves transferable cross-modal information rather than merely memorizing subject-specific patterns.

For example, MFMC achieves the best or statistically tied best subject-independent performance on DEAP EEG, CEAP EDA/BVP, and MAHNOB ECG, while remaining competitive on MAHNOB EEG. Importantly, in many cases MFMC surpasses supervised baselines (including HyperFuseNet and EEGNet), demonstrating that a DTC-grounded self-supervised objective can yield robust and scalable representations that generalize better across subjects than label-intensive training.

% ==========================================================
% DEAP – EEG + EOG FEATURES
% ==========================================================
\begin{table*}[t]
\centering
\caption{Subject-dependent (left) and subject-independent (right) accuracy (\%) on \textbf{DEAP} dataset using EEG and EOG.}
\begin{minipage}{0.48\textwidth}
\centering
\begin{tabular}{llcc}
\toprule
\textbf{Category} & \textbf{Method} &
\textbf{EEG} Subj.-Dep. & \textbf{EOG} Subj.-Dep. \\
\midrule
\rowcolor{blue!10}
\cellcolor{white}\multirow{3}{*}{\textbf{Tri-modal}}
  & MFMC   & \underline{0.987}$\pm$0.009 & \underline{0.925}$\pm$0.005 \\
  & Symile & 0.978$\pm$0.006             & 0.912$\pm$0.006 \\
  & CLIP++ & 0.972$\pm$0.004             & 0.923$\pm$0.006 \\
\midrule
\multirow{2}{*}{\textbf{Bi-modal}}
  & FMCA   & 0.872$\pm$0.005             & 0.810$\pm$0.007 \\
  & CLIP   & 0.960$\pm$0.008             & 0.895$\pm$0.009 \\
\midrule
\multirow{3}{*}{\textbf{Uni-modal}}
  & Barlow Twins & 0.944$\pm$0.008 & 0.880$\pm$0.010 \\
  & SimCLR       & 0.975$\pm$0.011 & 0.910$\pm$0.012 \\
  & VICReg       & 0.953$\pm$0.009 & 0.890$\pm$0.011 \\
\midrule
\multirow{3}{*}{\textbf{Supervised}}
  & HyperFuseNet & \textbf{0.995}$\pm$0.005 & \textbf{0.995}$\pm$0.005 \\
  & EEGNet       & 0.958$\pm$0.007          & 0.890$\pm$0.008 \\
  & CNN          & 0.942$\pm$0.010          & 0.910$\pm$0.009 \\
\bottomrule
\end{tabular}
\end{minipage}
\hfill
\begin{minipage}{0.48\textwidth}
\centering
\begin{tabular}{llcc}
\toprule
\textbf{Category} & \textbf{Method} &
\textbf{EEG} Subj.-Indep. & \textbf{EOG} Subj.-Indep. \\
\midrule
\rowcolor{blue!10}
\cellcolor{white}\multirow{3}{*}{\textbf{Tri-modal}}
  & MFMC   & \textbf{0.346}$\pm$0.030 & \textbf{0.343}$\pm$0.028 \\
  & Symile & 0.341$\pm$0.032          & 0.314$\pm$0.025 \\
  & CLIP++ & 0.307$\pm$0.031          & 0.272$\pm$0.050 \\
\midrule
\multirow{2}{*}{\textbf{Bi-modal}}
  & FMCA   & \underline{0.343}$\pm$0.035 & 0.279$\pm$0.029 \\
  & CLIP   & 0.311$\pm$0.028             & 0.269$\pm$0.025 \\
\midrule
\multirow{3}{*}{\textbf{Uni-modal}}
  & Barlow Twins & 0.243$\pm$0.023 & 0.251$\pm$0.026 \\
  & SimCLR       & 0.299$\pm$0.022 & 0.263$\pm$0.031 \\
  & VICReg       & 0.327$\pm$0.026 & 0.275$\pm$0.027 \\
\midrule
\multirow{3}{*}{\textbf{Supervised}}
  & HyperFuseNet & 0.341$\pm$0.030          & \underline{0.341}$\pm$0.030 \\
  & EEGNet       & 0.275$\pm$0.025          & 0.266$\pm$0.022 \\
  & CNN          & 0.256$\pm$0.027          & 0.263$\pm$0.027 \\
\bottomrule
\end{tabular}
\end{minipage}
\label{tab:eeg_eog_deap}
\end{table*}

% ==========================================================
% CEAP-360VR – EDA + BVP FEATURES
% ==========================================================
\begin{table*}[t]
\centering
\caption{Subject-dependent (left) and subject-independent (right) accuracy (\%) on \textbf{CEAP-360VR} dataset using EDA and BVP.}
\begin{minipage}{0.48\textwidth}
\centering
\begin{tabular}{llcc}
\toprule
\textbf{Category} & \textbf{Method} &
\textbf{EDA} Subj.-Dep. & \textbf{BVP} Subj.-Dep. \\
\midrule
\rowcolor{blue!10}
\cellcolor{white}\multirow{3}{*}{\textbf{Tri-modal}}
  & MFMC   & \underline{0.868}$\pm$0.003 & \underline{0.820}$\pm$0.004 \\
  & Symile & 0.789$\pm$0.006             & 0.780$\pm$0.007 \\
  & CLIP++ & 0.853$\pm$0.005             & 0.743$\pm$0.003 \\
\midrule
\multirow{2}{*}{\textbf{Bi-modal}}
  & FMCA   & 0.845$\pm$0.005             & \underline{0.795}$\pm$0.005 \\
  & CLIP   & 0.843$\pm$0.003             & 0.790$\pm$0.006 \\
\midrule
\multirow{3}{*}{\textbf{Uni-modal}}
  & Barlow Twins & 0.638$\pm$0.008 & 0.610$\pm$0.009 \\
  & SimCLR       & 0.655$\pm$0.011 & 0.630$\pm$0.011 \\
  & VICReg       & 0.482$\pm$0.009 & 0.550$\pm$0.010 \\
\midrule
\multirow{3}{*}{\textbf{Supervised}}
  & HyperFuseNet & \textbf{0.887}$\pm$0.005 & \textbf{0.887}$\pm$0.005 \\
  & EEGNet       & 0.850$\pm$0.006          & 0.750$\pm$0.006 \\
  & CNN          & 0.720$\pm$0.009          & 0.650$\pm$0.008 \\
\bottomrule
\end{tabular}
\end{minipage}
\hfill
\begin{minipage}{0.48\textwidth}
\centering
\begin{tabular}{llcc}
\toprule
\textbf{Category} & \textbf{Method} &
\textbf{EDA} Subj.-Indep. & \textbf{BVP} Subj.-Indep. \\
\midrule
\rowcolor{blue!10}
\cellcolor{white}\multirow{3}{*}{\textbf{Tri-modal}}
  & MFMC   & \textbf{0.331}$\pm$0.020 & \textbf{0.311}$\pm$0.028 \\
  & Symile & 0.275$\pm$0.024          & 0.270$\pm$0.024 \\
  & CLIP++ & 0.292$\pm$0.028          & \underline{0.303}$\pm$0.030 \\
\midrule
\multirow{2}{*}{\textbf{Bi-modal}}
  & FMCA   & 0.291$\pm$0.025 & 0.280$\pm$0.024 \\
  & CLIP   & 0.302$\pm$0.030 & 0.292$\pm$0.032 \\
\midrule
\multirow{3}{*}{\textbf{Uni-modal}}
  & Barlow Twins & 0.283$\pm$0.028 & 0.270$\pm$0.028 \\
  & SimCLR       & \underline{0.328}$\pm$0.030 & 0.300$\pm$0.031 \\
  & VICReg       & 0.291$\pm$0.026 & 0.282$\pm$0.026 \\
\midrule
\multirow{3}{*}{\textbf{Supervised}}
  & HyperFuseNet & 0.291$\pm$0.030 & 0.291$\pm$0.030 \\
  & EEGNet       & 0.295$\pm$0.023 & 0.285$\pm$0.024 \\
  & CNN          & 0.270$\pm$0.028 & 0.275$\pm$0.026 \\
\bottomrule
\end{tabular}
\end{minipage}
\label{tab:eda_bvp_ceap}
\end{table*}

% ==========================================================
% MAHNOB HCI-Tagging – EEG + ECG FEATURES
% ==========================================================
\begin{table*}[t]
\centering
\caption{Subject-dependent (left) and subject-independent (right) accuracy (\%) on \textbf{MAHNOB HCI-Tagging} dataset using EEG and ECG.}
\begin{minipage}{0.48\textwidth}
\centering
\begin{tabular}{llcc}
\toprule
\textbf{Category} & \textbf{Method} &
\textbf{EEG} Subj.-Dep. & \textbf{ECG} Subj.-Dep. \\
\midrule
\rowcolor{blue!10}
\cellcolor{white}\multirow{3}{*}{\textbf{Tri-modal}}
  & MFMC   & \underline{0.953}$\pm$0.003 & \underline{0.910}$\pm$0.004 \\
  & Symile & 0.882$\pm$0.006             & 0.850$\pm$0.006 \\
  & CLIP++ & 0.853$\pm$0.009             & 0.842$\pm$0.002 \\
\midrule
\multirow{2}{*}{\textbf{Bi-modal}}
  & FMCA   & 0.889$\pm$0.005             & 0.820$\pm$0.005 \\
  & CLIP   & 0.927$\pm$0.015             & 0.880$\pm$0.010 \\
\midrule
\multirow{3}{*}{\textbf{Uni-modal}}
  & Barlow Twins & 0.935$\pm$0.008 & 0.870$\pm$0.009 \\
  & SimCLR       & 0.931$\pm$0.011 & 0.860$\pm$0.011 \\
  & VICReg       & 0.822$\pm$0.009 & 0.780$\pm$0.010 \\
\midrule
\multirow{3}{*}{\textbf{Supervised}}
  & HyperFuseNet & \textbf{0.955}$\pm$0.005 & \textbf{0.955}$\pm$0.005 \\
  & EEGNet       & 0.920$\pm$0.007          & 0.855$\pm$0.007 \\
  & CNN          & 0.902$\pm$0.010          & 0.800$\pm$0.009 \\
\bottomrule
\end{tabular}
\end{minipage}
\hfill
\begin{minipage}{0.48\textwidth}
\centering
\begin{tabular}{llcc}
\toprule
\textbf{Category} & \textbf{Method} &
\textbf{EEG} Subj.-Indep. & \textbf{ECG} Subj.-Indep. \\
\midrule
\rowcolor{blue!10}
\cellcolor{white}\multirow{3}{*}{\textbf{Tri-modal}}
  & MFMC   & \underline{0.442}$\pm$0.024 & \textbf{0.441}$\pm$0.022 \\
  & Symile & \textbf{0.450}$\pm$0.034    & \underline{0.430}$\pm$0.025 \\
  & CLIP++ & 0.432$\pm$0.031             & 0.352$\pm$0.040 \\
\midrule
\multirow{2}{*}{\textbf{Bi-modal}}
  & FMCA   & 0.435$\pm$0.025 & 0.410$\pm$0.026 \\
  & CLIP   & 0.441$\pm$0.029 & 0.425$\pm$0.028 \\
\midrule
\multirow{3}{*}{\textbf{Uni-modal}}
  & Barlow Twins & 0.404$\pm$0.028 & 0.380$\pm$0.025 \\
  & SimCLR       & 0.348$\pm$0.031 & 0.350$\pm$0.020 \\
  & VICReg       & 0.431$\pm$0.026 & 0.400$\pm$0.028 \\
\midrule
\multirow{3}{*}{\textbf{Supervised}}
  & HyperFuseNet & 0.320$\pm$0.026 & 0.320$\pm$0.026 \\
  & EEGNet       & 0.410$\pm$0.026 & 0.395$\pm$0.027 \\
  & CNN          & 0.390$\pm$0.028 & 0.370$\pm$0.030 \\
\bottomrule
\end{tabular}
\end{minipage}
\label{tab:eeg_ecg_mahnob}
\end{table*}

\subsection{Comparison with SOTA methods}
\subsubsection{\textbf{Comparison with task-specific SOTA}}
HyperFuseNet~\cite{lopez2023hypercomplex} and related supervised fusion models are strong task-specific SOTA baselines on multimodal physiological emotion recognition. Our results show that MFMC, despite being self-supervised and using an augmentation-light objective, matches or closely approaches HyperFuseNet in the subject-dependent setting and surpasses it in most subject-independent scenarios. Compared to previously reported results on DEAP, CEAP-360VR, and MAHNOB-HCI under similar label definitions and modality subsets, MFMC therefore offers a competitive and often superior alternative while requiring no affect labels during pretraining. This suggests that DTC-grounded multimodal self-supervision can serve as a powerful drop-in replacement for heavily engineered, label-intensive pipelines.

\begin{table}[t]
\centering
\caption{Comparison between MFMC (self-supervised) and the best supervised baseline on the representative target modality for each dataset for 4-class valence/arousal classification.}
\begin{tabular}{llccc}
\toprule
\textbf{Protocol} & \textbf{Dataset} & \textbf{MFMC} & \textbf{Best Sup.} & $\Delta$ (MFMC--Sup.) \\
\midrule
\multirow{3}{*}{Subj.-Dep.}
  & DEAP        & 0.987 & 0.995 & -0.008 \\
  & CEAP-360VR  & 0.868 & 0.887 & -0.019 \\
  & MAHNOB-HCI  & 0.953 & 0.955 & -0.002 \\
\midrule
\multirow{3}{*}{Subj.-Indep.}
  & DEAP        & 0.346 & 0.341 & +0.005 \\
  & CEAP-360VR  & 0.331 & 0.295 & +0.036 \\
  & MAHNOB-HCI  & 0.442 & 0.410 & +0.032 \\
\bottomrule
\end{tabular}
\label{tab:mfmc_vs_sup}
\end{table}

\subsubsection{\textbf{Comparison with supervised learning}}
\label{sec:mfmc_vs_supervised}

Table~\ref{tab:mfmc_vs_sup} compares MFMC with the best supervised baseline using the representative target modality for each dataset, namely EEG for DEAP and MAHNOB-HCI and EDA for CEAP-360VR. In the subject-dependent setting, MFMC attains accuracies that are consistently within 0.2--1.9 percentage points of the strongest supervised model (e.g., 0.987 vs.\ 0.995 on DEAP and 0.953 vs.\ 0.955 on MAHNOB-HCI), indicating that self-supervised pretraining recovers almost all of the subject-specific discriminative power of fully supervised training. In the more challenging subject-independent setting, MFMC not only closes this gap but surpasses supervised learning on every dataset, reaching 0.346 vs.\ 0.341 on DEAP, 0.331 vs.\ 0.295 on CEAP-360VR, and 0.442 vs.\ 0.410 on MAHNOB-HCI. Similar trends are observed for peripheral modalities (EOG, EDA/BVP, ECG) in Tables~\ref{tab:eeg_eog_deap}--\ref{tab:eeg_ecg_mahnob}, where MFMC remains close to supervised performance on subject-dependent splits while providing consistently stronger generalization across subjects.

\subsection{Ablation Studies}

%\subsubsection{Loss-function ablation: MFMC vs.~High-order InfoNCE vs.~FMCA-LogDet}
To assess the importance of the MFMC objective, we keep the entire framework unchanged---encoders, projection heads, optimizer settings, training loop and substitute the loss term by two alternatives.

\begin{itemize}[nosep,leftmargin=*]
  \item \textbf{High-order InfoNCE}  
        We use exactly the same architecture as MFMC, which essentially maximizes the sum 
        \( I(x_1, x_2; x_3) + I(x_1, x_3; x_2) + I(x_2, x_3; x_1) \). 
        However, instead of our proposed trace-based FMCA objective, we use InfoNCE to estimate the three joint mutual information terms.
  \item \textbf{FMCA-LogDet}  
        We replace the trace formulation (e.g., Eq.~(19) in the main manuscript) with the log-determinant surrogate~\cite{hu2022normalized} (e.g., Eq.~(10) in the main manuscript), with a ridge term \( \varepsilon=10^{-5} \) for stability.
\end{itemize}

Results in Table~\ref{tab:ablation_accuracy} support our claim that the DTC-grounded trace surrogate yields smoother optimization dynamics and superior downstream performance compared with high-order InfoNCE and log-determinant FMCA objectives. On the DEAP dataset, MFMC outperforms the strongest high-order InfoNCE variant by 1.1 percentage points and the FMCA-LogDet surrogate by 12.4 percentage points in the subject-dependent setting (Table~\ref{tab:ablation_accuracy}), while exhibiting smooth and reliable convergence (Fig.~\ref{fig:abl_curves}). To verify that this stability pattern is not specific to DEAP, we further provide a representative CEAP-360VR subject-dependent comparison in Appendix~C-D (Fig.~S4), where the same qualitative ordering is observed. Although InfoNCE converges stably, it saturates at a lower performance level, consistent with theoretical results showing that mutual-information lower-bound estimators tend to flatten as the true mutual information increases~\cite{mcallester2020formal}. The FMCA-LogDet objective initially follows MFMC but becomes numerically unstable as training progresses, owing to ill-conditioned covariance matrices, which causes the loss to diverge and downstream performance to collapse. In contrast, the proposed trace surrogate optimizes a simple sum of eigenvalues and avoids explicit determinants or eigendecompositions. We further provide a ridge coefficient sensitivity diagnostic in Appendix~C-F, showing that the trace-based FMCA objective is insensitive to \(\varepsilon\) in downstream accuracy and macro-F1, while ridge regularization substantially reduces the condition numbers of the empirical autocorrelation matrices.

% Additional ablations on window length and active modalities (Supplementary~\S C) further show that reducing from three to two modalities or using shorter (5\,s) windows consistently degrades accuracy by 2--5 points, underscoring the importance of both sufficient temporal context and trimodal fusion in MFMC.

\begin{figure}[t]
    \centering
    \includegraphics[width=\linewidth]{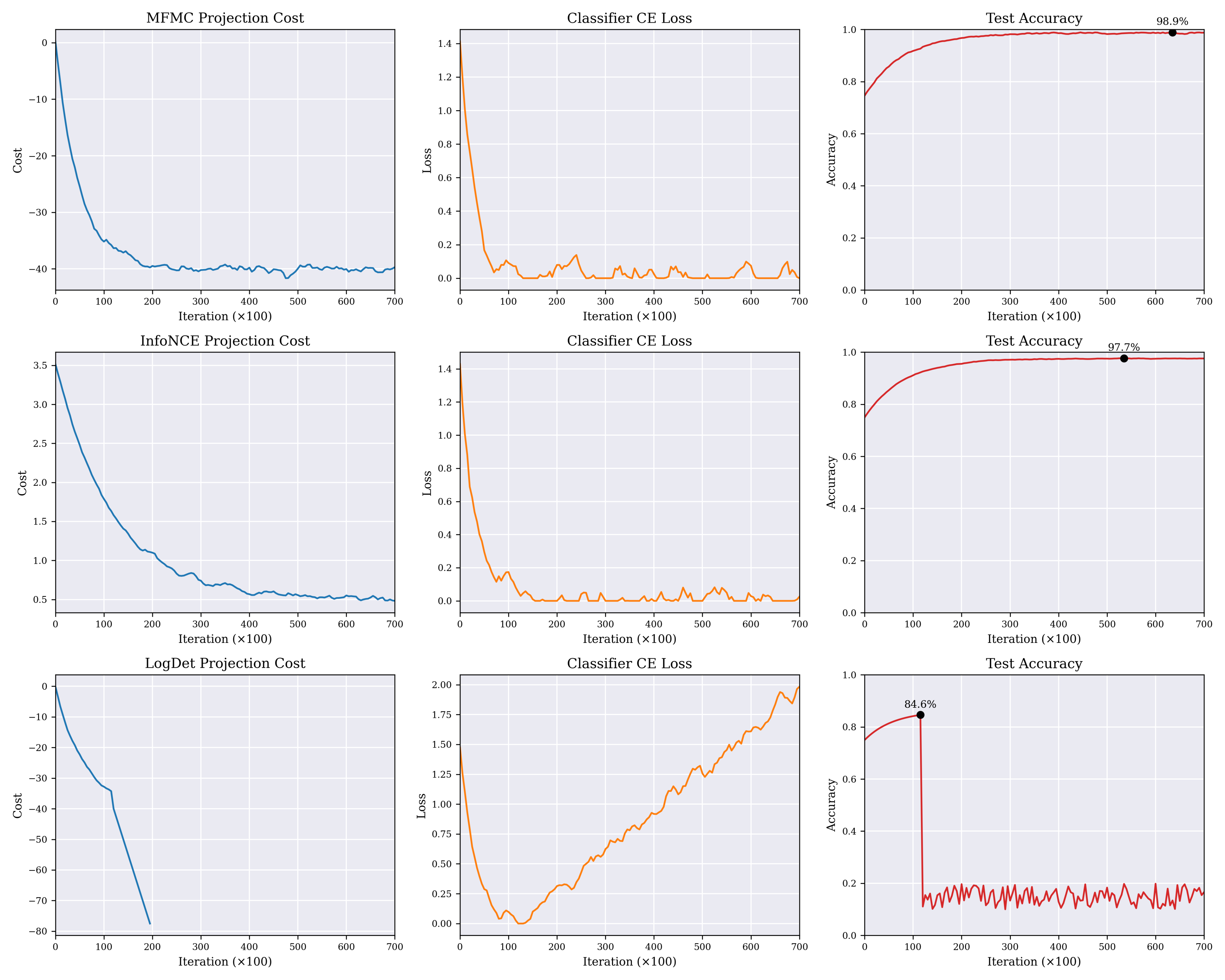}
    \caption{Validation curves on the DEAP subject-dependent split.
    MFMC converges smoothly and attains the highest accuracy.
    High-order InfoNCE plateaus early; LogDet diverges after 13\,k iterations.}
    \label{fig:abl_curves}
\end{figure}

\begin{table}[t]
\centering
\caption{Validation accuracy (\%) on DEAP (subject-dependent) for the
three objectives. All runs share identical hyper-parameters.}
\label{tab:ablation_accuracy}
\setlength{\tabcolsep}{6pt}
\begin{tabular}{lc}
\toprule
\textbf{Objective} & \textbf{Best Acc.} \\
\midrule
MFMC (trace surrogate) & \textbf{98.7} \\
High-order InfoNCE     & 97.6 \\
FMCA-LogDet            & 86.3 \\
\bottomrule
\end{tabular}
\end{table}

\section{Discussion}

Our empirical results and ablation studies support two main observations about multimodal self-supervision for physiological emotion recognition. 
First, the comparison between tri-modal, bi-modal, and uni-modal methods highlights the importance of explicitly modeling higher-order dependence across central and peripheral signals. Across all three datasets, MFMC consistently outperforms pairwise CLIP-style objectives (CLIP, CLIP++) and TC-based SymILE in both subject-dependent and subject-independent regimes (Tables~\ref{tab:eeg_eog_deap}--\ref{tab:eeg_ecg_mahnob}). These gains are particularly pronounced in the more difficult subject-independent setting, where MFMC often matches or surpasses supervised baselines such as EEGNet and HyperFuseNet, despite using no affect labels during pretraining. This pattern suggests that DTC-based training encourages encoders to capture structured higher-order interactions that emerge when EEG and peripheral signals are considered jointly, rather than over-emphasizing redundant pairwise correlations. The fact that MFMC remains augmentation-light and does not require positive/negative pair construction further indicates that much of the supervision signal is intrinsic to the multimodal physiological structure itself, rather than being injected through heavy augmentation or handcrafted contrastive schemes.

To further examine what higher-order dependencies are learned by MFMC, we provide an additional diagnostic analysis in Appendix~E. Specifically, we analyze the normalized contributions of the three cyclic pair-to-third dependence terms in Eq. (23), and further compare each fused pair-to-third dependence score with the stronger corresponding single-modality-to-third dependence score. The results show that MFMC learns distributed joint-over-single dependence in the subject-dependent setting, while the subject-independent setting exhibits more direction-specific and partially redundant cross-subject dependence. This supports the interpretation that MFMC does not merely collapse to a single pairwise alignment path, but uses the cyclic DTC-grounded objective to learn structured multimodal dependencies.

The degradation analysis in Table~\ref{tab:subject_independent_drop} further suggests that the main challenge in the subject-independent setting is not optimization failure, but reduced transferability of higher-order physiological coupling across participants. The dependence diagnostics in Appendix~E support this interpretation. In subject-dependent evaluation, the pair fusion gains are positive for all cyclic directions, indicating that fused modality pairs contain additional joint information beyond the stronger individual modality. In subject-independent evaluation, these gains become modest or partially negative, suggesting that the transferable dependence becomes more direction specific and partially redundant. Thus, the subject-independent drop reflects the weakening of cross-subject joint over single dependence, while the DTC-grounded objective still helps preserve useful higher-order structure.

Second, our results shed light on the relative contribution of architecture versus objective in physiological SSL. All methods share the same temporal CNN backbone and projection head, and differ only in loss function and modality configuration. Yet MFMC substantially narrows the gap to, or even exceeds, task-specific supervised models under cross-subject evaluation. This suggests that for noisy, label-scarce physiological data, improving the alignment objective and dependence estimator may yield larger generalization benefits than further increasing model capacity. In particular, MFMC’s reliance on global covariance structure rather than instance-wise negatives may help it average out subject-specific artifacts and label noise, which are known challenges in affective computing.

%At the same time, our study has several limitations. We focus on three laboratory datasets with controlled stimuli and relatively short recordings, and evaluate only 4-class valence–arousal classification. It remains to be seen how MFMC scales to longer continuous recordings, different affect taxonomies (e.g., dimensional regression or multi-label discrete emotions), or other cognitive/clinical endpoints such as workload or depression severity. Methodologically, our DTC surrogate is implemented via batch-level covariance estimation and a first-order trace approximation; alternative dependence estimators, robust covariance models, or mini-batch correction strategies may further improve stability in very high-dimensional settings or with more than three modalities. Finally, we treat modality encoders as largely task-agnostic temporal CNNs and do not explore more structured architectures (e.g., graph neural networks for EEG montages or transformers for long-range dependencies), nor do we attempt a detailed neuroscientific interpretation of the learned latent factors.

% Overall, this work is a first step toward DTC-grounded, augmentation-light multimodal SSL for physiological signals. Explicitly targeting higher-order dependence across central and peripheral modalities improves subject-independent emotion recognition while simplifying training compared with heavily engineered contrastive frameworks. Future work will extend MFMC to richer tasks, recording conditions, and modalities such as facial video, speech, and eye tracking.

\section{Conclusions and Future Work}
This paper presents a dual total correlation (DTC)-grounded self-supervised learning framework for modeling more than two physiological modalities in emotion recognition. 
The proposed Multimodal Functional Maximum Correlation (MFMC) framework is theoretically grounded, practical without positive/negative sample construction or handcrafted augmentations, and generalizable across diverse physiological signals.
Our findings also highlight the potential of incorporating multiple modalities beyond the common bi-modal setting to enhance performance in affective BCI tasks. In future work, we plan to extend MFMC to other downstream BCI applications, such as depression severity assessment. Furthermore, we aim to develop formal methods for quantifying the contribution of each modality, as well as their higher-order interactions, to the final decision.

\section*{Acknowledgment}
This work was supported by the National Natural Science Foundation of China (Grant No. 62506070), the Natural Science Foundation of Jiangsu Province (Grants No BK20251348), the Research Project of Humanities and Social Sciences of the Ministry of Education (No. 25YJCZH372) and Joint Open Project of Southeast University-Jiangsu Province Hospital.

%==================================================================================================================
%==================================================================================================================
%==================================================================================================================

\appendices
\onecolumn

\setcounter{figure}{0}
\renewcommand{\thefigure}{S\arabic{figure}}
\setcounter{table}{0}
\renewcommand{\thetable}{S\arabic{table}}

\section{Proofs}
\subsection{Proof of DTC decomposition (Equation (14) in the main manuscript)}
We first present Lemma~\ref{lemma:dtc_decomposition_appendix}, adapted from Lemma 4.5 in the work of \cite{austin2018multi}.

\begin{lemma}[\cite{austin2018multi}]\label{lemma:dtc_decomposition_appendix}
If $M \geq 3$, then DTC satisfies
\begin{equation}
\mathrm{DTC}(X_1; \ldots; X_M) = \mathrm{DTC}(X_1; \ldots; X_{M-1}) + \sum_{i=1}^{M-1} I(X_i ; X_M \mid X_{[M-1] \setminus \{i\}}).
\end{equation}
%In particular, one side equals $+\infty$ only if both do.
\end{lemma}

When $M = 3$, we obtain:
\begin{equation}\label{eq:1}
\mathrm{DTC}(X_1, X_2, X_3) = I(X_1; X_2) + I(X_1; X_3 \mid X_2) + I(X_2; X_3 \mid X_1) 
\end{equation}

Note that DTC is symmetric and invariant under permutation. Therefore,
\begin{equation}\label{eq:2}
\mathrm{DTC}(X_1, X_2, X_3) = I(X_1; X_3) + I(X_1; X_2 \mid X_3) + I(X_3; X_2 \mid X_1) 
\end{equation}
and
\begin{equation}\label{eq:3}
\mathrm{DTC}(X_1, X_2, X_3) = I(X_2; X_3) + I(X_2; X_1 \mid X_3) + I(X_3; X_1 \mid X_2) 
\end{equation}

Combine Eqs.~(\ref{eq:1})--(\ref{eq:3}):
\begin{equation}
\begin{aligned}
3\,\mathrm{DTC}(X_1, X_2, X_3) &= I(X_1; X_2) + I(X_1; X_3) + I(X_2; X_3) \\
&\quad + 2I(X_1; X_2 \mid X_3) + 2I(X_1; X_3 \mid X_2) + 2I(X_2; X_3 \mid X_1)
\end{aligned}
\end{equation}

That is,
\begin{equation}
\begin{aligned}
\mathrm{DTC}(X_1, X_2, X_3) &= \frac{1}{3} \left( I(X_1; X_2) + I(X_1; X_3) + I(X_2; X_3) \right) \\
&\quad + \frac{2}{3} \left( I(X_1; X_2 \mid X_3) + I(X_1; X_3 \mid X_2) + I(X_2; X_3 \mid X_1) \right)
\end{aligned}
\end{equation}

\subsection{Proof of Theorem 1}

Theorem 1 is a special case of Theorem 2 with $M = 3$; we therefore focus on the proof of Theorem 2.

\subsection{Proof of Theorem 2}

We begin with the definition of TC and DTC.

\begin{equation}\label{eq:normal_TC}
\mathrm{TC}(X_1, X_2, \ldots, X_M) = \sum_{i=1}^M H(X_i) - H(X_1, X_2, \ldots, X_M),
\end{equation}

\begin{equation}\label{eq:normal_DTC}
\mathrm{DTC}(X_1, X_2, \ldots, X_M) = H(X_1, X_2, \ldots, X_M) - \sum_{i=1}^M H(X_i \mid X_1, \ldots, X_{i-1}, X_{i+1}, \ldots, X_M)
\end{equation}

Combine Eq.~(\ref{eq:normal_TC}) and Eq.~(\ref{eq:normal_DTC}), we obtain:
\begin{equation}
\mathrm{TC}(X_1, X_2, \ldots, X_M) + \mathrm{DTC}(X_1, X_2, \ldots, X_M) = \sum_{i=1}^M \left[ H(X_i) - H(X_i \mid X_1, \ldots, X_{i-1}, X_{i+1}, \ldots, X_M) \right] 
\end{equation}

That is,
\begin{equation}
\label{eq:sum_TC_DTC}
\begin{aligned}
\mathrm{TC}(X_1, X_2, \ldots, X_M) + \mathrm{DTC}(X_1, X_2, \ldots, X_M) & = \sum_{i=1}^M I(X_i ; X_1, \ldots, X_{i-1}, X_{i+1}, \ldots, X_M) \\
& = \sum_{i=1}^M I(X_i ; X_{[M] \setminus \{i\}}),
\end{aligned}
\end{equation}
where $[M] \setminus \{i\}$ denotes the full set $[M] := \{1, 2, \ldots, M\}$ excluding $i$.

We now present Lemma~\ref{lemma_2}, adapted from Lemma 4.13 in~\cite{austin2018multi}.

\begin{lemma}[\cite{austin2018multi}]\label{lemma_2}
The $\mathrm{TC}$ and $\mathrm{DTC}$ of $X_1, X_2, \ldots, X_M$ both lie between
\[
\max_i I(X_i ; X_{[M] \setminus \{i\}}) \quad \text{and} \quad (M-1) \cdot \max_i I(X_i ; X_{[M] \setminus \{i\}})
\]
In particular,
\begin{align}
\mathrm{DTC} &\leq (M - 1)\, \mathrm{TC} \label{eq:first}  \\
\mathrm{TC} &\leq (M - 1)\, \mathrm{DTC} \label{eq:second}
\end{align}
and the quantities $\mathrm{TC}$, $\mathrm{DTC}$ and $\max_i I(X_i ; X_{[M] \setminus \{i\}})$ are either all finite or all infinite.
\end{lemma}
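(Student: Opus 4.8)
The plan is to prove the full two-sided statement that both $\mathrm{TC}$ and $\mathrm{DTC}$ lie in the interval $[J_{\max},\,(M-1)J_{\max}]$, where $J_i := I(X_i; X_{[M]\setminus\{i\}})$ and $J_{\max} := \max_i J_i$; the ``in particular'' inequalities and the finiteness dichotomy then fall out immediately. The backbone is the additive identity $\mathrm{TC} + \mathrm{DTC} = \sum_{i=1}^M J_i =: S$ already established in Eq.~(\ref{eq:sum_TC_DTC}), together with the trivial bounds $J_{\max} \le S \le M\,J_{\max}$.

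First I would sandwich $\mathrm{TC}$. Expanding the entropy definition gives the peeling identity $\mathrm{TC}(X_{[M]}) = J_i + \mathrm{TC}(X_{[M]\setminus\{i\}})$, and since total correlation is nonnegative this yields $\mathrm{TC} \ge J_i$ for every $i$, hence $\mathrm{TC}\ge J_{\max}$. For the upper bound I would use the chain decomposition $\mathrm{TC}(X_{[M]}) = \sum_{i=2}^M I(X_i;\,X_1,\dots,X_{i-1})$; each summand is a mutual information of $X_i$ against a subset of $X_{[M]\setminus\{i\}}$, so monotonicity gives $I(X_i; X_1,\dots,X_{i-1}) \le J_i \le J_{\max}$, and summing the $M-1$ terms yields $\mathrm{TC}\le (M-1)J_{\max}$.

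The upper bound on $\mathrm{DTC}$ is then immediate from the identity: $\mathrm{DTC} = S - \mathrm{TC} \le M\,J_{\max} - J_{\max} = (M-1)J_{\max}$. The lower bound on $\mathrm{DTC}$ is the one place where this bookkeeping fails---$\mathrm{DTC} = S - \mathrm{TC}$ with only $S\ge J_{\max}$ and $\mathrm{TC}\le (M-1)J_{\max}$ in hand produces a vacuous negative estimate---so I expect it to be the main obstacle and would attack it directly, proving $\mathrm{DTC}\ge J_i$ for each $i$. Using the entropy form $\mathrm{DTC} = \sum_{j} H(X_{[M]\setminus\{j\}}) - (M-1)H(X_{[M]})$ together with $J_i = H(X_i) + H(X_{[M]\setminus\{i\}}) - H(X_{[M]})$, the inequality $\mathrm{DTC}\ge J_i$ collapses after cancellation to $H(X_{[M]\setminus\{i\}}\mid X_i)\ge \sum_{j\ne i} H(X_j\mid X_{[M]\setminus\{j\}})$. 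I would close this by expanding the left-hand side with the chain rule as a sum over $j\ne i$ of conditional entropies of the form $H(X_j\mid X_i,\text{ earlier variables})$; each conditions on a subset of $X_{[M]\setminus\{j\}}$, so ``conditioning reduces entropy'' lower-bounds it termwise by $H(X_j\mid X_{[M]\setminus\{j\}})$, and summing recovers the claim.

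With both quantities now trapped in $[J_{\max},\,(M-1)J_{\max}]$, the remaining assertions are one line each: $\mathrm{DTC}\le (M-1)J_{\max}\le (M-1)\mathrm{TC}$ and symmetrically $\mathrm{TC}\le (M-1)\mathrm{DTC}$, while the common two-sided comparison against $J_{\max}$ forces $\mathrm{TC}$, $\mathrm{DTC}$, and $\max_i I(X_i;X_{[M]\setminus\{i\}})$ to be finite or infinite together. The only genuinely nontrivial ingredient is the termwise entropy comparison underlying the $\mathrm{DTC}$ lower bound; every other step is routine manipulation of the entropy/mutual-information identities.
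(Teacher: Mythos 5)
Your proof is correct, but note that the paper itself never proves this lemma: it is imported as a black box (``adapted from Lemma 4.13 in~\cite{austin2018multi}'') and combined with the identity $\mathrm{TC}+\mathrm{DTC}=\sum_{i}I(X_i;X_{[M]\setminus\{i\}})$ to derive Theorem 2, so your reconstruction is genuinely new content relative to the paper. Writing $J_i:=I(X_i;X_{[M]\setminus\{i\}})$, $J_{\max}:=\max_i J_i$, and $S:=\sum_i J_i$ as you do, every step checks out: the peeling identity $\mathrm{TC}(X_{[M]})=J_i+\mathrm{TC}(X_{[M]\setminus\{i\}})$ plus nonnegativity of $\mathrm{TC}$ gives $\mathrm{TC}\ge J_{\max}$; the chain decomposition $\mathrm{TC}=\sum_{i\ge 2}I(X_i;X_1,\dots,X_{i-1})$ with monotonicity of mutual information gives $\mathrm{TC}\le (M-1)J_{\max}$; the upper bound $\mathrm{DTC}=S-\mathrm{TC}\le (M-1)J_{\max}$ then follows from the additive identity; and your reduction of $\mathrm{DTC}\ge J_i$ to $H(X_{[M]\setminus\{i\}}\mid X_i)\ge\sum_{j\ne i}H(X_j\mid X_{[M]\setminus\{j\}})$ is algebraically exact and is closed correctly by the chain rule together with ``conditioning reduces entropy''---you are right that this is the only nontrivial ingredient. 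Two observations on what each route buys. First, your two-sided bounds actually subsume the paper's Theorem 2 directly: summing $\mathrm{TC}\ge J_i$ over $i$ gives $\mathrm{TC}\ge S/M$, hence $\mathrm{DTC}=S-\mathrm{TC}\le\frac{M-1}{M}S$, and symmetrically $\mathrm{DTC}\ge S/M$ follows from $\mathrm{DTC}\ge J_i$; with your argument the appendix would be fully self-contained and the multiplicative inequalities $\mathrm{DTC}\le(M-1)\mathrm{TC}$, $\mathrm{TC}\le(M-1)\mathrm{DTC}$ would not even be needed as intermediaries. Second, the citation buys rigor in the degenerate regime: your entropy cancellations (notably in the $\mathrm{DTC}\ge J_i$ step, where you subtract joint entropies) implicitly assume the relevant entropies are finite, whereas the ``all finite or all infinite'' clause is precisely about the case Austin's treatment handles carefully; your bounds do recover that clause, but a fully rigorous version should either state the finiteness assumption or recast the $\mathrm{DTC}$ manipulations so that no possibly infinite quantity is ever subtracted.
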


Combine Eq.~(\ref{eq:sum_TC_DTC}) with Eq.~(\ref{eq:first}), we obtain:
\begin{equation}
\mathrm{DTC} \leq \frac{M-1}{M} \sum_{i=1}^M I(X_i ; X_{[M] \setminus \{i\}}) 
\end{equation}

Combine Eq.~(\ref{eq:sum_TC_DTC}) with Eq.~(\ref{eq:second}), we obtain:
\begin{equation}
\mathrm{DTC} \geq \frac{1}{M} \sum_{i=1}^M I(X_i ; X_{[M] \setminus \{i\}}) 
\end{equation}

\section{Validating DTC and its Approximation}

We first justify our theoretical results through two simulations. Our first goal is to verify the underlying intuition and assess the tightness of the sandwich bound described in Theorem 1. We begin with a Gaussian setting, assuming \( X_1, X_2, X_3 \sim \mathcal{N}(0, \Sigma) \), where \( \Sigma \) has ones on the diagonal and all the off-diagonal elements equal to \( \rho \). 

\paragraph{Closed-form expression of Gaussian DTC} For three Gaussian variables $x_1, x_2, x_3 \sim \mathcal{N}(0, \Sigma)$, with the covariance matrix
\[
    \Sigma = 
    \begin{bmatrix}
    1 & \rho & \rho \\
    \rho & 1 & \rho \\
    \rho & \rho & 1
    \end{bmatrix}.
\]

Each bivariate marginal (e.g., $H(x_1, x_2)$) has:
\[
\Sigma_{\text{pair}} = 
\begin{bmatrix}
1 & \rho \\
\rho & 1
\end{bmatrix}, \quad 
|\Sigma_{\text{pair}}| = 1 - \rho^2
\]

Let $\mathbf{x} \in \mathbb{R}^d$ be a multivariate Gaussian random vector with mean $\boldsymbol{\mu}$ and covariance matrix $\Sigma$, i.e., $\mathbf{x} \sim \mathcal{N}(\boldsymbol{\mu}, \Sigma)$. The joint entropy of $\mathbf{x}$ is given by:
\[
H(\mathbf{x}) = \frac{1}{2} \log \left( (2\pi e)^d \, |\Sigma| \right).
\]

Then:
\begin{equation}
H(x_i, x_j) = \frac{2}{2} \log(2\pi e) + \frac{1}{2} \log(1 - \rho^2) 
= \log(2\pi e) + \frac{1}{2} \log(1 - \rho^2)
\end{equation}

We also have:
\[
|\Sigma| = 1 + 2\rho^3 - 3\rho^2
\]

So:
\begin{equation}
H(x_1, x_2, x_3) = \frac{3}{2} \log(2\pi e) + \frac{1}{2} \log(1 + 2\rho^3 - 3\rho^2)
\end{equation}

Therefore:
\begin{equation}
\begin{aligned}
DTC(x_1,x_2,x_3) & = H(x_1, x_2) + H(x_1, x_3) + H(x_2, x_3) - 2H(x_1, x_2, x_3) \\
& = \left[3 \log(2\pi e) + \frac{3}{2} \log(1 - \rho^2)\right] 
- \left[3 \log(2\pi e) + \log(1 + 2\rho^3 - 3\rho^2)\right] \\
& = \frac{3}{2} \log(1 - \rho^2) - \log(1 + 2\rho^3 - 3\rho^2).
\end{aligned}
\end{equation}

With this setup, both the DTC and \( I(\text{pair}; \text{third}) \), can be computed in closed form:
\begin{align*}
\text{DTC} &= 1.5 \log(1 - \rho^2) - \log(1 + 2\rho^3 - 3\rho^2) \\
I(\text{pair};\text{third}) &= 0.5\log(1 - \rho^2) - 0.5\log(1 + 2\rho^3 - 3\rho^2).
\end{align*}

Fig.~\ref{fig:synthetic}(a) illustrates that both upper and lower bounds on DTC are valid and increase with $\rho$. 

We then evaluate the validity of our generalized bound in Theorem 2 under two challenging scenarios. For \emph{Data~A}: $X_1$ is functionally related to the other dimensions by $X_1 = \left(\frac{1}{M-1} \sum_{i=2}^{M} X_i\right)^2$, where ${X_2,\dots,X_M}$ are independently and uniformly distributed. As $M$ increases, the total dependence weakens. For \emph{Data~B}: $X_1$ is uniformly distributed in $[0,1]$, and $X_i = (X_1)^2+X_1$ for $i = 2, \dots, M$. The total dependence is close to a constant, as ${X_2, \dots, X_M}$ are nonlinear functions of $X_1$. Since closed-form expressions for DTC and $I(X_{[M]\setminus i}; X_i)$ are unavailable, we estimate both terms using a modern matrix-based entropy functional~\cite{yu2021measuring}. 

\paragraph{Estimator in \emph{Data A} and \emph{Data B}}
For \emph{Data A} and \emph{Data B}, no closed-form expression for the DTC is available. Therefore, we adopt a modern sample-based estimator. Specifically, we estimate the dependence among the $M$-dimensional components of the random variable $\mathbf{x} = [x_1; x_2; \cdots; x_M] \in \mathbb{R}^M$, based solely on $N$ i.i.d.\ samples from $\mathbf{y}$, i.e., $\{\mathbf{x}^i\}_{i=1}^N$ (We use superscripts to denote sample indices and subscripts to denote variable indices, e.g., $x_1^2$ refers to the 2nd observation of the 1st dimension of $\mathbf{x}$).

In our experiment, $N=500$.

The DTC can be empirically estimated as~\cite{yu2021measuring}:
\begin{equation}\label{eq:NDTC_renyi}
DTC_{\alpha} (\mathbf{x}) =
\left[\sum_{i=1}^{M}S_\alpha\left(A_{[M]\setminus i}\right)\right]-(M-1)S_{\alpha}\left(A_{[M]}\right),
\end{equation}
where
\begin{equation}\label{eq:renyi_joint_full}
\mathbf{S}_\alpha(A_{[M]})=\mathbf{S}_\alpha\left(\frac{A_1\circ A_2\circ\cdots\circ A_L}{\mathrm{tr}(A_1\circ A_2\circ\cdots\circ A_L)}\right),
\end{equation}
and
\begin{equation}\label{eq:renyi_joint_marginal}
\mathbf{S}_\alpha(A_{[M]\setminus i})=\mathbf{S}_\alpha\left(\frac{A_1\circ \cdots A_{i-1}\circ A_{i+1}\circ \cdots \circ A_M}{\mathrm{tr}(A_1\circ \cdots A_{i-1}\circ A_{i+1}\circ \cdots \circ A_M)}\right),
\end{equation}
with
$(A_1)_{ij}=\kappa(x_1^i,x_1^j)$, $(A_2)_{ij}=\kappa(x_2^i,x_2^j)$, $\cdots$, $(A_M)_{ij}=\kappa(x_M^i,x_M^j)$,
$\kappa:\mathcal{Y}\times\mathcal{Y}\mapsto\mathbb{R}$ is a real-valued positive definite kernel such as Gaussian and $\circ$ denotes the Hadamard product.

Here, $S_\alpha(A)$ is referred to as the matrix-based entropy functional~\cite{giraldo2014measures}, defined as:
\begin{equation}
    S_\alpha(A) = \frac{1}{1-\alpha} \log_2 \left(\text{tr}(A^\alpha) \right) = \frac{1}{1-\alpha} \log_2 \left( \sum_{i=1}^N \lambda_i^\alpha \right),
\end{equation}
where $\lambda_i$ denotes the $i$-th eigenvalue of matrix $A$ (after trace normalization, i.e., $A=A/\text{tr}(A)$). 

Results in Figs.~\ref{fig:synthetic}(b) and (c) show that our bound is tight for $M=3$ (three modalities) but grows with increasing $M$. The lower bound consistently tracks the DTC, which is desirable as our experiments focus on three modalities and aim to maximize DTC. 

In our final simulation, we evaluate the strength of dependence between \( X_1 \) and \( X_2 \) using FMCA and InfoNCE separately. The data are drawn from a 20-dimensional Gaussian with zero mean and correlation coefficient \( \rho \), where the true mutual information is given by \( -10 \log(1 - \rho^2) \), which increases logarithmically with \( \rho \). As shown in Fig.~\ref{fig:synthetic}(d), FMCA accurately captures the increasing dependence as \( \rho \) grows. In contrast, InfoNCE estimates tend to saturate as mutual information increases, a known limitation of lower-bound-based estimators~\cite{mcallester2020formal}.

\begin{figure}[!t]
\centering
\subfloat[Gaussian variable.]{%
  \includegraphics[width=0.45\linewidth]{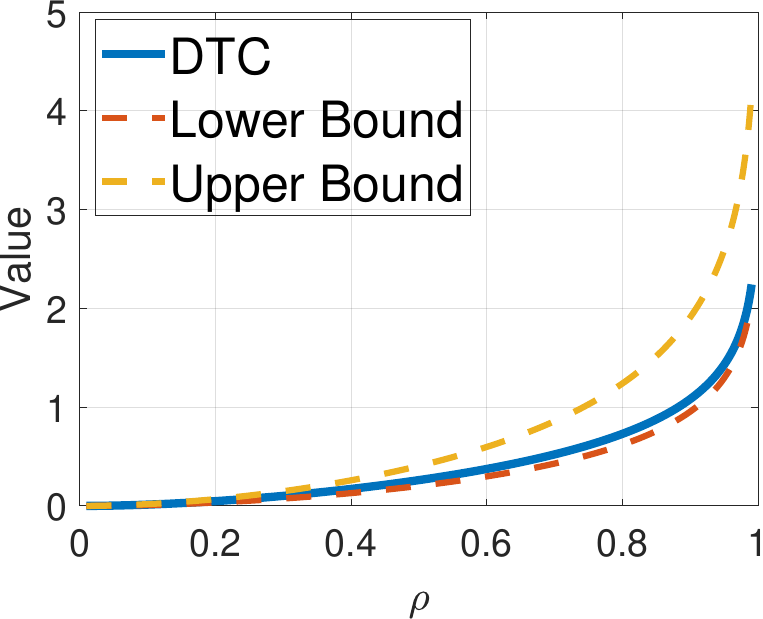}%
  \label{fig:gaussian}}
\hfil
\subfloat[\emph{Data~A}.]{%
  \includegraphics[width=0.45\linewidth]{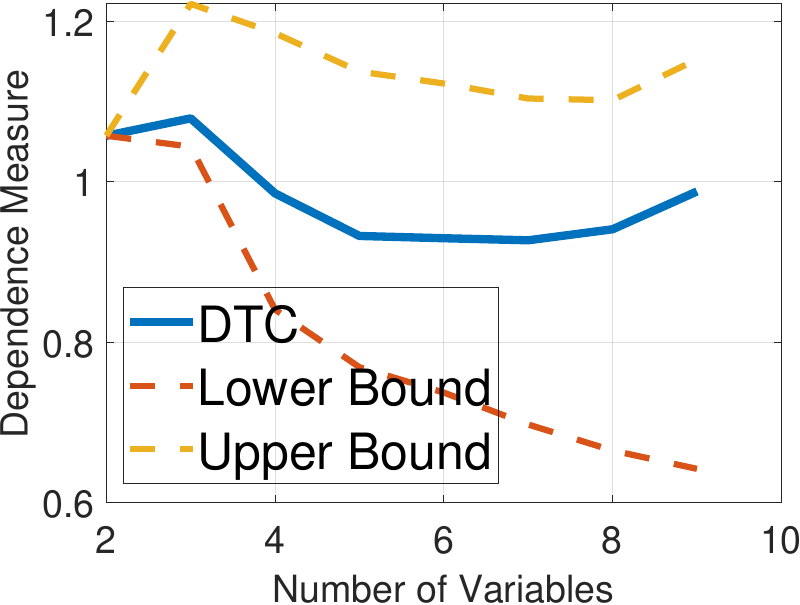}%
  \label{fig:dataA}}
\par\medskip
\subfloat[\emph{Data~B}.]{%
  \includegraphics[width=0.45\linewidth]{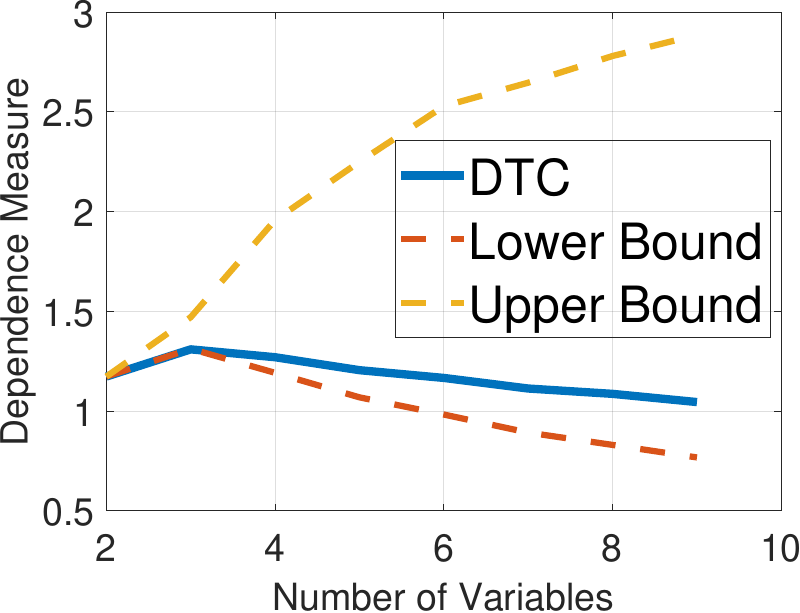}%
  \label{fig:dataB}}
\hfil
\subfloat[FMCA vs. InfoNCE.]{%
  \includegraphics[width=0.45\linewidth]{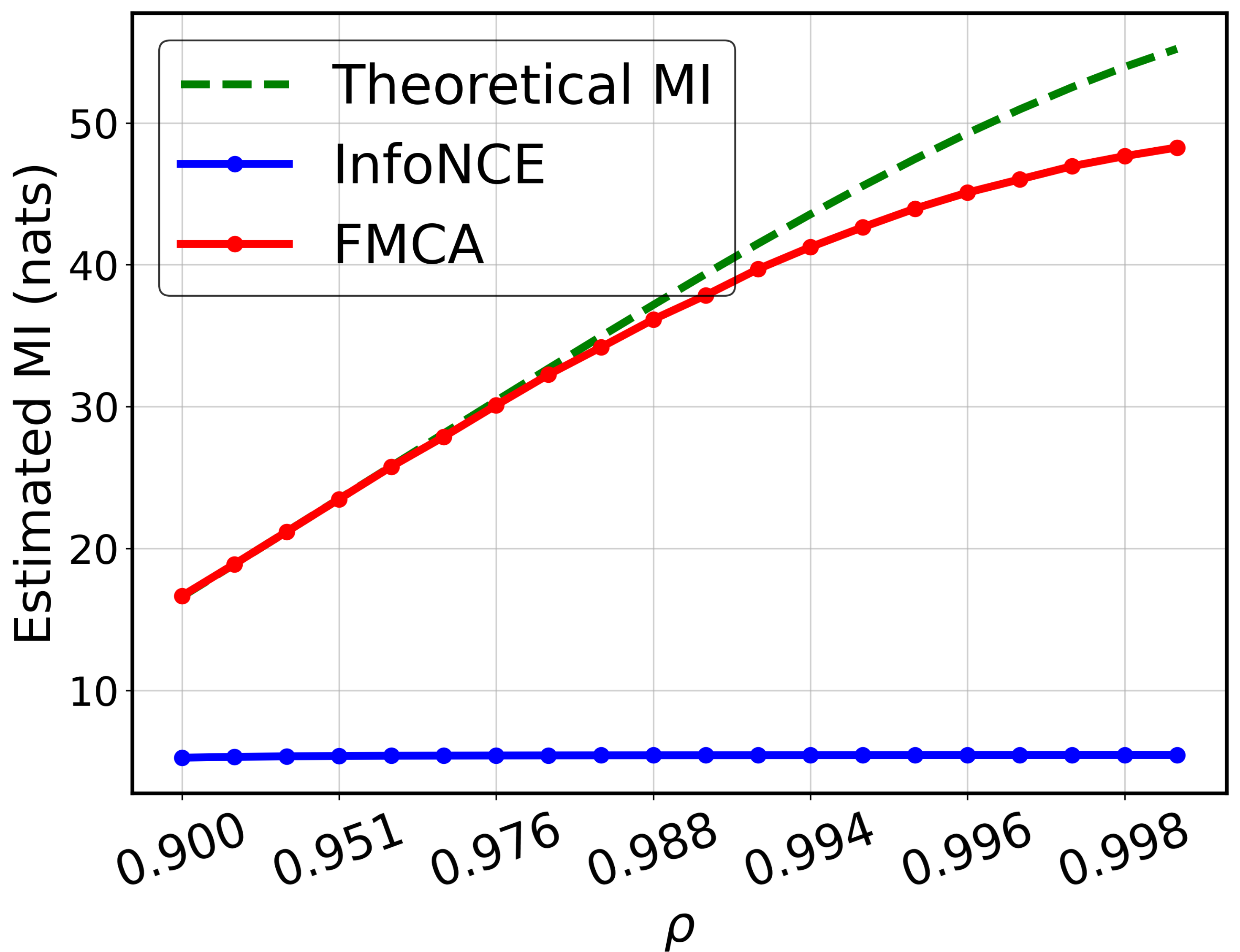}%
  \label{fig:fmcainfonce}}
\caption{Simulation results.}
\label{fig:synthetic}
\end{figure}

\section{Experiment Details}

\subsection{Window Size and Modality (Signal) Selection}

Selecting an appropriate temporal window length is crucial in
multimodal emotion recognition. Recent surveys report that most
studies operate with windows between 1-20 s
\cite{lopez2024,saganowski2022emotion}. Because the
optimal length is data- and model-dependent, we conducted a dedicated
\emph{pre-experiment} on the \textbf{DEAP} corpus to identify the best
setting for our pipeline.

\subsubsection{Dataset}
We used the official \texttt{Data\_preprocessed\_python.zip} release of
DEAP, which already contains the 40 synchronous channels
(32 EEG, 2 EOG, 2 EMG, 1 GSR, 1 respiration belt,
1 plethysmograph, 1 skin temperature) down-sampled to 128 Hz,
band-pass filtered to 4-45 Hz, and baseline-corrected
(\textit{-3 to 0 s} removed).

\subsubsection{Protocol}
Each trial was further segmented into windows of
$\{0.5, 1, 2, \dots, 20\}$ s.
To keep the number of training samples comparable across scales we
used non-overlapping windows when $L\le5$ s and an
adaptive overlap when $L>5$ s.
For every window length we trained exactly the same encoder
architecture and lightweight classifier employed in our main
experiments; window length was the only
variable factor.
Performance was assessed with five fold
subject-independent cross-validation, reporting balanced
accuracy (Acc) and macro-$F_{1}$.

\subsubsection{Results}
Both metrics rise steeply up to 10 s, where they attain their global
maximum, plateau between 10-14 s with no significant difference
($p>0.05$), and then decline.
Very short windows ($<3$ s) yield a drop of \(\ge8\) pp in macro-$F_{1}$,
indicating insufficient temporal context.
Figure 3 illustrates the trend.

\subsubsection{Adopted setting}
We therefore fix the window length to
\textbf{10 s} in all subsequent experiments—a choice that yields the
best empirical performance on DEAP and lies comfortably within the
range recommended by prior work.

The three benchmarks supply more physiological signals than are strictly
necessary for emotion recognition. To keep the pipeline lightweight and
interpretable we performed a second, modality-selection
pre-experiment.

\subsubsection{Method}
Using the 10 s window length identified earlier, we trained the same encoder as in all
main experiments, but appended a learnable modality-attention
mask. Concretely, after global spatial-temporal pooling the model
produces a $d$-dimensional embedding for every modality~$m$; a
trainable scalar gate~$a_m\in[0,1]$ (softmax normalized) multiplies
that embedding before the classifier. The gates are optimized jointly
with the network via cross-entropy. After convergence we rank
modalities by the final attention weights $\{a_m\}$ and retain the
highest-scoring subset for each dataset.

More specifically, the selection protocol is implemented at the modality group level rather than at the individual channel level. For each dataset, all available candidate modality groups are first encoded by the same encoder family used in the main experiments, producing one 128-dimensional embedding \(e_m\) for each modality group \(m\). A trainable scalar logit \(\alpha_m\) is assigned to each modality group. The logits are normalized across all candidate groups by a softmax operation, \(w_m=\exp(\alpha_m)/\sum_j\exp(\alpha_j)\). The weighted embedding \(w_m e_m\) is then passed to the classifier. All logits are initialized equally, i.e., \(\alpha_m=0\) for all \(m\), and are optimized jointly with the classifier using cross entropy loss. No manual modality weights or hard coded thresholds are used.

After convergence, candidate modality groups are ranked by their learned softmax weights. We do not use an absolute value threshold such as \(w_m>\tau\). Instead, because the downstream MFMC experiments are conducted with three active modalities, we use a top 3 ranking rule and retain the three modality groups with the largest learned weights. The modality attention model is used only to determine the active modality set and is discarded before MFMC self supervised pretraining and downstream evaluation. The same protocol is applied to DEAP, CEAP-360VR, and MAHNOB-HCI; only the candidate modality groups differ across datasets.

\begin{table}[t]
\centering
\small
\caption{Parameter settings of the modality selection protocol.}
\label{tab:modality_selection_settings}
\begin{tabular}{ll}
\hline
Item & Setting \\
\hline
Selection unit & Modality group \\
Gate parameter & One trainable scalar logit \(\alpha_m\) per modality group \\
Gate normalization & Softmax over all candidate modality groups \\
Logit initialization & \(\alpha_m=0\) for all modality groups \\
Modality embedding dimension & 128 \\
Optimization objective & Cross entropy classification loss \\
Manual absolute threshold & Not used \\
Decision rule & Select the top 3 groups ranked by learned weights \\
Reason for top 3 & Downstream MFMC experiments use three active modalities \\
Dataset scope & Same protocol for DEAP, CEAP-360VR, and MAHNOB-HCI \\
Use after selection & Gate model discarded; selected modalities used for MFMC \\
\hline
\end{tabular}
\end{table}

\subsubsection{DEAP \cite{deap}}
Seven signals (32-channel EEG, horizontal and vertical EOG, Zygomaticus and Trapezius EMG, GSR, respiration belt, plethysmograph, skin temperature) are available.
The mask assigns the largest weights to all 32 EEG leads, both
horizontal and vertical EOG, and the skin-temperature trace, resulting
in 35 retained inputs. Peripheral EMG, GSR, plethysmograph and
respiration receive substantially lower weights and are discarded.

\subsubsection{CEAP-360VR \cite{ceap}}
This corpus contains only peripheral physiology sampled at
approximately 30 Hz: three-axis accelerometer, BVP, EDA, SKT, heart rate (HR)
and inter-beat interval (IBI). The attention mask ranks
BVP, EDA and SKT highest; those three signals are therefore
kept for downstream experiments.

\subsubsection{MAHNOB-HCI \cite{Soleymani2012MAHNOB}}
MAHNOB logs 32-channel EEG plus four peripheral modalities: ECG, GSR, respiration amplitude and skin temperature.
The mask highlights EEG, GSR and
ECG, so we restrict the input to these three.

This automatic gating reduces the input dimensionality by 12-60 %
(depending on the dataset) while preserving or slightly improving
validation performance and yields a clean, interpretable set of
modalities for all subsequent analyses.

9) Preprocessing sensitivity on DEAP:

To assess the robustness of the released DEAP preprocessing pipeline, we conducted a supplementary one-factor sensitivity study that varies one preprocessing component at a time relative to the default DEAP configuration: window length \{2, 5, 10, 15\} s, stride \{0.4, 1, 2, 10\} s, normalization \{legacy constant, first-sample, per-window z-score\}, and outlier threshold \{none, ±3, ±5, ±7\}. All other settings were fixed to the released DEAP preprocessing pipeline (3 s offset, EEG/EOG/SKT inputs, current encoder, and current training budget). For computational efficiency, we used one fixed subject-dependent 80/20 split and one fixed subject-independent split, and report the highest test accuracy and macro-F1 observed during training. These results are intended to characterize qualitative robustness trends rather than to replace the official 5-fold cross-validation results reported in the main manuscript.

Overall, the DEAP sensitivity study indicates that the released preprocessing pipeline is a reasonable operating point rather than a narrowly tuned choice. The results show that very short windows are clearly suboptimal, that coarse strides degrade performance more noticeably than short strides, and that both normalization and outlier filtering materially affect retained sample count and downstream accuracy. Given the use of single fixed splits, especially under the more variable subject-independent protocol, these results should be interpreted as qualitative robustness trends rather than fold-averaged estimates.

\begin{table*}[!t]
\centering
\caption{DEAP preprocessing sensitivity study using one fixed subject-dependent split and one fixed subject-independent split. Results report the highest test accuracy / macro-F1 observed during training and are intended only to summarize qualitative robustness trends rather than to replace the official 5-fold cross-validation results.}
\label{tab:deap_preprocessing_sensitivity}
\begin{tabular}{lllll}
\toprule
Factor & Setting & Kept windows & Subj.-Dep. (Acc / F1) & Subj.-Indep. (Acc / F1) \\
\midrule
Baseline & 10 s / 0.4 s / legacy / ±5 & 20,097 & 0.987 / 0.987 & 0.353 / 0.289 \\
\midrule
Window length & 2 s & 16,527 & 0.527 / 0.517 & 0.356 / 0.262 \\
 & 5 s & 24,036 & 0.841 / 0.841 & 0.377 / 0.275 \\
 & 10 s & 20,097 & 0.993 / 0.994 & 0.351 / 0.263 \\
 & 15 s & 13,770 & 1.000 / 1.000 & 0.306 / 0.205 \\
\midrule
Stride & 0.4 s & 20,097 & 0.988 / 0.987 & 0.409 / 0.266 \\
 & 1 s & 8,111 & 0.922 / 0.923 & 0.417 / 0.309 \\
 & 2 s & 4,120 & 0.647 / 0.644 & 0.337 / 0.203 \\
 & 10 s & 946 & 0.442 / 0.429 & 0.336 / 0.246 \\
\midrule
Normalization & Legacy constant & 20,097 & 0.989 / 0.989 & 0.320 / 0.251 \\
 & First-sample & 311 & 0.333 / 0.270 & 0.361 / 0.280 \\
 & Per-window z-score & 7,579 & 0.971 / 0.970 & 0.240 / 0.190 \\
\midrule
Outlier threshold & None & 161,280 & 0.568 / 0.557 & 0.318 / 0.274 \\
 & ±3 & 5,576 & 0.997 / 0.998 & 0.382 / 0.286 \\
 & ±5 & 20,097 & 0.987 / 0.987 & 0.316 / 0.225 \\
 & ±7 & 38,679 & 0.735 / 0.730 & 0.394 / 0.288 \\
\bottomrule
\end{tabular}

Note: The baseline corresponds to the released DEAP preprocessing pipeline: 10 s windows, 3 s offset, 0.4 s stride, legacy constant normalization, and an outlier threshold of ±5.
\end{table*}

\subsection{Comparison Implementation Details}

This section details the data-augmentation recipes used in the self-supervised (SSL) experiments on the \textbf{CEAP}, \textbf{MAHNOB-HCI}, and \textbf{DEAP} corpora under three popular frameworks: \emph{SimCLR}, \emph{VICReg}, and \emph{Barlow Twins}. For clarity we first describe the per-dataset pipelines, then summarize all settings in Table~\ref{tab:ssl_aug}. Unless otherwise noted, augmentations are applied sequentially to create two transformed views of every input window.

\subsubsection{CEAP-360VR}
\begin{itemize}
  \item \textbf{SimCLR.} Physio Augmentation = Gaussian noise ($\sigma = 0.05$) \(\rightarrow\) random shift ($\pm10$ samples) \(\rightarrow\) channel dropout ($p = 0.1$)\,[1].
  \item \textbf{VICReg.} Identical pipeline to SimCLR (noise $0.05$, shift $\pm10$, dropout $0.1$).
  \item \textbf{Barlow Twins.} Noise with higher magnitude ($\sigma = 0.1$) + shift ($\pm10$); no channel dropout. Optional per-window $z$-scoring with outlier clipping at $\pm3\sigma$ is available but disabled.
\end{itemize}

\subsubsection{MAHNOB-HCI}
\begin{itemize}
  \item \textbf{SimCLR.} EEG Augmentation: noise ($\sigma = 0.05$), shift ($\pm10$), dropout ($p = 0.1$).
  \item \textbf{VICReg.} Matches SimCLR exactly (noise $0.05$, shift $\pm10$, dropout $0.1$).
  \item \textbf{Barlow Twins.} Noise ($\sigma = 0.1$) and shift ($\pm10$) only; channel dropout omitted. Optional clipped $z$-score normalisation (off by default).
\end{itemize}

\subsubsection{DEAP}
\begin{itemize}
  \item \textbf{SimCLR.} EEG Augmentation: Gaussian noise ($\sigma = 0.05$), shift ($\pm10$), dropout ($p = 0.1$).
  \item \textbf{VICReg.} Same settings as SimCLR (noise $0.05$, shift $\pm10$, dropout $0.1$).
  \item \textbf{Barlow Twins.} Noise ($\sigma = 0.1$) + shift ($\pm10$); no dropout. Optional per-window $z$-score with $\pm3\sigma$ clipping (disabled).
\end{itemize}

\begin{table}[t]
\centering
\caption{Self-supervised data-augmentation settings. “Opt.\ $z$-score” denotes an optional per-window $z$-normalisation with outlier clipping at $\pm3\sigma$ (disabled in all reported experiments).}
\label{tab:ssl_aug}
\setlength{\tabcolsep}{5pt}
\begin{tabular}{@{}lll@{}}
\toprule
\textbf{Dataset} & \textbf{Framework} & \textbf{Augmentations (parameters)} \\
\midrule
\multirow{2}{*}{CEAP}        & SimCLR / VICReg & Noise ($\sigma=0.05$), shift ($\pm10$), dropout ($p=0.1$)\\
                             & Barlow Twins    & Noise ($\sigma=0.1$), shift ($\pm10$), opt.\ $z$-score, no dropout\\
\midrule
\multirow{2}{*}{MAHNOB-HCI}  & SimCLR / VICReg & Noise ($\sigma=0.05$), shift ($\pm10$), dropout ($p=0.1$)\\
                             & Barlow Twins    & Noise ($\sigma=0.1$), shift ($\pm10$), opt.\ $z$-score, no dropout\\
\midrule
\multirow{2}{*}{DEAP}        & SimCLR / VICReg & Noise ($\sigma=0.05$), shift ($\pm10$), dropout ($p=0.1$)\\
                             & Barlow Twins    & Noise ($\sigma=0.1$), shift ($\pm10$), opt.\ $z$-score, no dropout\\
\bottomrule
\end{tabular}
\end{table}

\noindent\textbf{Self-supervised baseline hyperparameters.}
In addition to the data augmentation settings above, Table~\ref{tab:ssl_baseline_hyperparams} summarizes the training hyperparameters used by the self-supervised and multimodal self-supervised baselines that are reported in the main comparison table. The table reports the method specific settings used within each evaluation fold. Supervised baselines follow separate supervised training protocols and are not included in this SSL pretraining hyperparameter table.

\begin{table*}[t]
\centering
\caption{Training hyperparameters of the self-supervised and multimodal self-supervised baselines reported in the main comparison table.}
\label{tab:ssl_baseline_hyperparams}
\scriptsize
\setlength{\tabcolsep}{4.2pt}
\renewcommand{\arraystretch}{1.15}
\resizebox{\textwidth}{!}{%
\begin{tabular}{p{1.8cm} p{2.6cm} p{2.5cm}@{\hspace{2pt}}p{1.6cm} p{2.5cm} p{2.9cm} p{3.2cm} p{3.0cm}}
\hline
Baseline & Datasets / target modality & Active modalities & Batch size & Optimizer / LR & Training budget & Loss specific parameters & Augmentation \\
\hline
MFMC &
DEAP: EEG/EOG; CEAP-360VR: EDA/BVP; MAHNOB-HCI: EEG/ECG &
Three modalities &
256 &
Adam, LR $=2\times10^{-4}$ to $3\times10^{-4}$ &
70,001 to 80,001 total iterations &
Trace based higher order dependence objective with covariance tracking coefficient $\beta=0.5$ &
No explicit augmentation beyond the shared preprocessing \\
\hline
SymILE &
DEAP: EEG/EOG; CEAP-360VR: EDA/BVP; MAHNOB-HCI: EEG/ECG &
Three modalities &
256 &
Adam, LR $=3\times10^{-4}$ to $5\times10^{-4}$ &
70,001 to 100,001 total iterations &
Learnable logit scale initialized from temperature $0.07$ &
No explicit augmentation beyond the shared preprocessing \\
\hline
CLIP++ &
DEAP: EEG/EOG; CEAP-360VR: EDA/BVP; MAHNOB-HCI: EEG/ECG &
Three modalities &
256 &
Adam, LR $=3\times10^{-4}$ &
70,001 total iterations &
Average or sum of three pairwise CLIP losses; learnable logit scale initialized from temperature $0.07$ &
No explicit augmentation beyond the shared preprocessing \\
\hline
FMCA &
DEAP: EEG/EOG; CEAP-360VR: EDA/BVP; MAHNOB-HCI: EEG/ECG &
Two modalities &
256 &
Adam, LR $=2\times10^{-4}$ to $3\times10^{-4}$ &
70,001 to 80,001 total iterations &
Trace based dependence objective; covariance tracking coefficient $\beta=0.5$ when used &
No explicit augmentation beyond the shared preprocessing \\
\hline
CLIP &
DEAP: EEG/EOG; CEAP-360VR: EDA/BVP; MAHNOB-HCI: EEG/ECG &
Two modalities &
256 &
Adam, LR $=2\times10^{-4}$ to $3\times10^{-4}$; MAHNOB-HCI inner script uses LR $=5\times10^{-3}$ &
70,001 to 80,001 total iterations &
Learnable logit scale initialized from temperature $0.07$ &
No explicit augmentation beyond the shared preprocessing \\
\hline
Barlow Twins &
DEAP: EEG/EOG; CEAP-360VR: EDA/BVP; MAHNOB-HCI: EEG/ECG &
Single modality &
256 &
Adam, LR $=1.5\times10^{-4}$ to $5\times10^{-4}$ &
70,001 total iterations; 7k to 8k SSL pretraining iterations depending on dataset and script &
Off diagonal weight $\lambda=0.005$ &
Gaussian noise $\sigma=0.1$, temporal shift $\pm 10$; optional per window z-score disabled \\
\hline
SimCLR &
DEAP: EEG/EOG; CEAP-360VR: EDA/BVP; MAHNOB-HCI: EEG/ECG &
Single modality &
256 &
Adam, LR $=3\times10^{-4}$ &
70,001 total iterations; 7k to 8k SSL pretraining iterations depending on dataset and script &
Fixed temperature $\tau=0.5$ &
Gaussian noise $\sigma=0.05$, temporal shift $\pm 10$, channel dropout $p=0.1$ \\
\hline
VICReg &
DEAP: EEG/EOG; CEAP-360VR: EDA/BVP; MAHNOB-HCI: EEG/ECG &
Single modality &
256 &
Adam, LR $=3\times10^{-4}$ &
70,001 total iterations; 7k to 8k SSL pretraining iterations depending on dataset and script &
Invariance, variance, and covariance weights $25/25/1$; variance numerical term $10^{-4}$ &
Gaussian noise $\sigma=0.05$, temporal shift $\pm 10$, channel dropout $p=0.1$ \\
\hline
\end{tabular}%
}
\end{table*}

Supervised baseline hyperparameters. The supervised baselines, including EEGNet, the vanilla CNN, and HyperFuseNet, were trained end-to-end under the same preprocessing pipeline, modality configurations, train--test splits, five-fold evaluation protocols, and supervised training budget used in the main experiments. These supervised baselines do not include the self-supervised pretraining stage. The vanilla CNN uses the shared encoder backbone described in Section IV-B, whereas EEGNet and HyperFuseNet follow their original supervised architectures. This clarification is added to distinguish the supervised baselines from the self-supervised and multimodal self-supervised baselines summarized in Table~\ref{tab:ssl_baseline_hyperparams}.

\subsection{Ablation on Fusion Network Design}
\label{sec:fusion_ablation}

In the main manuscript, MFMC uses a lightweight MLP fusion head to construct paired modality representations for cyclic pair to third dependence estimation. To examine whether this design is sufficient, we compare the default MLP fusion head with two attention based fusion variants. This ablation evaluates whether increasing the capacity of the fusion module improves downstream performance, while keeping the MFMC objective and the cyclic projection logic unchanged.

All fusion heads take two 128 dimensional modality embeddings as input and output a 128 dimensional paired representation. The default MLP fusion first concatenates two modality embeddings into a 256 dimensional vector and then maps it to a 128 dimensional representation using a two layer MLP. The three cyclic representations, denoted as $e_{12}$, $e_{13}$, and $e_{23}$, are generated by three projection heads with the same architecture but independent parameters.

We additionally evaluate two attention based variants. Attention Fusion v1 reshapes the 256 dimensional input into two 128 dimensional tokens, maps each token to a 512 dimensional hidden space, applies 4 head self attention and a feed forward block, and then uses mean pooling over the two tokens before the final projection. Attention Fusion v2 is a lighter attention design. It maps each token to a 256 dimensional hidden space, adds a learnable modality embedding, applies 4 head self attention and a feed forward block, and then flattens the two output tokens before the final projection. Unlike v1, v2 preserves token identity through learnable modality embeddings and avoids mean pooling.

\begin{table}[!t]
\centering
\caption{Comparison of fusion head architectures. Each head maps two 128 dimensional modality embeddings to one 128 dimensional paired representation.}
\label{tab:fusion_head_architecture}
\resizebox{\linewidth}{!}{
\begin{tabular}{lcccccc}
\toprule
\textbf{Fusion head} & \textbf{Input} & \textbf{Hidden dim.} & \textbf{Main operation} & \textbf{Dropout} & \textbf{Token aggregation} & \textbf{Params, one / three heads} \\
\midrule
MLP Fusion
& 256D concat
& 512
& Two layer MLP with BatchNorm and ReLU
& 0
& None
& 198,528 / 595,584 \\

Attention Fusion v1
& Two 128D tokens
& 512
& 4 head self attention with FFN
& 0.1
& Mean pooling
& 1,710,976 / 5,132,928 \\

Attention Fusion v2
& Two 128D tokens
& 256
& 4 head self attention with FFN and modality embedding
& 0
& Flatten
& 495,744 / 1,487,232 \\
\bottomrule
\end{tabular}
}
\end{table}

\begin{figure}[!t]
\centering
\includegraphics[width=0.95\linewidth]{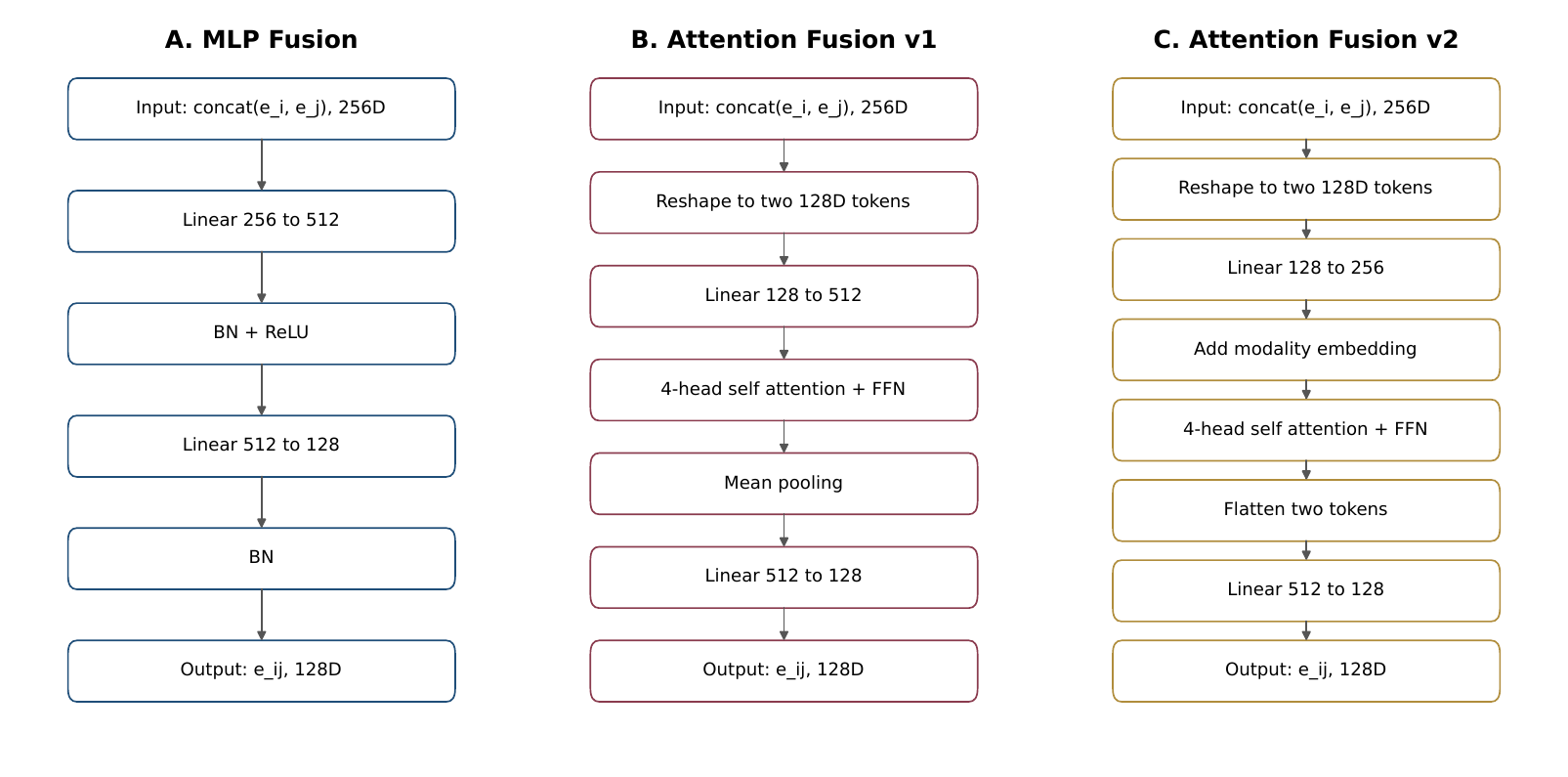}
\caption{Architecture comparison of the default MLP fusion head and two attention based fusion variants.}
\label{fig:fusion_head_architecture}
\end{figure}

We conduct this ablation on CEAP and DEAP under both subject dependent and subject independent protocols. For CEAP, the three input modalities are EDA, BVP, and SKT, and downstream performance is reported using the EDA encoder. For DEAP, the three input modalities are EEG, EOG, and temperature, and downstream performance is reported using the EEG encoder. All experiments use five folds, a batch size of 256, Adam optimizers with a learning rate of $3\times 10^{-4}$, and 20,001 training iterations. Accuracy is used as the evaluation metric.

\begin{table}[!t]
\centering
\caption{Ablation results of MLP fusion and attention based fusion heads. Accuracy is reported as mean $\pm$ standard deviation across five folds.}
\label{tab:fusion_ablation_results}
\resizebox{\linewidth}{!}{
\begin{tabular}{llcccc}
\toprule
\textbf{Dataset} & \textbf{Protocol} & \textbf{Modality} & \textbf{MLP Fusion} & \textbf{Attention Fusion v1} & \textbf{Attention Fusion v2} \\
\midrule
CEAP & Subject dependent & EDA
& $0.868 \pm 0.003$
& $0.3227 \pm 0.0082$
& $0.4992 \pm 0.0218$ \\

CEAP & Subject independent & EDA
& $0.331 \pm 0.020$
& $0.2485 \pm 0.0229$
& $0.2535 \pm 0.0163$ \\

DEAP & Subject dependent & EEG
& $0.987 \pm 0.009$
& $0.9227 \pm 0.0200$
& $0.9244 \pm 0.0215$ \\

DEAP & Subject independent & EEG
& $0.346 \pm 0.030$
& $0.2219 \pm 0.0274$
& $0.3140 \pm 0.0220$ \\
\bottomrule
\end{tabular}
}
\end{table}

For CEAP subject independent evaluation, Attention Fusion v1 uses a different subject split seed from MLP Fusion and Attention Fusion v2. Therefore, Attention Fusion v1 is included mainly as a diagnostic variant in that setting.

\begin{figure}[!t]
\centering
\subfloat[Parameter count comparison of the three fusion head designs. The reported values correspond to the total parameters of the three cyclic projection heads.]{%
  \includegraphics[width=0.48\linewidth]{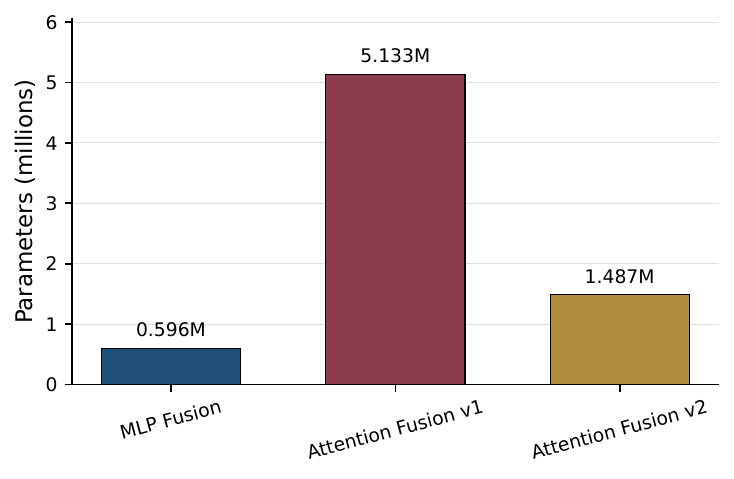}%
  \label{fig:fusion_parameter_count}}
\hfil
\subfloat[Accuracy comparison between MLP fusion and attention based fusion variants across datasets and evaluation protocols.]{%
  \includegraphics[width=0.48\linewidth]{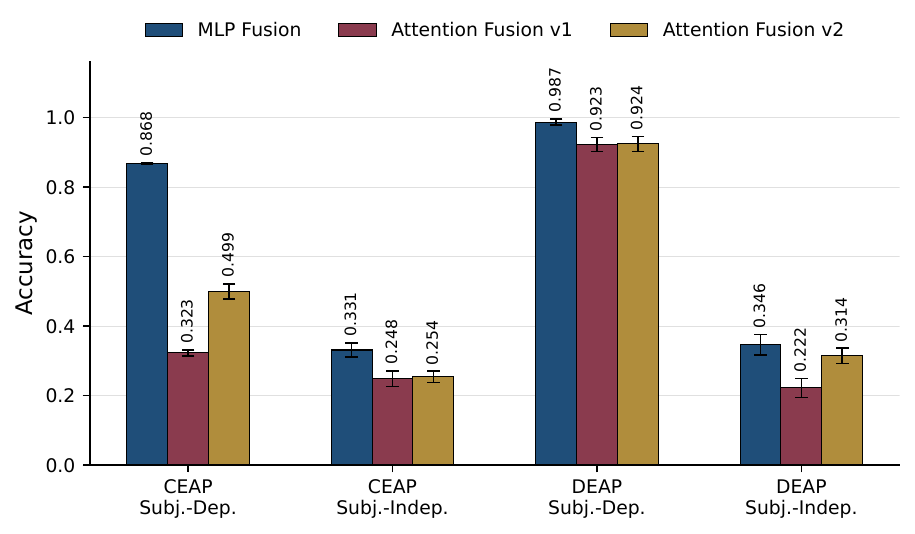}%
  \label{fig:fusion_ablation_accuracy}}
\caption{Fusion head parameter count and accuracy comparison.}
\label{fig:fusion_ablation_summary}
\end{figure}

The results show that increasing the complexity of the fusion head does not lead to consistent improvements. The default MLP fusion achieves the best performance in all four evaluated settings while using substantially fewer parameters than both attention based variants. In particular, Attention Fusion v1 contains about 8.6 times more parameters than the MLP fusion head, but performs worse across both datasets. This suggests that the larger attention head with mean pooling is not beneficial in this setting, possibly because mean pooling weakens the identity and ordering of the two modality tokens.

Attention Fusion v2 reduces the hidden dimension, removes dropout, introduces learnable modality embeddings, and preserves the two token outputs by flattening rather than mean pooling. This design improves over v1 in several settings, especially under DEAP subject independent evaluation. However, it still does not consistently outperform the default MLP fusion and uses about 2.5 times more parameters. These results indicate that MFMC does not rely on a high capacity fusion module. The fusion head mainly provides a compact parameterization of paired modality embeddings, whereas the higher order multimodal learning is driven by the cyclic DTC grounded FMCA objective.

Overall, this ablation supports the use of the default MLP fusion head in MFMC. Compared with attention based fusion, the MLP fusion head is simpler, more parameter efficient, and empirically stronger in the evaluated settings. This is consistent with the role of the fusion head in MFMC: it is used to construct paired representations for dependence estimation, while the main cross modality learning signal is provided by the cyclic DTC grounded objective.

\FloatBarrier

\subsection{Additional Stability Analysis on CEAP-360VR}

To complement the DEAP ablation in Fig.~5 of the main text, we repeated the same objective-level comparison on CEAP-360VR using the trimodal setting (EDA, BVP, and SKT). Fig.~\ref{fig:ceap_stability} shows the learning curves from a representative subject-dependent fold for MFMC, High-order InfoNCE, and FMCA-LogDet. Consistent with the DEAP results, MFMC achieves the highest downstream accuracy, while High-order InfoNCE remains competitive but plateaus below MFMC and FMCA-LogDet shows less stable optimization with clear post-peak degradation. We note that Table III in the main text reports the official CEAP-360VR results as mean $\pm$ standard deviation over five folds, whereas Fig.~\ref{fig:ceap_stability} is intended only as a qualitative stability illustration from a single representative fold and is therefore not numerically comparable one-to-one with Table III.

\begin{figure}[t]
    \centering
    \includegraphics[width=\linewidth]{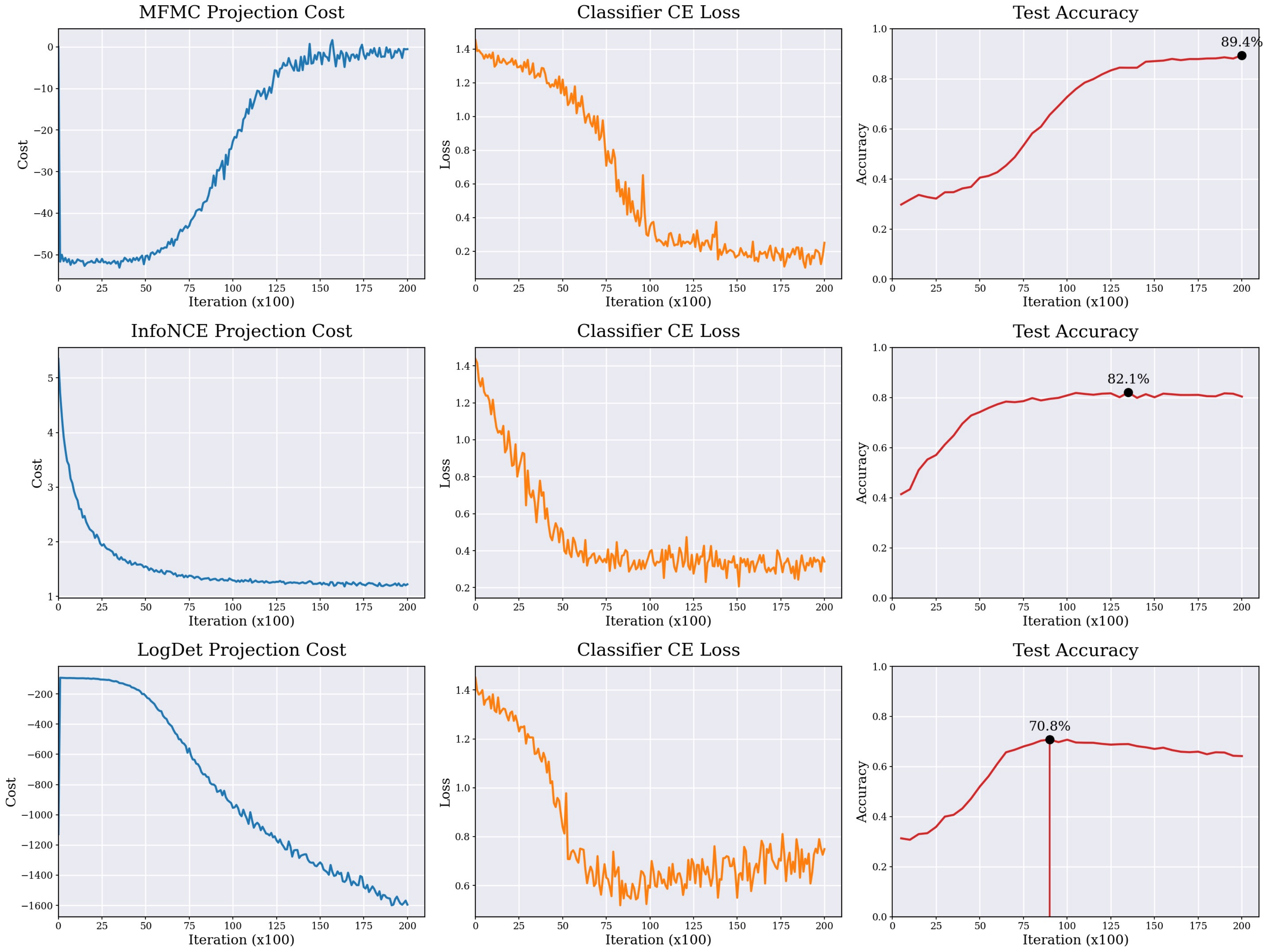}
    \caption{Representative CEAP-360VR subject-dependent stability comparison for MFMC, High-order InfoNCE, and FMCA-LogDet using the trimodal setting (EDA, BVP, and SKT). MFMC achieves the highest peak accuracy (89.4\%), while High-order InfoNCE and FMCA-LogDet peak at 82.1\% and 70.8\%, respectively. This figure is provided only as a qualitative single-fold learning-curve illustration; the official CEAP-360VR results are reported in Table III of the main text as mean $\pm$ standard deviation over five folds.}
    \label{fig:ceap_stability}
\end{figure}

\FloatBarrier

\subsection{Scalability to More Than Three Modalities on DEAP}
\label{sec:appendix_scalability}

To complement Theorem 2 in the main manuscript, we evaluate the practical extension of MFMC to larger modality sets on DEAP. The purpose of this supplementary experiment is to quantify how accuracy and memory change as the number of active physiological modalities increases, while keeping the core MFMC objective unchanged.

\subsubsection{Generalized objective and implementation}

Let \(e_i \in \mathbb{R}^{K}\) denote the embedding of modality \(X_i\). For each target modality \(X_i\), the remaining \(M-1\) modality embeddings are fused into a leave-one-out representation \(e_{[M]\setminus\{i\}} \in \mathbb{R}^{K}\). The full-sum \(M\)-modal objective is
\begin{equation}
\mathcal{L}^{(M)}_{\mathrm{full}} = \sum_{i=1}^{M} \mathrm{tr}\!\left(R_{[M]\setminus\{i\}}^{-1} P_{[M]\setminus\{i\},i} R_i^{-1} P_{[M]\setminus\{i\},i}^{\top}\right).
\end{equation}

For the 5-modality sampled variant, we sample a subset \(\mathcal{S} \subset \{1,\dots,5\}\) at each mini-batch and use the unbiased estimator
\begin{equation}
\mathcal{L}^{(5)}_{\mathrm{sample}} = \frac{5}{|\mathcal{S}|} \sum_{i \in \mathcal{S}} \mathrm{tr}\!\left(R_{[5]\setminus\{i\}}^{-1} P_{[5]\setminus\{i\},i} R_i^{-1} P_{[5]\setminus\{i\},i}^{\top}\right).
\end{equation}

In all settings, the modality encoders retain the same shared embedding dimension \(K = 128\). The leave-one-out fusion block follows the same MLP style as the trimodal implementation. Only the input dimension changes with the number of concatenated modality embeddings.

\subsubsection{Experimental protocol}

We consider four DEAP settings: (i) 3-modality full-sum MFMC using EEG, EOG, and SKT; (ii) 4-modality full-sum MFMC using EEG, EOG, SKT, and GSR; (iii) 5-modality full-sum MFMC using EEG, EOG, SKT, GSR, and respiration; and (iv) 5-modality sampled MFMC using the same five modalities. All settings use 5-fold cross-validation. We report mean best test accuracy \(\pm\) standard deviation across folds. We additionally record peak GPU memory, the number of leave-one-out terms evaluated per mini-batch, and the parameter count of the MFMC pretraining model. The sampled setting uses 2 leave-one-out terms per mini-batch.

\subsubsection{Complexity analysis}

Let \(B\) denote the mini-batch size and \(K\) the shared embedding dimension. For one leave-one-out term, forming the sample auto-covariance and cross-covariance matrices costs \(O(BK^2)\), and the matrix inverse or linear solve in the MFMC trace objective in the main manuscript costs \(O(K^3)\). Therefore, full-sum MFMC scales as \(O(M(BK^2 + K^3))\) with respect to the number of modalities \(M\). The dominant memory cost scales as \(O(MBK + MK^2)\), which accounts for modality embeddings and covariance matrices. For sampled MFMC, the expected per-mini-batch cost becomes \(O(|\mathcal{S}|(BK^2 + K^3))\), while the encoder-side memory footprint remains unchanged and the number of active leave-one-out terms is reduced.

\begin{figure}[t]
\centering
\includegraphics[width=0.95\columnwidth]{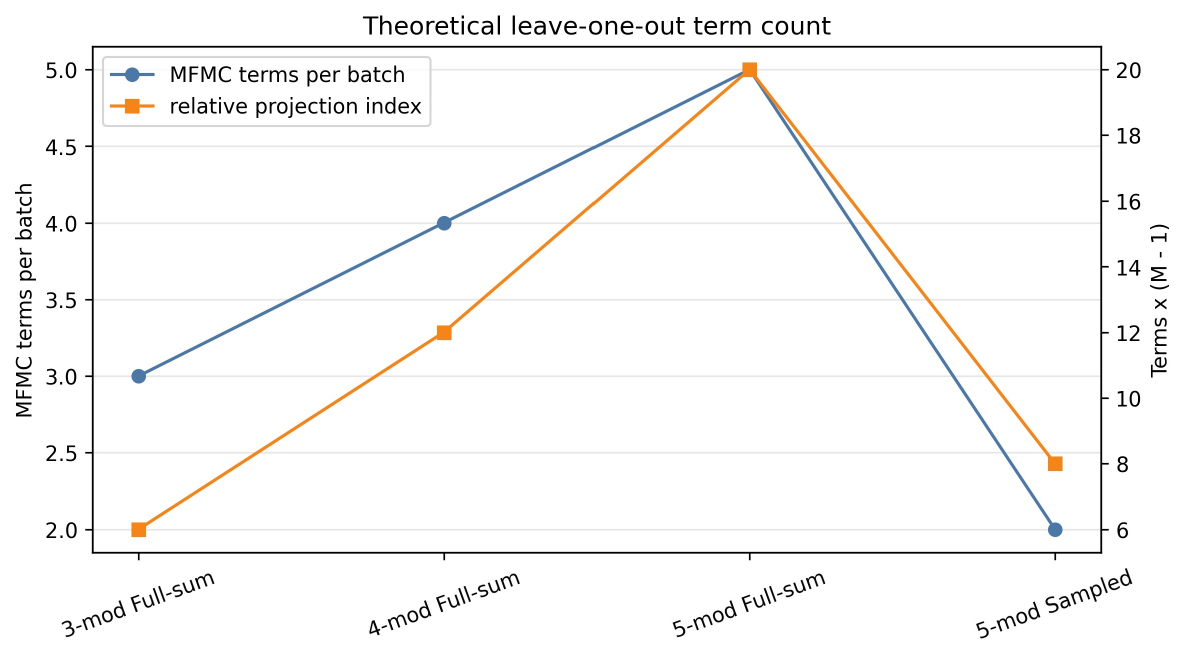}
\caption{Theoretical comparison of the number of leave-one-out terms and the relative projection-cost trend in the DEAP scalability study. Full-sum MFMC scales linearly with the number of modalities through \(M\) leave-one-out terms, whereas the sampled 5-modality variant evaluates only two terms per mini-batch.}
\label{fig:deap_scalability_theory}
\end{figure}

\subsubsection{Results}

\begin{table*}[t]
\caption{DEAP scalability comparison of MFMC across 3-, 4-, and 5-modality settings. Results are reported as mean best test accuracy \(\pm\) standard deviation across five folds. Peak GPU memory denotes the mean peak allocated GPU memory.}
\label{tab:deap_scalability}
\centering
\begin{tabular}{lccccc}
\toprule
Setting & Terms / batch & Best test acc. & Peak GPU mem. (MB) & Params (M) & Objective type \\
\midrule
3-mod Full-sum & 3 & 0.9803 \(\pm\) 0.0163 & 6321 & 27.53 & Full-sum \\
4-mod Full-sum & 4 & 0.9961 \(\pm\) 0.0016 & 6557 & 31.40 & Full-sum \\
5-mod Full-sum & 5 & 0.9925 \(\pm\) 0.0056 & 6796 & 35.41 & Full-sum \\
5-mod Sampled (\(|S|=2\)) & 2 & 0.9953 \(\pm\) 0.0038 & 6793 & 35.41 & Sampled \\
\bottomrule
\end{tabular}
\end{table*}

\begin{figure}[t]
\centering
\includegraphics[width=0.95\columnwidth]{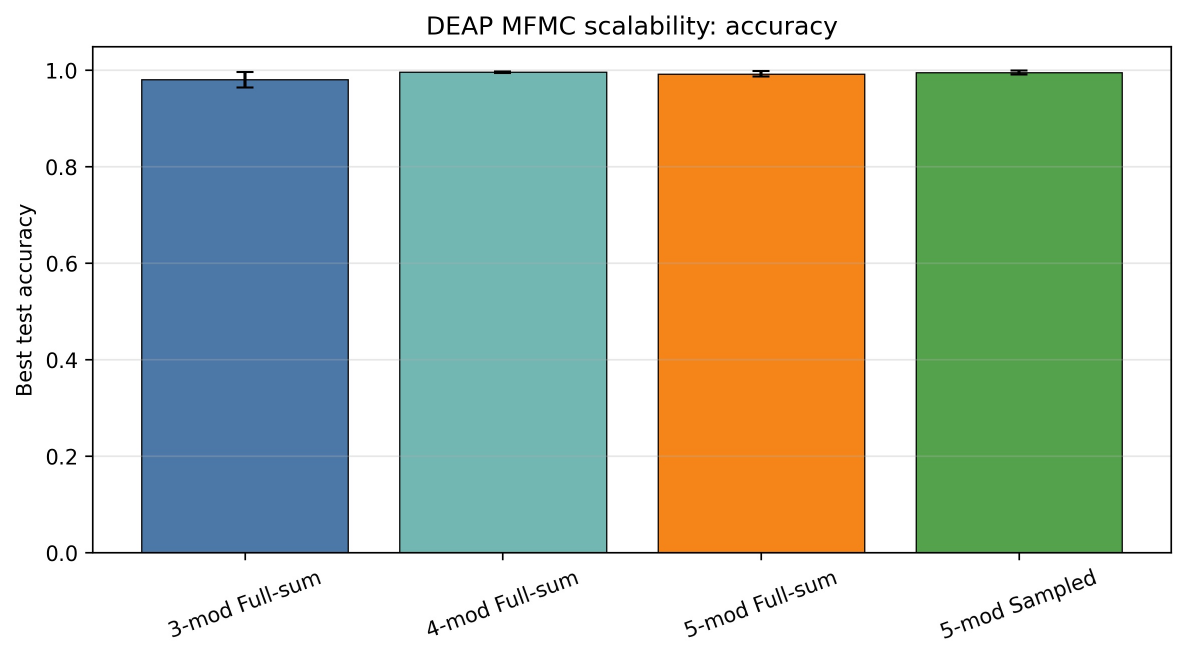}
\caption{Mean best test accuracy across five folds for the 3-, 4-, and 5-modality DEAP scalability settings. MFMC maintains strong performance across all settings, and the 5-modality sampled variant remains comparable to the corresponding full-sum setting.}
\label{fig:deap_scalability_acc}
\end{figure}

\begin{figure}[t]
\centering
\includegraphics[width=0.95\columnwidth]{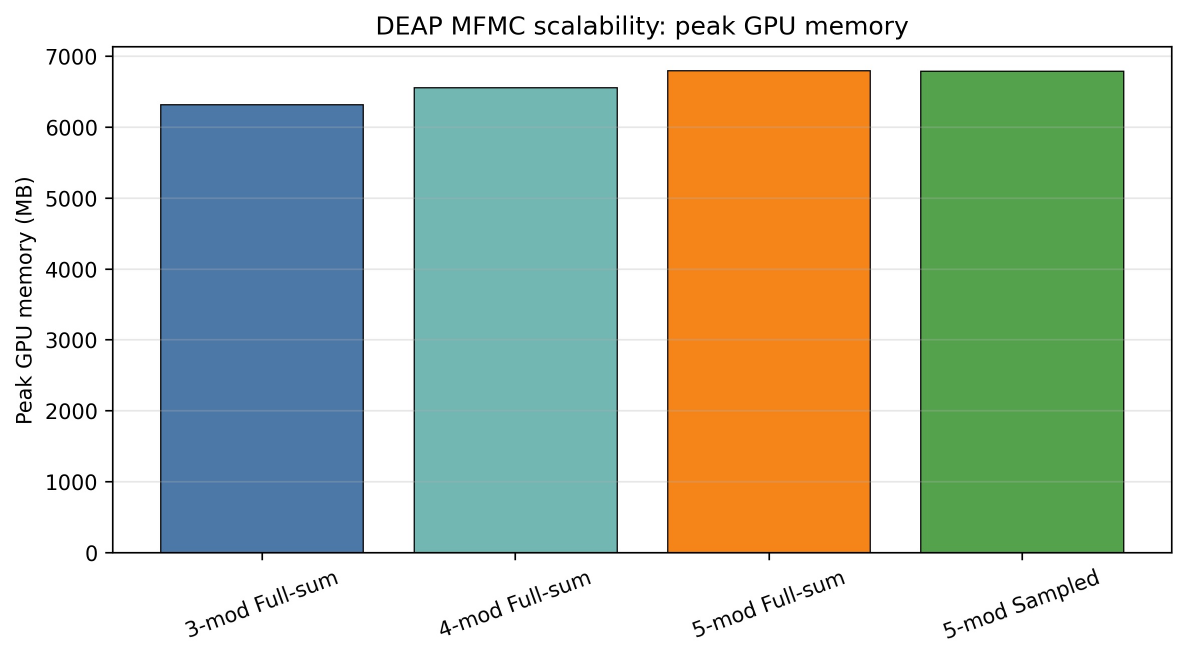}
\caption{Mean peak allocated GPU memory for the 3-, 4-, and 5-modality DEAP scalability settings. Peak memory increases gradually from the 3-modality to the 5-modality full-sum settings, while the 5-modality sampled variant shows a similar memory footprint to the 5-modality full-sum variant.}
\label{fig:deap_scalability_mem}
\end{figure}

Across the 3-, 4-, and 5-modality settings, MFMC maintains strong performance on DEAP, with mean best test accuracy ranging from 0.9803 \(\pm\) 0.0163 to 0.9961 \(\pm\) 0.0016 and 0.9925 \(\pm\) 0.0056. The measured peak GPU memory increases gradually from 6321 MB in the 3-modality full-sum setting to 6557 MB and 6796 MB in the 4- and 5-modality full-sum settings, respectively. The 5-modality sampled variant evaluates only two leave-one-out terms per mini-batch and achieves comparable accuracy at 0.9953 \(\pm\) 0.0038, while using a similar peak GPU memory footprint of 6793 MB. These results support the practical extensibility of MFMC beyond three modalities and show that the generalized leave-one-out formulation remains effective up to five modalities on DEAP.

\subsection{Ridge Coefficient Sensitivity in Trace-Based FMCA}
\label{app:epsilon_sensitivity}

To clarify the numerical-stability handling of the trace-based FMCA objective in Eq. (22), we conducted a single-split diagnostic experiment on DEAP. This experiment is intended to examine the effect of the ridge coefficient \(\varepsilon\) on matrix conditioning and downstream performance, rather than to replace the official five-fold cross-validation results. We used one fixed subject-dependent 80/20 split, the EEG/EOG/SKT trimodal setting, batch size 200, and a training budget of 10,000 iterations. All settings were kept fixed except for \(\varepsilon\), which was varied over
\[
\{0,10^{-6},10^{-5},10^{-4},10^{-3},10^{-2}\}.
\]
For each value, we evaluated the downstream EEG encoder using accuracy and macro-F1, and recorded the condition numbers of the ridge-regularized autocorrelation matrices.

\begin{table}[t]
\centering
\caption{Sensitivity of trace-based FMCA to the ridge coefficient \(\varepsilon\) on a fixed DEAP subject-dependent split. This experiment uses EEG/EOG/SKT, batch size 200, and 10,000 training iterations. It is intended as a numerical-stability diagnostic rather than a replacement for the official five-fold results.}
\label{tab:epsilon_sensitivity}
\begin{tabular}{c c c c c}
\hline
\(\varepsilon\) & Best Acc. & Best Macro-F1 & Mean Cond. & Max Cond. \\
\hline
0 & 0.9861 & 0.9862 & \(1.03{\times}10^6\) & \(9.17{\times}10^6\) \\
\(10^{-6}\) & 0.9819 & 0.9818 & \(8.62{\times}10^5\) & \(6.57{\times}10^6\) \\
\(10^{-5}\) & 0.9831 & 0.9834 & \(7.19{\times}10^5\) & \(5.08{\times}10^6\) \\
\(10^{-4}\) & 0.9814 & 0.9814 & \(2.56{\times}10^5\) & \(9.50{\times}10^5\) \\
\(10^{-3}\) & 0.9826 & 0.9826 & \(4.22{\times}10^4\) & \(9.57{\times}10^4\) \\
\(10^{-2}\) & 0.9896 & 0.9898 & \(4.38{\times}10^3\) & \(9.19{\times}10^3\) \\
\hline
\end{tabular}
\end{table}

All tested settings completed without NaN, Inf, or divergence. The best accuracy remains within \(0.9814\)--\(0.9896\), and the best macro-F1 remains within \(0.9814\)--\(0.9898\), indicating that \(\varepsilon\) is not a performance-critical hyperparameter in this diagnostic setting. In contrast, the conditioning of the autocorrelation matrices improves substantially as \(\varepsilon\) increases. Compared with no ridge regularization, \(\varepsilon=10^{-6}\) reduces the maximum condition number from \(9.17{\times}10^6\) to \(6.57{\times}10^6\), and larger values further reduce the condition number. We therefore use \(\varepsilon=10^{-6}\) in the main experiments as a small numerical-stabilization coefficient that minimally perturbs the trace objective, while the sensitivity analysis confirms that the downstream performance is robust over a broad tested range.

\section{GitHub Demo Details for Reproducibility}

For reproducibility, we provide the demonstration code via the GitHub link \\
\url{https://github.com/DY9910/MFMC}.

\subsection{Purpose \& Quick-Start}
This appendix provides three executable notebooks: \texttt{MFMC\_DEAP\_main1\_subject\_dep.ipynb} for the subject-dependent DEAP experiment, \texttt{MFMC\_DEAP\_main2\_subject\_indep.ipynb} for the subject-independent DEAP experiment, and \texttt{DEAP\_MFMC\_scalability\_experiment.ipynb} for the supplementary 3-, 4-, and 5-modality scalability study reported in Appendix~\ref{sec:appendix_scalability}.

We additionally provide \texttt{Supplement/DEAP\_signal\_selection/signal\_selection.ipynb}, a DEAP-only notebook that implements the learnable modality-attention selection protocol. It loads all DEAP modality groups, trains the softmax-normalized modality gates, and displays the learned modality ranking and top 3 decision inside the notebook. The same gate definition, optimization objective, and top 3 rule are used for CEAP-360VR and MAHNOB-HCI, with dataset specific candidate modality groups.

\subsubsection{Why these notebooks?}
We provide \emph{one} notebook per experiment %
\texttt{MFMC\_DEAP\_main1\_subject\_dep.ipynb} for the \textbf{subject-dependent} split and
\texttt{MFMC\_DEAP\_main2\_subject\_indep.ipynb} for the \textbf{subject-independent} split.
Each notebook contains the full \emph{five-fold} cross-validation loop: the code trains
folds 1-5 sequentially, writes every fold's metrics to
\texttt{results/fold\_k/}, then the last code cell loads the temporary files and creates visualisations.
Reviewers can run each notebook top-to-bottom to reproduce the complete experiment in one session.
The scalability notebook runs the supplementary DEAP experiment for 3-, 4-, and 5-modality MFMC settings, exports per-fold and aggregate CSV summaries, and saves comparison figures to the \texttt{scalability\_experiment/} folder.

2) Prerequisite and preprocessing: The notebooks expect NumPy tensors produced from the official \texttt{data\_preprocessed\_python.zip} release. Run the following once:\\
\textbullet\ Unzip raw data: Ensure that \texttt{s01.dat}\,\textendash\,\texttt{s32.dat} are accessible through \texttt{DEAP\_RAW\_DATA\_DIR} (default: \texttt{<BASE\_PATH>/DEAP\_data/}).\\
\textbullet\ Activate the environment: (see next subsection) and run \texttt{python DEAP\_Preprocess.py}. The script\\
(i) skips the first 3 s of each trial, extracts 10 s (1,280-sample) windows with a 0.4 s stride,\\
(ii) computes fixed normalization constants for EEG, EOG, and temperature from the first available DEAP window and rescales these three legacy modalities with those constants,\\
(iii) discards a window if any normalized EEG, EOG, or temperature sample exceeds $\pm 5$,\\
(iv) saves \texttt{eeg\_data.npy}, \texttt{eog\_data.npy}, \texttt{temp\_data.npy}, \texttt{emotion\_labels.npy}, \texttt{subject.npy}, \texttt{valence.npy}, and \texttt{arousal.npy} to the processed-data folder, creates \texttt{skt\_data.npy} as an alias of \texttt{temp\_data.npy}, and can additionally generate \texttt{gsr\_data.npy} and \texttt{resp\_data.npy} for the DEAP scalability notebook; the added GSR and respiration modalities are normalized per window with z-score normalization and clipped to $\pm 8$.

\subsubsection{Software setup}
Create a Conda environment with Python 3.9 (any \(\ge\)3.8 works) and install:

\begin{itemize}[nosep,leftmargin=*]
  \item \textbf{PyTorch stack} (CUDA 11.3): \texttt{torch 1.12.1},
        \texttt{torchvision 0.13.1}.
  \item \textbf{Core scientific libraries}: \texttt{numpy 1.21.6},
        \texttt{scipy 1.9.3}, \texttt{scikit-learn 1.1.3},
        \texttt{matplotlib 3.5.3}, \texttt{joblib 1.2.0}.
\end{itemize}

\subsection{Data and Model Pipeline}

\subsubsection{Data flow}
The two main DEAP notebooks use five NumPy tensors generated in preprocessing: \texttt{eeg\_data.npy} \((N, 32, 1280)\), \texttt{eog\_data.npy} \((N, 2, 1280)\), \texttt{temp\_data.npy} \((N, 1, 1280)\), \texttt{emotion\_labels.npy} \((N)\), and \texttt{subject.npy} \((N)\), all stored in \texttt{<BASE\_PATH>/Data\_processed/}. The scalability notebook uses the same retained windows and may additionally load \texttt{gsr\_data.npy} and \texttt{resp\_data.npy} for the 4- and 5-modality settings. Here \(N = 20{,}097\), reflecting 10 s windows extracted after skipping the initial 3 s and advancing with a 0.4 s stride. A pointer table maps each window to its subject and trial, enabling a subject-balanced split generator that assembles five folds (approximately 4 subjects per fold). The \texttt{DataLoader} reads these preprocessed arrays directly from disk. EEG, EOG, and temperature use fixed legacy normalization constants computed once from the first available DEAP window; GSR and respiration, when used in the scalability notebook, are per-window z-score normalized and clipped to $\pm 8$ during preprocessing. No further normalization or augmentation is applied in the \texttt{DataLoader}. Mini-batches therefore contain three tensors (EEG, EOG, temperature) plus one label vector in the main notebooks, with GSR and respiration optionally added in the scalability setting.

\subsubsection{Model architecture}
Each modality encoder has two stages: a temporal network that processes each sensor channel separately, followed by a channel network that fuses the channels into a 128-D vector.

\begin{itemize}
  \item \textbf{Temporal network}  
        Shared four-block Conv \(\rightarrow\) Batch Norm \(\rightarrow\) ReLU \(\rightarrow\) Max Pool backbone with filter widths 1 → 32 → 64 → 128 → 256, kernel size 11, pooling factor 4 at each block. Three fully connected layers reduce the output to 128 units. The same weights are reused for EEG, EOG and temperature channels because batch and channel dimensions are flattened before the first convolution.
  \item \textbf{Channel network}  
        The 128-D vectors are reshaped to \((\text{batch},\text{channels},128)\), flattened across channels, and passed through a one-hidden-layer MLP \((128\times C)\rightarrow4000\rightarrow128\) where \(C\) is 32, 2, or 1 for EEG, EOG, and temperature.
  \item \textbf{MFMC projection heads}  
        Three identical 2-layer MLPs \(128\rightarrow512\rightarrow128\) with ReLU and Batch Norm project pairs of modality embeddings into the common space used by the trace-correlation loss.
  \item \textbf{Classifier}  
        During supervised training the EEG encoder output feeds a four-layer MLP \(128\rightarrow256\rightarrow512\rightarrow256\rightarrow4\) with Batch Norm and ReLU after the first three layers; the final linear layer outputs logits for the four valence-arousal quadrants.
\end{itemize}

The model produces one 128-D feature per 10 s window for both MFMC and emotion classification. The total parameter count depends on the number of active modalities and projection heads; the exact counts used in the DEAP scalability study are reported in Table~\ref{tab:deap_scalability}.

\subsection{Training Protocol and Results}

\subsubsection{Loop schedule}
Each fold is trained for \textbf{20 000 iterations}.  
Every iteration runs two optimisers back-to-back:

% \begin{enumerate}[nosep,leftmargin=*]
%   \item \textbf{Unsupervised step}  
%         Draw a mini-batch (\(B=256\)) of EEG, EOG, temperature windows, compute the MFMC loss, and update \texttt{all\_feature\_params} with Adam (lr \(=3\times10^{-4}\), \(\beta_{1}=0.5\), \(\beta_{2}=0.9\)).
%   \item \textbf{Supervised step}  
%         Draw a fresh EEG batch, obtain 128-D features from the frozen EEG encoder, and update the classifier with cross-entropy using the same Adam settings.
% \end{enumerate}

\begin{itemize}
  \item \textbf{Unsupervised step:} Draw a mini-batch (\(B=256\)) of EEG, EOG, and temperature windows,
        compute the MFMC loss, and update \texttt{all\_feature\_params} with Adam
        (lr \(=3\times10^{-4}\), \(\beta_1=0.5\), \(\beta_2=0.9\)).
  \item \textbf{Supervised step:} Draw a fresh EEG batch, obtain 128-D features from the frozen EEG encoder,
        and update the classifier with cross-entropy using the same Adam settings.
\end{itemize}

No scheduler, early stopping, or augmentation is used; training always reaches iteration 20 000.

\subsubsection{Cross-validation outputs}
Five subject-balanced folds are trained sequentially.  
Each fold produces a JSON log and a loss-accuracy curve figure in \texttt{results/fold\_k/}.  
The final notebook cell merges these files into \texttt{cv\_summary.csv}, a combined learning-curve plot, and a confusion-matrix image.

\subsubsection{Accuracy}
Subject-dependent folds achieve
\(\mathbf{98.3\,\% \pm 1.4}\) best accuracy and
\(96.4\,\% \pm 1.8\) final accuracy.
Subject-independent folds reach
\(\mathbf{34.6\,\% \pm 3.7}\) best accuracy and
\(26.9\,\% \pm 3.2\) final accuracy.
Most remaining errors involve high- versus low-arousal quadrants; temperature contributes the least discriminative power.

\subsubsection{Runtime}
One fold trains in roughly one hour on a single NVIDIA L40s GPU (48 GB) with 16 logical CPU cores, so the full five-fold run finishes in about five hours wall-clock.

\section{LEARNED HIGHER ORDER DEPENDENCE AND CROSS SUBJECT DEGRADATION ANALYSIS}

To better understand what higher-order dependencies are learned by MFMC, we conduct a diagnostic dependence analysis on DEAP and CEAP-360VR using representative 80/20 train-test splits following the subject-dependent and subject-independent protocols. This analysis is intended to interpret the learned MFMC representations, rather than to provide an additional performance benchmark.

We first compute the normalized contribution of the three cyclic pair-to-third trace terms in Eq. (23). Let \(\ell_{12\rightarrow3}\), \(\ell_{13\rightarrow2}\), and \(\ell_{23\rightarrow1}\) denote the trace-based dependence scores corresponding to \((X_1,X_2)\rightarrow X_3\), \((X_1,X_3)\rightarrow X_2\), and \((X_2,X_3)\rightarrow X_1\), respectively. The normalized contribution is defined as

\[
c_{ij\rightarrow k}
=
\frac{\ell_{ij\rightarrow k}}
{\ell_{12\rightarrow3}+\ell_{13\rightarrow2}+\ell_{23\rightarrow1}}.
\]

This quantity indicates whether the cyclic MFMC objective is dominated by a single pair-to-third direction or distributed across multiple cyclic interactions.

Because a balanced cyclic contribution profile does not by itself indicate whether a fused pair representation captures additional joint dependence beyond individual modalities, we further compute a pair-fusion gain. For each direction \((X_i,X_j)\rightarrow X_k\), we compare the fused pair-to-third dependence score with the stronger of the two corresponding single-modality-to-third dependence scores:

\[
g_{ij\rightarrow k}
=
\frac{
\ell_{ij\rightarrow k}
-
\max(\ell_{i\rightarrow k},\ell_{j\rightarrow k})
}{
\max(\ell_{i\rightarrow k},\ell_{j\rightarrow k})+\delta
},
\]

where \(\delta=10^{-12}\) is used only for numerical stability. This gain is an empirical diagnostic of joint-over-single dependence and should not be interpreted as a strict information-theoretic synergy measure.

\begin{table*}[t]
\centering
\caption{Dependence contribution and pair-fusion gain analysis on DEAP and CEAP-360VR using representative 80/20 splits. The contribution columns report normalized cyclic trace contributions. The gain columns report pair-fusion gain over the stronger corresponding single-modality dependence baseline.}
\label{tab:dependence_contribution_gain}
\begin{tabular}{llcccccc}
\hline
Dataset & Protocol &
\(c_{12\rightarrow3}\) &
\(c_{13\rightarrow2}\) &
\(c_{23\rightarrow1}\) &
\(g_{12\rightarrow3}\) &
\(g_{13\rightarrow2}\) &
\(g_{23\rightarrow1}\) \\
\hline
DEAP & Subject-dependent & 0.323 & 0.338 & 0.339 & 173.5\% & 11.5\% & 11.5\% \\
DEAP & Subject-independent & 0.239 & 0.385 & 0.376 & -3.3\% & 0.9\% & -1.3\% \\
CEAP-360VR & Subject-dependent & 0.326 & 0.343 & 0.331 & 105.6\% & 116.4\% & 126.7\% \\
CEAP-360VR & Subject-independent & 0.324 & 0.360 & 0.316 & -0.3\% & 7.0\% & -6.0\% \\
\hline
\end{tabular}
\end{table*}

For DEAP, \(X_1\), \(X_2\), and \(X_3\) correspond to EEG, EOG, and skin temperature (SKT), respectively. For CEAP-360VR, they correspond to EDA, BVP, and SKT, respectively.

Table~\ref{tab:dependence_contribution_gain} provides a quantitative explanation for the performance degradation observed under subject independent evaluation. In the subject dependent setting, all pair fusion gains are positive. On DEAP, the gains are 173.5\%, 11.5\%, and 11.5\%, while on CEAP-360VR all three gains exceed 100\%. This indicates that the fused pair representations capture joint dependence beyond the stronger corresponding single modality. In contrast, under subject independent evaluation, the gains become much smaller or partially negative. On DEAP, they change to negative 3.3\%, 0.9\%, and negative 1.3\%; on CEAP-360VR, they become negative 0.3\%, 7.0\%, and negative 6.0\%. These results suggest that cross subject generalization weakens the transferable joint over single dependence, which explains why subject independent accuracy drops even though MFMC remains competitive with or better than the baselines.

The cyclic contribution results show that MFMC does not concentrate its DTC-grounded objective on a single pair-to-third interaction. In the subject-dependent setting, both DEAP and CEAP-360VR exhibit relatively balanced cyclic contribution profiles. More importantly, all three pair-fusion gains are positive. On DEAP, the largest gain is observed for \((\mathrm{EEG},\mathrm{EOG})\rightarrow \mathrm{SKT}\), indicating that the joint central-ocular representation has substantially stronger dependence with thermal dynamics than either EEG or EOG alone. On CEAP-360VR, all three peripheral pair-fusion gains exceed 100\%, suggesting strong joint-over-single dependence among EDA, BVP, and SKT.

In the subject-independent setting, the learned dependence becomes more direction-specific and partially redundant. On DEAP, the two terms involving SKT as part of the fused pair, \((\mathrm{EEG},\mathrm{SKT})\rightarrow \mathrm{EOG}\) and \((\mathrm{EOG},\mathrm{SKT})\rightarrow \mathrm{EEG}\), contribute more than \((\mathrm{EEG},\mathrm{EOG})\rightarrow \mathrm{SKT}\), but the pair-fusion gains are modest or slightly negative. On CEAP-360VR, \((\mathrm{EDA},\mathrm{SKT})\rightarrow \mathrm{BVP}\) is the only direction with a clearly positive pair-fusion gain. These results suggest that MFMC learns distributed higher-order dependencies in subject-dependent settings, while cross-subject generalization emphasizes more direction-specific and partially redundant physiological coupling.

\begin{figure*}[t]
\centering
\begin{minipage}[t]{0.48\linewidth}
\centering
\includegraphics[width=\linewidth]{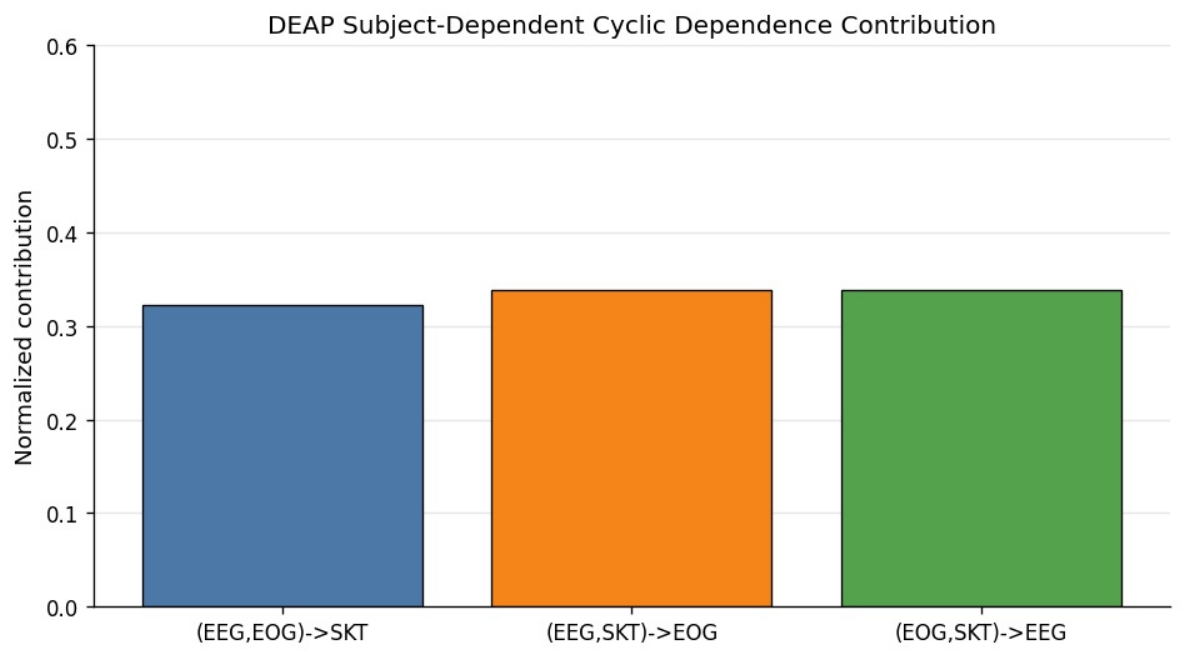}
\\[2pt]
(a) DEAP subject-dependent
\end{minipage}
\hfill
\begin{minipage}[t]{0.48\linewidth}
\centering
\includegraphics[width=\linewidth]{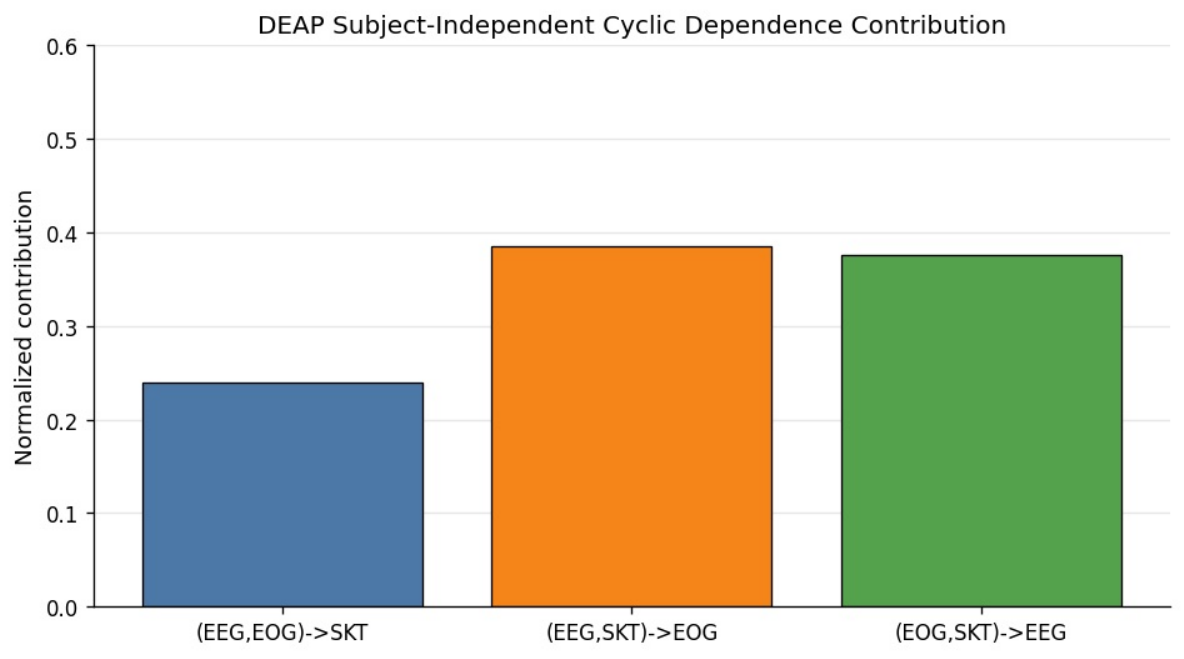}
\\[2pt]
(b) DEAP subject-independent
\end{minipage}

\vspace{0.75em}

\begin{minipage}[t]{0.48\linewidth}
\centering
\includegraphics[width=\linewidth]{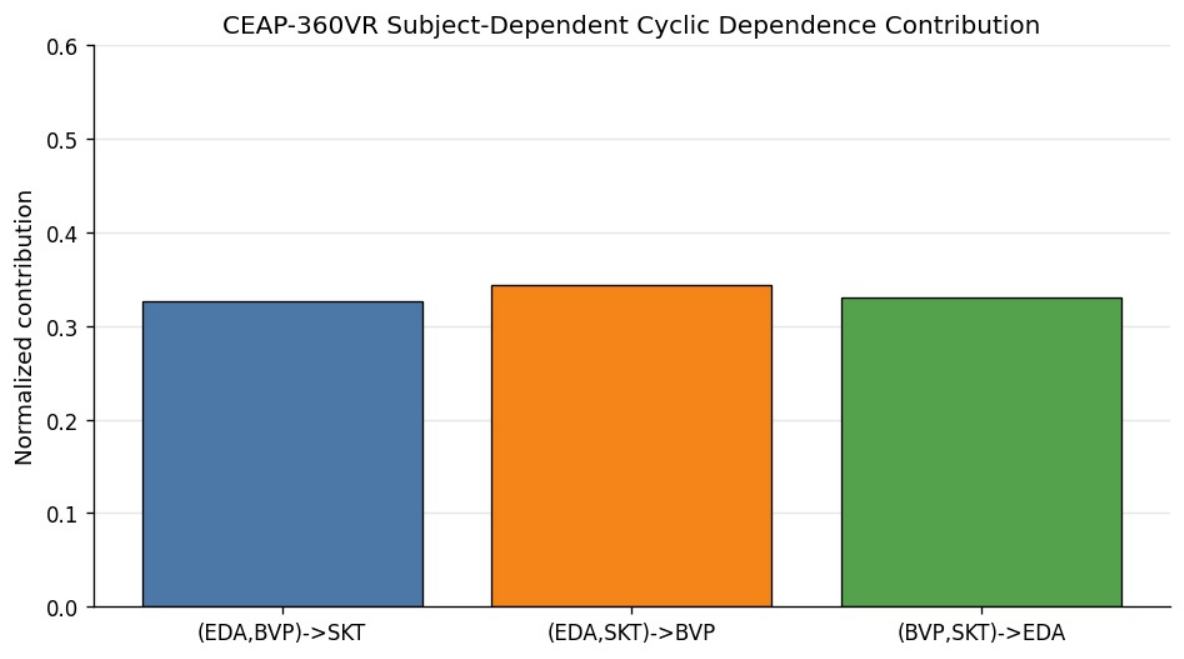}
\\[2pt]
(c) CEAP-360VR subject-dependent
\end{minipage}
\hfill
\begin{minipage}[t]{0.48\linewidth}
\centering
\includegraphics[width=\linewidth]{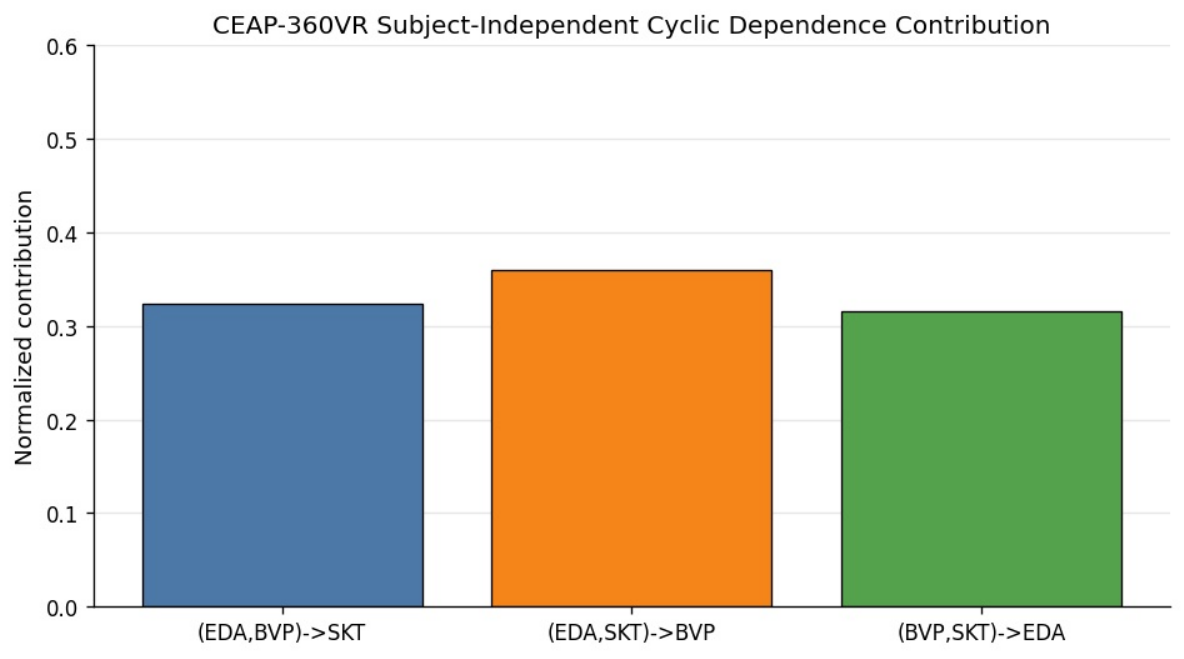}
\\[2pt]
(d) CEAP-360VR subject-independent
\end{minipage}
\caption{Normalized cyclic dependence contributions of MFMC on DEAP and CEAP-360VR. The three bars in each panel correspond to the three pair-to-third terms in the cyclic DTC-grounded objective.}
\label{fig:higher_order_contribution_bars}
\end{figure*}

\begin{figure*}[t]
\centering
\begin{minipage}[t]{0.48\linewidth}
\centering
\includegraphics[width=\linewidth]{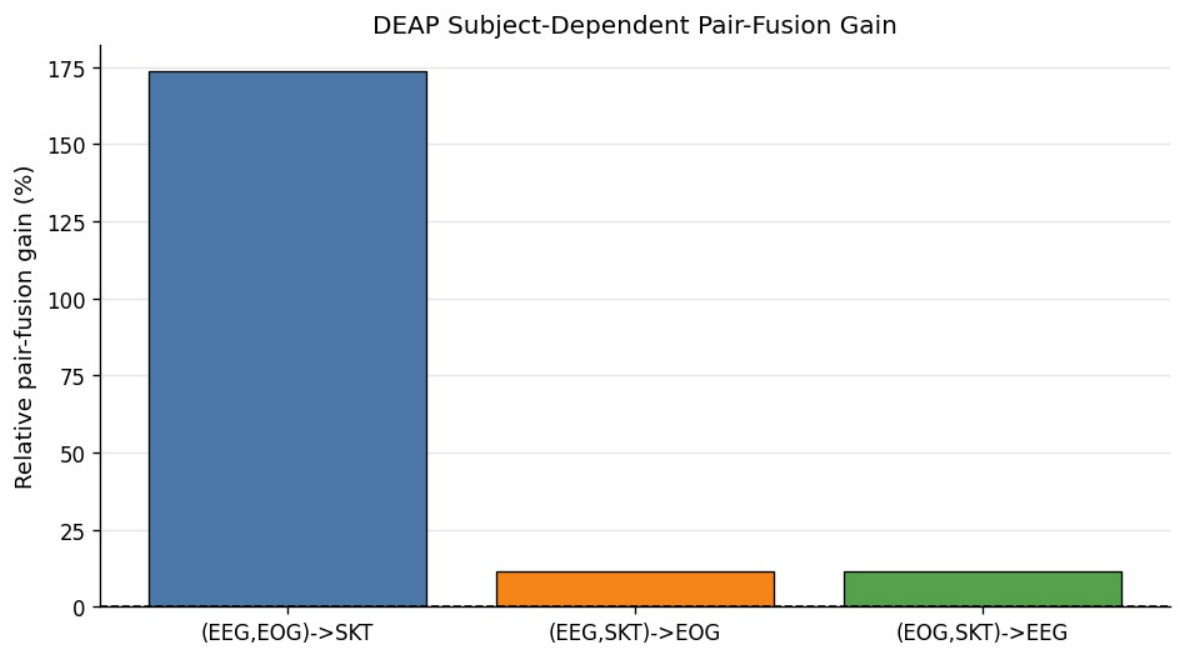}
\\[2pt]
(a) DEAP subject-dependent
\end{minipage}
\hfill
\begin{minipage}[t]{0.48\linewidth}
\centering
\includegraphics[width=\linewidth]{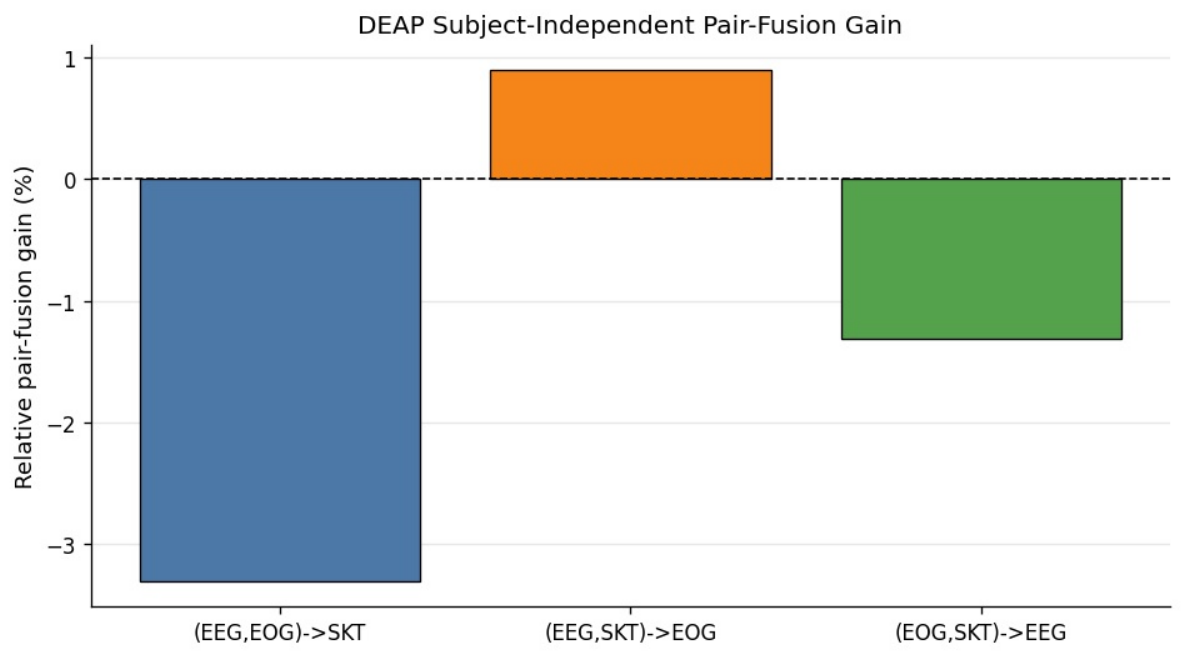}
\\[2pt]
(b) DEAP subject-independent
\end{minipage}

\vspace{0.75em}

\begin{minipage}[t]{0.48\linewidth}
\centering
\includegraphics[width=\linewidth]{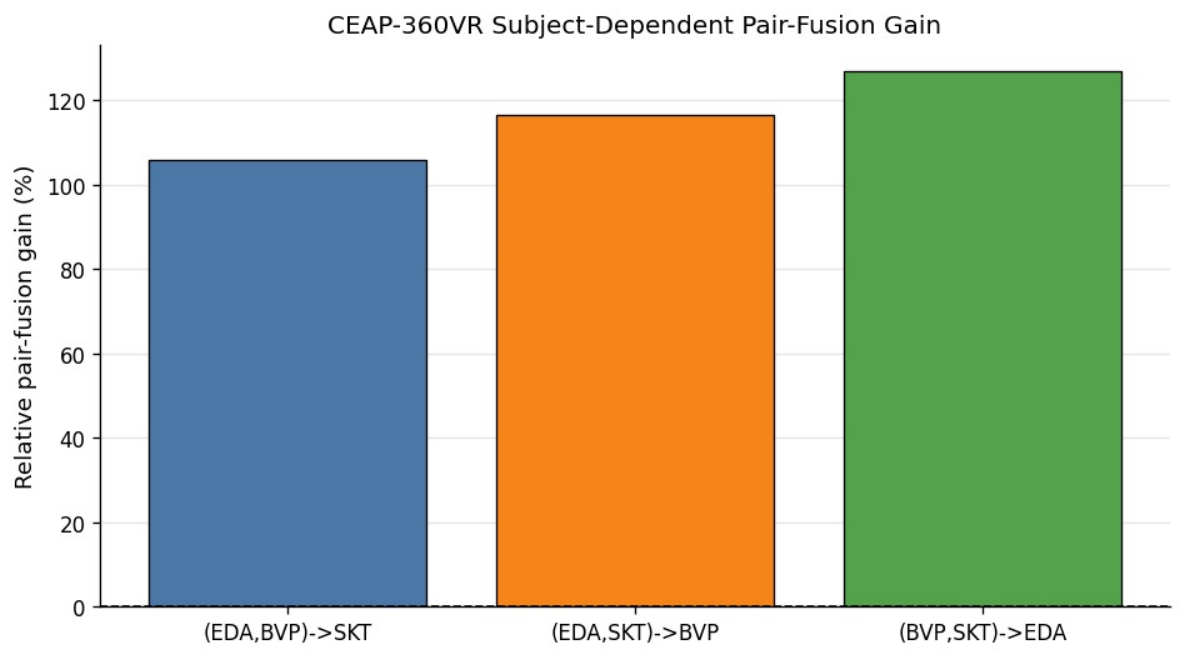}
\\[2pt]
(c) CEAP-360VR subject-dependent
\end{minipage}
\hfill
\begin{minipage}[t]{0.48\linewidth}
\centering
\includegraphics[width=\linewidth]{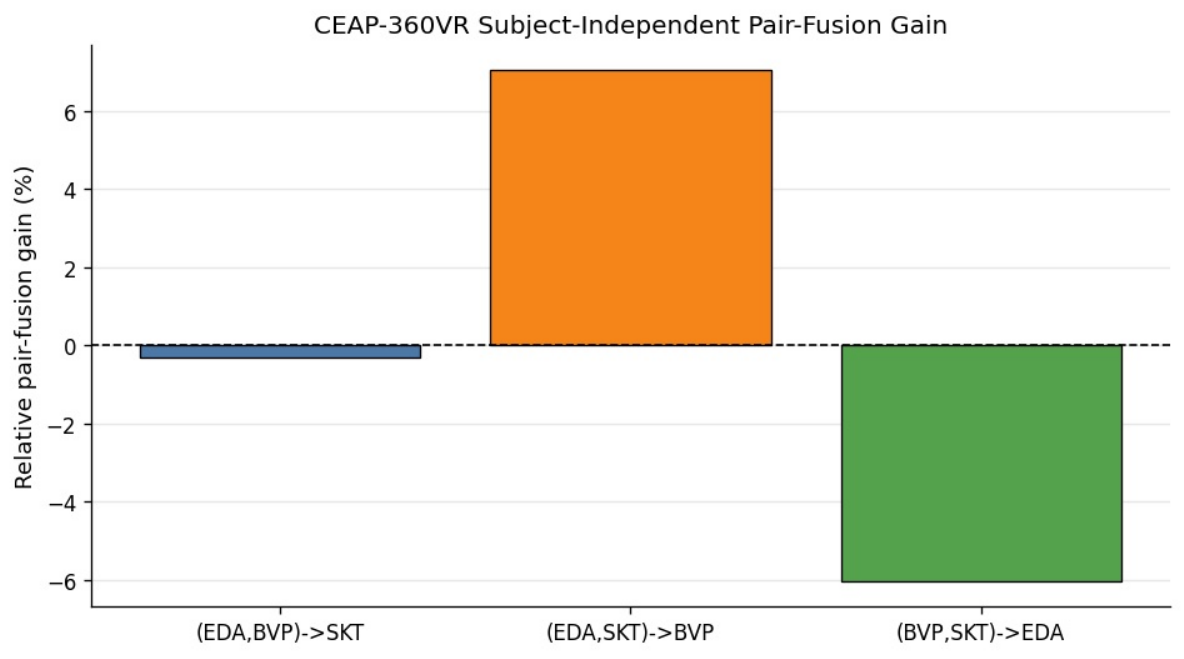}
\\[2pt]
(d) CEAP-360VR subject-independent
\end{minipage}
\caption{Pair-fusion gain over the strongest single-modality dependence baseline. Positive values indicate that the fused pair representation has stronger dependence with the third modality than either individual modality alone.}
\label{fig:pair_fusion_gain_bars}
\end{figure*}

\section{CONTROLLED DTC ABLATION ON DEAP}

To directly examine the contribution of the tri modal DTC grounded objective, we conduct a controlled diagnostic ablation on DEAP using one fixed subject dependent 80\% training and 20\% testing split. This experiment is designed to isolate the role of the cyclic pair to third dependence structure rather than to replace the official five fold evaluation.

We compare three objectives. MFMC DTC is the original synchronized cyclic pair to third objective. Pairwise FMCA uses the same three modalities and the same trace based FMCA estimator, but maximizes only the sum of pairwise dependence terms. Shuffled DTC keeps the original pair to third architecture and fusion heads, but shifts the target embeddings within each mini batch, thereby destroying sample level synchronized tri modal correspondence.

As shown in Table~\ref{tab:controlled_dtc_ablation} and Fig.~\ref{fig:deap_dtc_ablation}, MFMC DTC achieves the highest best accuracy of 0.9870, outperforming Pairwise FMCA by 1.06 percentage points. This indicates that the cyclic pair to third DTC structure provides additional benefit beyond pairwise dependence, even when the estimator and input modalities are kept the same. More importantly, Shuffled DTC drops sharply to 0.3545, although it keeps the same pair to third architecture. This confirms that the gain is not caused merely by additional fusion parameters or by using three modalities, but relies on synchronized tri modal dependence.

\begin{table}[t]
\centering
\caption{Controlled DTC ablation on DEAP using one fixed subject dependent 80\% training and 20\% testing split.}
\label{tab:controlled_dtc_ablation}
\begin{tabular}{p{0.19\linewidth}p{0.55\linewidth}c}
\toprule
Objective & Description & Best accuracy \\
\midrule
MFMC DTC & Synchronized cyclic pair to third DTC grounded objective & 0.9870 \\
Pairwise FMCA & Same three modalities and same FMCA estimator, pairwise dependence only & 0.9764 \\
Shuffled DTC & Same pair to third architecture, target shifted within mini batch & 0.3545 \\
\bottomrule
\end{tabular}

\vspace{2pt}
This experiment is a controlled diagnostic ablation and is not intended to replace the official five fold cross validation results in the main manuscript.
\end{table}

\begin{figure}[t]
    \centering
    \includegraphics[width=\linewidth]{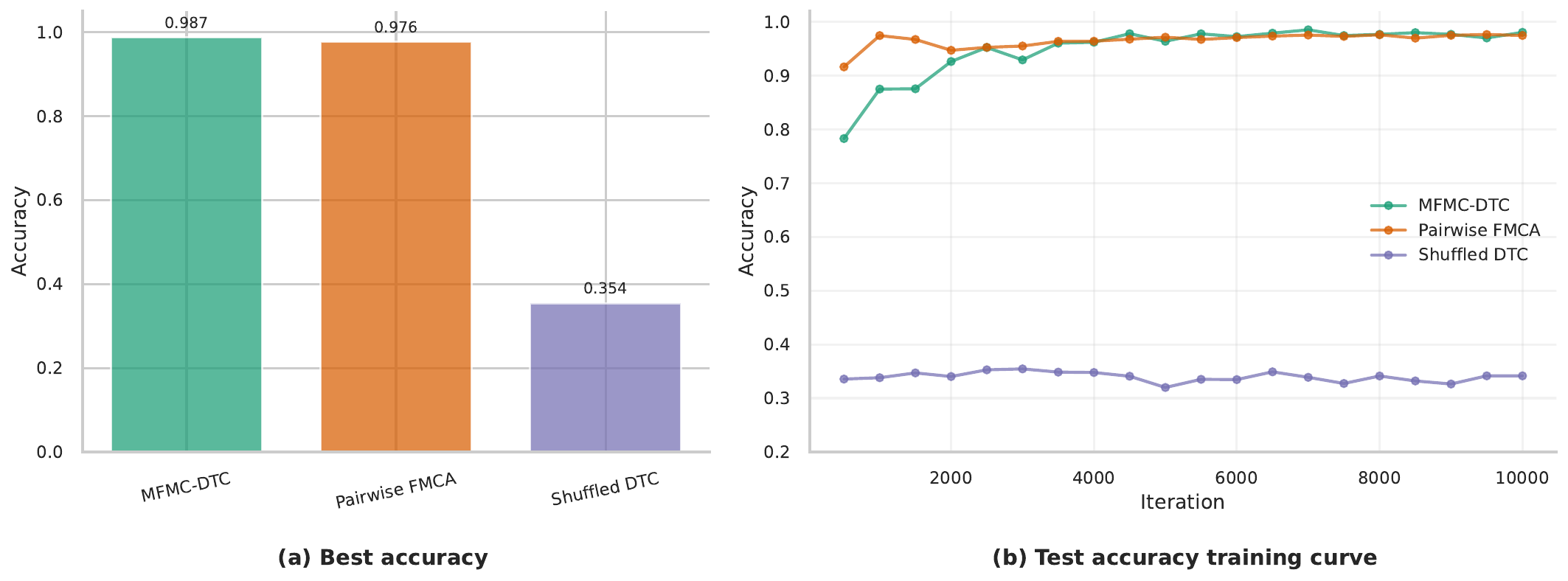}
    \caption{Controlled DTC ablation on DEAP using one fixed subject dependent 80\% training and 20\% testing split. (a) Best test accuracy. (b) Test accuracy curves during training. Pairwise FMCA uses the same three modalities and the same trace based FMCA estimator as MFMC, but removes the cyclic pair to third DTC structure and maximizes only pairwise dependence. Shuffled DTC keeps the original pair to third architecture and fusion heads, but shifts the target embeddings within each mini batch to destroy synchronized tri modal correspondence. This experiment is intended as a diagnostic ablation and does not replace the official five fold cross validation results.}
    \label{fig:deap_dtc_ablation}
\end{figure}

\clearpage

\bibliographystyle{IEEEtran}
\bibliography{IEEEabrv,main}

\end{document}